\pgfplotsset{compat=newest}
\renewcommand{\cite}[1]{\PackageError{thesis}{use either parencite or authorcite}{}}
\newcommand{\parencite}[1]{\citep{#1}}
\newcommand{\textcite}[1]{\citet{#1}}
\newcommand{\ifempty}[3]{\ifthenelse{\equal{#1}{}}{#2}{#3}}
\definecolor{darkblue} {rgb} {0.0 , 0.0 , 0.65}
\definecolor{darkred}  {rgb} {0.80, 0.0 , 0.0 }
\definecolor{darkgreen}{rgb} {0.0 , 0.50, 0.0 }
\definecolor{gray75}   {gray}{0.75}
\definecolor{mplorange}{rgb}{1.0, 0.4980392156862745, 0.054901960784313725}
\definecolor{mplblue}  {rgb}{0.12156862745098039, 0.4666666666666667, 0.7058823529411765}
\newcommand{\mplorange}[1]{{\color{mplorange} #1}}
\newcommand{\mplblue}[1]{{\color{mplblue} #1}}
\tikzset{
    line/.style = {
        thick,
        ->,
        > = {
            Triangle[length=2.0mm, width=2.0mm]
        }
    },
    arrow/.style = {
        line
    },
    hidden node/.style = {
        circle,
        minimum size = 0.9cm,
        draw = white,
        thick
    },
    latent node/.style = {
        hidden node,
        draw = black,
    },
    factor node/.style = {
        hidden node,
        rectangle,
        draw = black,
    },
    observed node/.style = {
        latent node,
        fill = gray!15
    },
    plate/.style = {
        draw,
        label={[anchor=north west]south west:#1},
        rounded corners=2pt,
        shape=rectangle,
        inner sep=10pt,
        thick
    }
}
\setlist[enumerate]{label=(\arabic*),itemsep=-0.25\baselineskip}
\setlist[itemize]{label=\textbullet,itemsep=-0.25\baselineskip}
\newcolumntype{L}{>{\RaggedRight\arraybackslash}X}
\tikzset{
    every picture/.append style = {
        xscale = 2,
        yscale = 2
    }
}
\newcommand{\code}[1]{\texttt{#1}}
\newcommand{\eg}{\textit{e.g.}}
\protected\def\check@optarg#1{%
    \@ifnextchar\thmtformatoptarg\@secondoftwo{#1}%
}
\let\oldlistoftheorems\listoftheorems
\renewcommand{\listoftheorems}{
    \renewcommand{\listtheoremname}{List of Theorems}
    \oldlistoftheorems[ignoreall, show={theorem}]
}
\newlength{\thmtopsep}\setlength{\thmtopsep}{\topsep + \parskip + .5pt}
\newlength{\thmbotsep}\setlength{\thmbotsep}{\parskip + .5pt}
\newtheoremstyle{theoremstyle}
    {\thmtopsep}{\thmbotsep}
    {}           
    {}           
    {\bfseries}  
    {.}          
    {.5em}       
    {}           
\theoremstyle{theoremstyle}
\newcommand{\period}{$\text{.}$}  
\newtheorem{corollary}{Cor\period}
\newtheorem{lemma}{Lem\period}
\newtheorem{model}{Mod\period}
\newtheorem{proposition}{Prop\period}
\newtheorem{remark}{Rem\period}
\crefname{assumption}{Assumption}{Assumptions}
\Crefname{assumption}{Assumption}{Assumptions}
\crefname{corollary}{Corollary}{Corollaries}
\Crefname{corollary}{Corollary}{Corollaries}
\crefname{definition}{Definition}{Definitions}
\Crefname{definition}{Definition}{Definitions}
\crefname{example}{Example}{Examples}
\Crefname{example}{Example}{Examples}
\crefname{fact}{Fact}{Facts}
\Crefname{fact}{Fact}{Facts}
\crefname{lemma}{Lemma}{Lemmas}
\Crefname{lemma}{Lemma}{Lemmas}
\crefname{model}{Model}{Models}
\Crefname{model}{Model}{Models}
\crefname{proposition}{Proposition}{Propositions}
\Crefname{proposition}{Proposition}{Propositions}
\crefname{question}{Question}{Questions}
\Crefname{question}{Question}{Questions}
\crefname{remark}{Remark}{Remarks}
\Crefname{remark}{Remark}{Remarks}
\crefname{theorem}{Theorem}{Theorems}
\Crefname{theorem}{Theorem}{Theorems}
\newlist{asslist}{enumerate}{1}
\setlist[asslist]{
    ref=\theassumption.(\arabic*),
    label=(\arabic*),
    itemsep=-0.25\baselineskip
}
\crefname{asslisti}{Assumption}{Assumptions}
\Crefname{asslisti}{Assumption}{Assumptions}
\newlist{corlist}{enumerate}{1}
\setlist[corlist]{
    ref=\thecorollary.(\arabic*),
    label=(\arabic*),
    itemsep=-0.25\baselineskip
}
\crefname{corlisti}{Corollary}{Corollaries}
\Crefname{corlisti}{Corollary}{Corollaries}
\newlist{deflist}{enumerate}{1}
\setlist[deflist]{
    ref=\thedefinition.(\arabic*),
    label=(\arabic*),
    itemsep=-0.25\baselineskip
}
\crefname{deflisti}{Definition}{Definitions}
\Crefname{deflisti}{Definition}{Definitions}
\newlist{exlist}{enumerate}{1}
\setlist[exlist]{
    ref=\theexample.(\arabic*),
    label=(\arabic*),
    itemsep=-0.25\baselineskip
}
\crefname{exlisti}{Example}{Examples}
\Crefname{exlisti}{Example}{Examples}
\newlist{factlist}{enumerate}{1}
\setlist[factlist]{
    ref=\thefact.(\arabic*),
    label=(\arabic*),
    itemsep=-0.25\baselineskip
}
\crefname{factlisti}{Fact}{Facts}
\Crefname{factlisti}{Fact}{Facts}
\newlist{lemlist}{enumerate}{1}
\setlist[lemlist]{
    ref=\thelemma.(\arabic*),
    label=(\arabic*),
    itemsep=-0.25\baselineskip
}
\crefname{lemlisti}{Lemma}{Lemmas}
\Crefname{lemlisti}{Lemma}{Lemmas}
\newlist{modlist}{enumerate}{1}
\setlist[modlist]{
    ref=\themodel.(\arabic*),
    label=(\arabic*),
    itemsep=-0.25\baselineskip
}
\crefname{modlisti}{Model}{Models}
\Crefname{modlisti}{Model}{Models}
\newlist{proplist}{enumerate}{1}
\setlist[proplist]{
    ref=\theproposition.(\arabic*),
    label=(\arabic*),
    itemsep=-0.25\baselineskip
}
\crefname{proplisti}{Proposition}{Propositions}
\Crefname{proplisti}{Proposition}{Propositions}
\newlist{qlist}{enumerate}{1}
\setlist[qlist]{
    ref=\theremark.(\arabic*),
    label=(\arabic*),
    itemsep=-0.25\baselineskip
}
\crefname{qlisti}{Question}{Questions}
\Crefname{qlisti}{Question}{Questions}
\newlist{remlist}{enumerate}{1}
\setlist[remlist]{
    ref=\theremark.(\arabic*),
    label=(\arabic*),
    itemsep=-0.25\baselineskip
}
\crefname{remlisti}{Remark}{Remarks}
\Crefname{remlisti}{Remark}{Remarks}
\newlist{thmlist}{enumerate}{1}
\setlist[thmlist]{
    ref=\thetheorem.(\arabic*),
    label=(\arabic*),
    itemsep=-0.25\baselineskip
}
\crefname{thmlisti}{Theorem}{Theorems}
\Crefname{thmlisti}{Theorem}{Theorems}
\renewcommand{\paragraph}[1]{\textbf{#1}}
\newcommand{\Matern}{Mat\'ern}
\crefname{section}{Sec\period}{Sects\period}
\crefname{proposition}{Prop\period}{Props\period}
\crefname{lemma}{Lem\period}{Lems\period}
\crefname{remark}{Rem\period}{Rems\period}
\crefname{model}{Mod\period}{Mods\period}
\crefname{appendix}{App\period}{Apps\period}
\crefname{table}{Tab\period}{Tabs\period}
\newlength\figureheight
\newlength\figurewidth
\begin{document}
\twocolumn[
    \icmltitle{
        Scalable Exact Inference in Multi-Output Gaussian Processes
    }
    
    \begin{icmlauthorlist}
    \icmlauthor{Wessel P.\ Bruinsma}{cam,invenia}
    \icmlauthor{Eric Perim}{invenia}
    \icmlauthor{Will Tebbutt}{cam}
    \icmlauthor{J.\ Scott Hosking}{bas,ati}
    \icmlauthor{Arno Solin}{aalto}
    \icmlauthor{Richard E.\ Turner}{cam,msr}
    \end{icmlauthorlist}
    
    \icmlaffiliation{cam}{University of Cambridge}
    \icmlaffiliation{invenia}{Invenia Labs}
    \icmlaffiliation{bas}{British Antarctic Survey}
    \icmlaffiliation{ati}{Alan Turing Institute}
    \icmlaffiliation{aalto}{Aalto University}
    \icmlaffiliation{msr}{Microsoft Research}
    
    \icmlcorrespondingauthor{Wessel P.\ Bruinsma}{wpb23@cam.ac.uk}
    
    \icmlkeywords{Machine Learning, ICML}
    
    \vskip 0.3in
]

\printAffiliationsAndNotice{}

\begin{abstract}
    Multi-output Gaussian processes (MOGPs) leverage the flexibility and interpretability of GPs while capturing structure across outputs, which is desirable, for example, in spatio-temporal modelling. The key problem with MOGPs is their computational scaling $\O(n^3 p^3)$, which is cubic in the number of both inputs $n$ (\eg, time points or locations) and outputs $p$. For this reason, a popular class of MOGPs assumes that the data live around a low-dimensional linear subspace, reducing the complexity to $\O(n^3m^3)$. However, this cost is still cubic in the dimensionality of the subspace $m$, which is still prohibitively expensive for many applications.
    We propose the use of a sufficient statistic of the data to accelerate inference and learning in MOGPs with orthogonal bases.
    The method achieves \emph{linear} scaling in $m$ in practice, allowing these models to scale to large $m$ without sacrificing significant expressivity or requiring approximation.
    This advance opens up a wide range of real-world tasks and can be combined with existing GP approximations in a plug-and-play way.
    We demonstrate the efficacy of the method on various synthetic and real-world data sets.
\end{abstract}

\section{Introduction}
Gaussian processes \citep[GPs,][]{Rasmussen:2006:Gaussian_Processes} form an interpretable, modular, and tractable probabilistic framework for modelling nonlinear functions.
They are successfully applied in a wide variety of single-output problems:
they can automatically discover structure in signals \parencite{Duvenaud:2014:Automatic_Construction}, achieve state-of-the-art performance in regression tasks \parencite{Bui:2016:Deep_Gaussian_Processes_for_Regression}, enable data-efficient models in reinforcement learning \parencite{Deisenroth:2011:PILCO_A_Model-Based_and_Data-Efficient}, and support many applications in probabilistic numerics \parencite{Hennig:2015:Probabilistic_Numerics_and_Uncertainty_in}, such as in optimisation \parencite{Brochu:2010:A_Tutorial_on_Bayesian_Optimization} and quadrature \parencite{Minka:2000:Quadrature_GP}.

Multi-output Gaussian processes (MOGPs) leverage the flexibility and interpretability of GPs while capturing structure across outputs.
One of the first applications of GPs with multiple outputs was in geostatistics \parencite{Matheron:1969:Le_Krigeage_Universel}.
Today, MOGPs models can be found in various areas, including geostatistics \parencite{Wackernagel:2003:Multivariate_Geostatistics}, factor analysis \parencite{Teh:2005:Semiparametric_Latent_Factor,Yu:2009:Gaussian-Process_Factor_Analysis_for_Low-Dimensional}, dependent or multi-task learning \parencite{Boyle:2005:Dependent_Gaussian_Processes,Bonilla:2007:Kernel_Multi-Task_Learning_Using_Task-Specific,Bonilla:2008:Multi-Task_Gaussian_Process,Osborne:2008:Towards_Real-Time_Information_Processing_of}, latent force models \parencite{Alvarez:2009:Latent_Force_Models,Alvarez:2009:Sparse_Convolved_Gaussian_Processes_for,Alvarez:2010:Efficient_Multioutput_Gaussian_Processes_Through,Alvarez:2011:Computationally_Efficient_Convolved}, state space modelling \parencite{Sarkka:2013:Spatiotemporal_Learning_via}, regression networks \parencite{Wilson:2012:GP_Regression_Networks,Nguyen:2014:Collaborative_Multi-Output, Dezfouli:2017:Semi-Parametric_Network_Structure_Discovery_Models}, and mixture models \parencite{Ulrich:2015:Cross_Spectrum,Bruinsma:2016:GGPCM,Parra:2017:Spectral_Mixture_Kernels_for_Multi-Output,Requeima:2019:The_Gaussian_Process_Autoregressive_Regression}.

A key practical problem with existing MOGPs is their computational complexity.
For $n$ input points, each having $p$ outputs, inference and learning in general MOGPs take $\O(n^3p^3)$ time and $\O(n^2p^2)$ memory, although these may be alleviated by a wide range of approximations \parencite{Quinonero:2005:Unifying_View,Titsias:2009:Variational_Learning,Lazaro-Gredilla:2010:Sparse_Spectrum_Gaussian_Process_Regression,Hensman:2013:Gaussian_Processes_for_Big_Data,Wilson:2015:Kernel_Interpolation_for_Scalable_Structured,Bui:2016:A_Unifying_Framework_for_Gaussian,Cheng:2017:Variational_Inference_for_Gaussian_Process,Hensman:2018:Variational_Fourier_Features_for_Gaussian}.
To mitigate these unfavourable scalings, a particular class of MOGPs, which we call the Instantaneous Linear Mixing Model (ILMM, \cref{sec:ILMM}), assumes that the data live around an $m$-dimensional linear subspace, where $m < p$.
This class exploits the low-rank structure of its covariance to reduce the complexity of inference and learning to roughly $\O(n^3 m^3)$ time and $\O(n^2 m^2)$ memory.
Although $m$ is typically much smaller than $p$, the runtime complexity is again cubic in $m$.
Consequently, the ILMM is prohibitively expensive in applications where moderate $m$ is required.
Consider, for example, hourly real-time electricity prices at 2313 different locations across 15 U.S.\ states and the Canadian province of Manitoba during the year 2019 \parencite{MISO}.
Forecasting electricity prices is crucial in the planning of energy transmission, which happens 24 hours in advance.
The ILMM is particularly well suited to this task:
the prices derive from optimal power flow, which tends to exhibit low-rank structure.
However, it still takes roughly $m=40$ to explain $95\%$ of the variance of this data.
For $n=$ \SI{1}{k} time points, this requires the inversion of a \SI{40}{k} $\times$ \SI{40}{k} matrix.
Even worse, to explain $99\%$ of the variance, it requires the inversion of a \SI{120}{k} $\times$ \SI{120}{k} matrix, clearly far beyond what is feasible.

In this paper, we develop a new perspective on MOGPs in the Instantaneous Linear Mixing Model class through the use of a sufficient statistic of the data.
We use this sufficient statistic to identify a class of MOGPs, which we call the Orthogonal Instantaneous Linear Mixing Model (OILMM, \cref{sec:OILMM}), in which inference and learning take $\O(n^3m + nmp + m^2p)$ time and $\O(n^2m + np + mp)$ memory, without sacrificing significant expressivity nor requiring any approximations.
The dominant (first) terms in these expressions are \emph{linear} in $m$, rather than cubic.
It is this feature that allows the OILMM to scale to large $m$.
The linear scaling is achieved by breaking down the high-dimensional multi-output problem into independent single-output problems, whilst retaining exact inference.
Consequently, the proposed methodology is interpretable---\eg, it can be seen as a natural generalisation of probabilistic principal component analysis \citep[PPCA,][]{Tipping:1999:Probabilistic_Principal_Component_Analysis}---simple to implement, and trivially compatible with single-output scaling techniques in a plug-and-play way.
For example, it can be combined with the variational inducing point approximation by \citep{Titsias:2009:Variational_Learning} or with state-space approximation methods (\cref{sec:spatio-temporal}); these approximations reduce the time complexity of the dominant term to linear in \emph{both} the number of data points $n$ and $m$.
We demonstrate the efficacy of the OILMM in experiments on various synthetic and real-world data sets.
Simple algorithms to perform inference and learning in the OILMM are presented in \cref{app:implementation}.

\section{Multi-Output Gaussian Process Models}
\label{sec:MOGPs}
For tasks with $p$ outputs, multi-output Gaussian processes induce a prior distribution over \emph{vector-valued} functions $f\colon\Tc\to\R^p$ by requiring that any finite collection of function values $f_{p_1}(t_1), \ldots, f_{p_n}(t_n)$ with $(p_i)_{i=1}^n \sub \set{1, \ldots, p}$ are multivariate Gaussian distributed.
We consider the input space time, where $\Tc=\R$, but the analysis trivially applies to more general feature spaces, \eg\ $\Tc=\R^d$.
A MOGP $f \sim \GP(m, K)$ is described by a \emph{vector-valued} mean function
$
    m(t) = \E[f(t)]
$
and a \emph{matrix-valued} covariance function
$
     K(t, t') = \E[f(t)f^\T(t')] - \E[f(t)]\E[f^\T(t')].
$
For $n$ observations $y(t_1),\ldots,y(t_n) \in \R^p$, inference and learning take $\O(n^3p^3)$ time and $\O(n^2p^2)$ memory.

\subsection{The Instantaneous Linear Mixing Model}
\label{sec:ILMM}

A simple, but general class of MOGPs decomposes a signal $f(t)$ comprising $p$ outputs into a fixed basis ${h_1, \ldots, h_m \in \R^p}$ with coefficients $x_1(t), \ldots, x_m(t) \in \R$:
\[
    f(t)
    = h_1 x_1(t) + \ldots + h_m x_m(t)
    = Hx(t)
\]
where $h_i$ is the $i$\textsuperscript{th} column of $H$.
The coefficients $x_1(t)$, \ldots,\newpage $x_m(t)$ are time varying and modelled independently with unit-variance Gaussian processes.
The noisy signal $y(t)$ is then generated by adding $\Normal(0, \Sigma)$-distributed noise to $f(t)$.
Intuitively, this means that the $p$-dimensional data live in a ``pancake'' \parencite{Roweis:1999:A_Unifying_Review_of_Linear,MacKay:2002:Information_Theory_Learning} around the $m$-dimensional column space of $H$, where typically $m \ll p$.

\begin{model}[Instantaneous Linear Mixing Model] \label{mod:ILMM}
    Let $K$ be an $m \times m$ diagonal multi-output kernel with $K(t, t)=I_m$, $H$ a $p \times m$ matrix, and $\Sigma$ a $p \times p$ observation noise covariance.
    Then the ILMM is given by the following generative model:
    \[\begin{aligned}
        x &\sim \GP(0, K(t, t')), &\text{(latent processes)} \\
        f(t) \cond H, x(t) &= Hx(t), &\text{\hspace{-1em}(mixing mechanism)} \\
        y \cond f &\sim \GP(f(t), \delta[t-t']\Sigma). &\text{(noise model)}
    \end{aligned}\]
\end{model}

We call $x$ the \textit{latent processes} and $H$ the \textit{mixing matrix} or \textit{basis}.
Throughout the paper, we assume that $H$ has linearly independent columns.
If we marginalise out $f$ and $x$, we find that
$y \sim \GP(0, HK(t,t')H^\T + \delta[t-t']\Sigma)$,
which reveals that the ILMM exhibits low-rank covariance structure.
It also shows that the ILMM is a time-varying generalisation of factor analysis (FA): choosing $K(t,t')\!=\!\delta[t\!-\!t']I_m$ and $\Sigma$ diagonal recovers FA exactly.

The ILMM is definitely not novel;
the specific formulation in \cref{mod:ILMM} is for convenience of the exposition in this paper.
In particular, the ILMM is very similar to the Linear Model of Coregionalisation (LMC) \parencite{Goovaerts:1997:Geostatistics_for_Natural_Resources_Evaluation}.
In the LMC, every latent process has multiple independent copies and the observation noise $\Sigma$ is typically diagonal.
More generally, the ILMM is a special case of the more general formulation with mixing mechanism
$
    f(t) = \int \tilde H(t, \tau) x(\tau) \isd \tau
$
where $\tilde H\colon \Tc\times\Tc\to\R^{p\times m}$ is a matrix-valued time-varying filter.
In particular, it is the case $\tilde H(t, \tau) = \delta(t-\tau)H$;
here the mixing is \emph{instantaneous} and \emph{time-invariant}.
Many other MOGPs in the machine learning and geostatistics literature can be seen as specialisations of this more general formulation by imposing structure on $\tilde H$ and $K$.
An organisation of the literature from this point of view, which we call the Mixing Model Hierarchy (MMH), is presented in \cref{app:mmh}.

\subsection{Inference and Learning in the ILMM}
The complexities of inference and learning in MOGPs can often be alleviated by exploiting structure in the kernel.
This is the case for the ILMM, which we have seen exhibits low-rank covariance structure.
In this section, we develop a new perspective on the ILMM by showing that the covariance structure can be exploited by devising a low-dimensional ``summary'' or ``projection'' of the $p$-dimensional observations.
This reduces the complexities from $\O(n^3 p^3)$ time and $\O(n^2 p^2)$ memory to $\O(n^3 m^3 + nmp + m^2 p)$ time and $\O(n^2 m^2 + np + mp)$ memory, where $\O(nmp + m^2 p)$ is the cost of projecting the data and computing the projection, and $\O(np + mp)$ is the cost of storing the data and projection.

The low-dimensional projection of the observations $y$ will be given by a sufficient statistic for the model, which is therefore ``without loss of information'' and can be used to accelerate inference.
Concretely, the projection of $y$ is given by the maximum likelihood estimate (MLE) of $x$ under the likelihood $p(y \cond x)$ of the ILMM.
As \cref{prop:mle} in \cref{app:mle} shows, this MLE is given by $Ty$ where $T$ is the $m\times p$ matrix $(H^\T \Sigma^{-1} H)^{-1} H^\T \Sigma^{-1}$;
$Ty$ is an unbiased estimator of $x$.
Because $Ty$ is an MLE for $x$, it is a function of a sufficient statistic for $x$, if one exists.
\cref{prop:sufficiency} in \cref{app:sufficiency} shows that $T y$ is actually minimally sufficient itself.
For any prior $p(x)$ over $x$, sufficiency of $Ty$ gives that $p(x \cond y) = p(x \cond Ty)$;
that is, conditioning on $y$ is equivalent to conditioning on $Ty$, where $Ty$ can be interpreted as a ``summary'' or ``projection'' of $y$.
This idea is formalised in the following proposition, which is proven in \cref{app:proof}:

\begin{proposition} \label{prop:general_sufficiency}
    Let $p(x)$ be a model for $x\colon\Tc\to\R^m$, not necessarily Gaussian, $H$ a $p \times m$ matrix, and $\Sigma$ a $p \times p$ observation noise covariance.
    Then consider the following generative model:
    \vspace{-1em}\begin{align*}
        x &\sim p(x), &\text{(latent processes)} \\
        f(t) \cond H, x(t) &= Hx(t), &\text{\hspace{-1em}(mixing mechanism)} \\
        y \cond f &\sim \GP(f(t), \delta[t-t']\Sigma). &\text{(noise model)}
    \end{align*}
    Consider a $p \times n$ matrix $Y$ of observations of $y$.
    Then $p(f\cond Y) = p(f \cond TY)$, where the distribution of the projected observed signal $Ty$ is
    \[
        Ty \cond x \sim \GP(x(t), \delta[t-t']\Sigma_T) \,\text{ with }\, \Sigma_T = (H^\T \Sigma^{-1} H)^{-1}.
    \]
    Moreover, the probability of the data $Y$ is given by
    \[
        p(Y) \!=\! \sbrac*{
            \prod_{i=1}^n \!\frac{\Normal(y_i\cond 0, \Sigma)}{\Normal(T y_i\cond 0, \Sigma_T)}\!
        }
        \!\!\int\! p(x) \!\prod_{i=1}^n\! \Normal(T y_i\cond x_i,\Sigma_T) \isd x
    \]
    where the $i$\textsuperscript{th} observation $y_i$ is the $i$\textsuperscript{th} column of $Y$.
\end{proposition}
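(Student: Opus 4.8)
The plan is to establish the sufficiency claim $p(f \cond Y) = p(f \cond TY)$ by a direct likelihood factorisation, and then read off the conditional law of $Ty$ and the marginal $p(Y)$ from the same factorisation. The key observation is that, conditionally on $x$, the columns of $Y$ are independent Gaussians $y_i \cond x_i \sim \Normal(Hx_i, \Sigma)$, so it suffices to work one observation at a time and establish the algebraic identity
\[
    \Normal(y_i \cond Hx_i, \Sigma)
    = \frac{\Normal(y_i \cond 0, \Sigma)}{\Normal(Ty_i \cond 0, \Sigma_T)}\,\Normal(Ty_i \cond x_i, \Sigma_T),
\]
with $\Sigma_T = (H^\T\Sigma^{-1}H)^{-1}$ and $T = \Sigma_T H^\T\Sigma^{-1}$. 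Crucially, the first factor on the right does not depend on $x$. Granting this identity, multiplying over $i=1,\ldots,n$ gives $p(Y \cond x) = c(Y)\,\prod_i \Normal(Ty_i \cond x_i, \Sigma_T)$ where $c(Y) = \prod_i \Normal(y_i\cond 0,\Sigma)/\Normal(Ty_i\cond 0,\Sigma_T)$ is $x$-free; this is precisely the Fisher--Neyman factorisation exhibiting $TY$ as a sufficient statistic, and since $f(t) = Hx(t)$ is a deterministic function of $x$, Bayes' rule immediately yields $p(f \cond Y) = p(f \cond TY)$. The stated form of $p(Y)$ then follows by integrating $p(Y\cond x)\,p(x)$ over $x$ and pulling the $x$-free constant $c(Y)$ out of the integral. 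The conditional law $Ty \cond x \sim \GP(x(t), \delta[t-t']\Sigma_T)$ is read off from the identity (the $\Normal(Ty_i\cond x_i,\Sigma_T)$ factor) together with the fact that, applying the linear map $T$ to $y(t) \cond x = Hx(t) + \text{noise}$, one gets $Ty(t) = THx(t) + T\cdot\text{noise} = x(t) + T\cdot\text{noise}$ since $TH = I_m$, and the transformed noise has covariance $T\Sigma T^\T = \Sigma_T H^\T\Sigma^{-1}\Sigma\Sigma^{-1}H\Sigma_T = \Sigma_T$; the white-noise structure across time is preserved because $T$ acts only on the output index.

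The main work — and the only real obstacle — is verifying the Gaussian identity above, which is a completion-of-squares computation. Writing out log-densities, one must show
\[
    (y_i - Hx_i)^\T\Sigma^{-1}(y_i - Hx_i)
    = y_i^\T\Sigma^{-1}y_i - (Ty_i)^\T\Sigma_T^{-1}(Ty_i) + (Ty_i - x_i)^\T\Sigma_T^{-1}(Ty_i - x_i)
\]
and that the determinant prefactors match, i.e. $\det(2\pi\Sigma) = \det(2\pi\Sigma)\det(2\pi\Sigma_T)^{-1}\det(2\pi\Sigma_T)$, which is trivial. For the quadratic form, expand both sides: the $x_i^\T(\cdot)x_i$ terms match because $H^\T\Sigma^{-1}H = \Sigma_T^{-1}$; the cross terms $-2x_i^\T H^\T\Sigma^{-1}y_i$ on the left match $-2x_i^\T\Sigma_T^{-1}Ty_i$ on the right because $\Sigma_T^{-1}T = H^\T\Sigma^{-1}$; and the remaining $x_i$-free terms require $y_i^\T\Sigma^{-1}y_i = y_i^\T\Sigma^{-1}y_i - y_i^\T T^\T\Sigma_T^{-1}Ty_i + y_i^\T T^\T\Sigma_T^{-1}Ty_i$, which holds identically. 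So the identity reduces to the three elementary facts $TH = I_m$, $H^\T\Sigma^{-1}H = \Sigma_T^{-1}$, and $\Sigma_T^{-1}T = H^\T\Sigma^{-1}$, all immediate from the definitions of $T$ and $\Sigma_T$ and the assumption that $H$ has linearly independent columns (so $H^\T\Sigma^{-1}H$ is invertible).

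One mild subtlety worth flagging is that the argument is phrased for the finite-dimensional marginals indexed by the observation times $t_1,\ldots,t_n$ (the matrix $Y$), and the claim about $f$ is really a claim about the posterior process; but since $f$ is determined by $x$ and sufficiency of $TY$ for $x$ holds at the level of the full likelihood $p(Y \cond x)$ regardless of the prior $p(x)$, the posterior-process statement follows by the usual consistency of conditional distributions under the Kolmogorov extension — no new estimate is needed, just the remark that the factorisation is prior-independent. I would present the proof by first stating and proving the per-observation Gaussian identity as a short lemma-style computation, then assembling the three conclusions (sufficiency/$p(f\cond Y)$, the law of $Ty$, and the formula for $p(Y)$) in that order.
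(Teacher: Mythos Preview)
Your proposal is correct and follows essentially the same approach as the paper: the key Gaussian identity you isolate is exactly the paper's \cref{lemma:signal-to-noise_ratio} (rearranged), your computations of $TH=I_m$ and $T\Sigma T^\T=\Sigma_T$ appear verbatim in the paper's proof, and the assembly of the three conclusions proceeds in the same order. The only cosmetic difference is that the paper routes sufficiency through a separate minimal-sufficiency proposition (using the Casella--Berger characterisation), whereas you invoke Fisher--Neyman directly from the factorisation; your route is in fact the more direct one for what is needed here.
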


Crucially, $Y$ are $p$-dimensional observations, $TY$ are $m$-dimensional summaries, and typically $m \ll p$, so conditioning on $T Y$ is often much cheaper;
note that computing $TY$ takes $\O(nmp)$ time and $\O(mp)$ memory.
In particular, if we apply \cref{prop:general_sufficiency} to the ILMM by letting $x \sim \GP(0, K(t, t'))$, we immediately get the claimed reduction in complexities:
whereas conditioning on $Y$ takes $\O(n^3 p^3)$ time and $\O(n^2 p^2)$ memory, we may equivalently condition on $TY$, which takes $\O(n^3 m^3)$ time and $\O(n^2 m^2)$ memory instead.
This important observation is depicted in \cref{fig:commutative_diagram_ILMM}.

The case of \cref{prop:general_sufficiency} where $x$ is Gaussian can be found as Results 1 and 2 by \citet{Higdon:2008:Computer_Model_Calibration_Using_High-Dimensional}, and was also used by the authors to accelerate inference.
Although the reduction in computational complexities allows \citeauthor{Higdon:2008:Computer_Model_Calibration_Using_High-Dimensional} to scale to significantly larger data, they are still limited by the cubic dependency on $m$.

If the observations can be naturally represented as multi-index arrays in $\R^{p_1\times\cdots \times p_q}$, a natural choice is to correspondingly decompose $H = H_1 \otimes \cdots \otimes H_q$ where $\otimes$ is the Kronecker product.
In this parametrisation, the projection and projected noise also become the Kronecker products:
$T = T_1 \otimes \cdots \otimes T_q$ and $\Sigma_T = \Sigma_{T_1} \otimes \cdots \otimes \Sigma_{T_q}$.
See \cref{app:tensor}.
The model by \citet{Zhe:2019:Scalable_High-Order_Gaussian_Process_Regression} can be seen as an ILMM of this form with $K(t,t') = k(t,t')I_m$ where $k$ is a scalar-valued kernel and $\Sigma = \sigma^2 I_p$.

In \cref{prop:general_sufficiency}, we call
$\Sigma_T = T \Sigma T^\T = (H^\T \Sigma^{-1} H)^{-1}$
the \emph{projected observation noise}.
The projected noise $\Sigma_T$ is important, because it couples the latent processes upon observing data.
In particular, if the latent processes are independent under the prior and $\Sigma_T$ is diagonal, then the latent processes remain independent when data is observed.
This observation forms the basis of the computational gains achieved by the Orthogonal Instantaneous Linear Mixing Model.

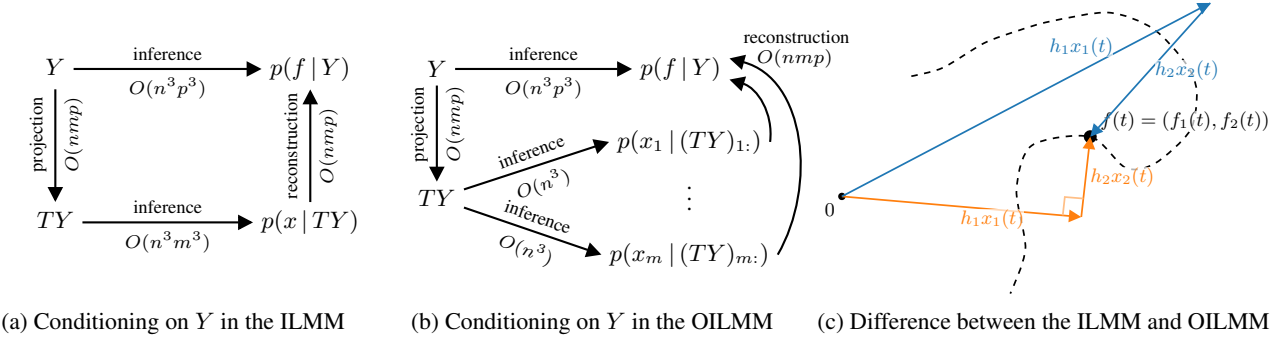
\begin{figure*}[t]
  \centering\footnotesize
  \begin{subfigure}[t]{.30\textwidth}
    \centering
    \begin{tikzpicture}
        \node [anchor=center] (Y) {$Y\vphantom{p(}$};
        \node [below=1.5cm of Y] (TY) {$TY\vphantom{p(}$};
        \node [right=2.5cm of Y] (pyY) {$p(f\cond Y)$};
        \node [below=1.5cm of pyY] (pTY) {$p(x\cond TY)$};
        \draw [arrow]
            (Y) -- 
                node [pos=0.4, anchor=south, rotate=90] {\scriptsize projection}
                node [pos=0.4, anchor=north, rotate=90] {\scriptsize $\O(nmp)$}
            (TY);
        \draw [arrow]
            (Y) --
                node [pos=0.5, anchor=south] {\scriptsize inference}
                node [pos=0.5, anchor=north] {\scriptsize $\O(n^3 p^3)$}
            (pyY);
        \draw [arrow]
            (TY) --
                node [pos=0.5, anchor=south] {\scriptsize inference}
                node [pos=0.5, anchor=north] {\scriptsize $\O(n^3 m^3)$}
            (pTY);
        \draw [arrow]
            (pTY) -- 
                node [pos=0.5, anchor=south, rotate=90] {\scriptsize reconstruction\vphantom{j}}
                node [pos=0.5, anchor=north, rotate=90] {\scriptsize $\O(nmp)$}
            (pyY);
        
        \useasboundingbox (-.5cm,.5cm) rectangle (2.25cm,-1.5cm);
        
    \end{tikzpicture}
    \caption{Conditioning on $Y$ in the ILMM}
    \label{fig:commutative_diagram_ILMM}
  \end{subfigure}
  \hspace*{\fill}
  \begin{subfigure}[t]{.30\textwidth}
    \centering
    \begin{tikzpicture}
        \node [anchor=center] (Y)  {$Y$};
        \node [below=1.5cm of Y, anchor=center] (TY) {$TY$};
        \node [right=3cm of Y, anchor=center] (pyY) {$p(f\cond Y)$};
        \node [right=3cm of TY, yshift=.75cm, anchor=center] (pu1TY) {$p(x_1\cond (TY)_{1:})$};
        \node [right=3cm of TY, yshift=-.75cm, anchor=center] (pumTY) {$p(x_m\cond (TY)_{m:})$};
        \draw [arrow]
            (Y) -- 
                node [pos=0.4, anchor=south, rotate=90] {\scriptsize projection}
                node [pos=0.4, anchor=north, rotate=90] {\scriptsize $\O(nmp)$}
            (TY);
        \draw [arrow]
            (Y) --
                node [pos=0.5, anchor=south] {\scriptsize inference}
                node [pos=0.5, anchor=north] {\scriptsize $\O(n^3 p^3)$}
            (pyY);
        \draw [arrow]
            (TY) --
                node [pos=0.5, anchor=south, rotate=20] {\scriptsize inference}
                node [pos=0.5, anchor=north, rotate=20] {\scriptsize $\O(n^3)$}
            (pu1TY.west);
        \node [right=3cm of TY, anchor=center, yshift=.1cm] {$\vdots$};
        \draw [arrow]
            (TY) --
                node [pos=0.5, anchor=south, rotate=-20] {\scriptsize inference}
                node [pos=0.5, anchor=north, rotate=-20] {\scriptsize $\O(n^3)$}
            (pumTY.west);
        \draw [arrow, ->] (pu1TY.east) to[bend right=40] ([yshift=-2pt]pyY.east);
        \draw [arrow, ->]
            (pumTY.east)
                to[out=60, in=-15]
                    node [pos=1, anchor=south west, xshift=2pt, yshift=1pt] {\scriptsize reconstruction\vphantom{j}}
                    node [pos=1, anchor=south west, xshift=7pt, yshift=-7pt] {\scriptsize $\O(nmp)$}
            ([yshift=2pt]pyY.east);
        
        \useasboundingbox (-.25cm,.5cm) rectangle (2.5cm,-1.5cm);

    \end{tikzpicture}
    \caption{Conditioning on $Y$ in the OILMM}
    \label{fig:commutative_diagram_OILMM}
  \end{subfigure} 
  \hspace*{\fill}
  \begin{subfigure}[t]{.35\textwidth}
    \centering
    \resizebox{\textwidth}{!}{%
    \begin{tikzpicture}[
        scale=1.5,
        arrowlabel/.style={
            fill=white,
            opacity=0.7,
            text opacity=1,
            inner sep=0
        },
        rotate=-30
    ]
        \fill [black]
            (0, 0) circle (0.02);
        \node at (0, 0) [anchor=north east] {$0$};
        \draw [dashed, thick] plot [smooth, tension=0.5] coordinates
            {
                (0, 0.75)
                (0.125, 0.875)
                (0.25, 1.1)
                (0.375, 1.25)
                (0.5, 1.38)
                (0.75, 1.58)
                (1, 1.625)
                (1.25, 1.55)
                (1.375, 1.45)
                (1.45, 1.25)
                (1.375, 1)
                (1.25, 0.9)
                (1, 0.95)
                (0.75, 0.75)
                (0.8, 0.5)
                (1, 0.25)
                (1.05, 0)
            };
        \fill [black]
            (1.0, 0.95)
            circle (0.035)
            node [arrowlabel,
                  anchor=south west,
                  yshift=0.3em,
                  xshift=0.5em] {$f(t)=(f_1(t), f_2(t))$};
        \draw [mplorange, arrow]
            (0, 0)
            -- node [midway,
                     anchor=north west,
                     arrowlabel,
                     yshift=-0.1em,
                     xshift=-0.1em] {$h_1 x_1(t)$} (1.175, 0.55);
        \draw [mplorange, arrow]
            (1.175, 0.55)
            -- node [pos=.5,
                     anchor=west,
                     arrowlabel,
                     xshift=0.2em] {$h_2 x_2(t)$} (1.0, 0.95);
        \draw [thick, mplorange!50]
            (1.084, 0.508)
            -- (1.044, 0.599)
            -- (1.135, 0.642);
        \draw [mplblue, arrow]
            (0, 0)
            -- node [pos=0.75,
                     anchor=east,
                     arrowlabel,
                     xshift=-0.2em,
                     yshift=0.2em] {$h_1 x_1(t)$} (1.2, 1.9);
        \draw [mplblue, arrow]
            (1.2, 1.9)
            -- node [pos=.5,
                     anchor=west,
                     arrowlabel,
                     xshift=0.2em] {$h_2 x_2(t)$} (1.0, 0.95);
    \end{tikzpicture}}
    \caption{
      Difference between the ILMM and OILMM
    }
    \label{fig:particle}      
    \end{subfigure}       
    \caption{(a--b)~Commutative diagrams depicting that conditioning on $Y$ in the ILMM and OILMM is equivalent to conditioning respectively on $TY$ and independently every $x_i$ on $(TY)_{i:}$, but yield different computational complexities. The reconstruction costs assume computation of the marginals. (c)~Illustration of the difference between the ILMM and OILMM. The trajectory of a particle (dashed line) in two dimensions is modelled by the ILMM (\mplblue{blue}) and OILMM (\mplorange{orange}). The noise-free position $f(t)$ is modelled as a linear combination of basis vectors $h_1$ and $h_2$ with coefficients $x_1(t)$ and $x_2(t)$ (two independent GPs). In the OILMM, the basis vectors $h_1$ and $h_2$ are constrained to be orthogonal; in the ILMM, $h_1$ and $h_2$ are unconstrained.}
    \vspace*{-0.5em}
\end{figure*}

\subsection{Interpretation of the Likelihood}
\cref{prop:general_sufficiency} shows that the log-probability of the data $Y$ is equal to the log-probability of the projected data $TY$ plus, for every observation $y_i$, a correction term of the form
$
    \log \Normal(y_i\cond 0, \Sigma)/\Normal(T y_i\cond 0, \Sigma_T).
$
\cref{prop:regularisation-term} in \cref{app:interpretation-likelihood} shows that this correction term can be written as
\begin{equation*}
    -\tfrac12 (p - m)\log 2\pi
    - \underbracket{
        \tfrac12\log |\Sigma|/|\Sigma_T|
    }_{\mathclap{
        \text{noise ``lost by projection''}
    }}
    - \underbracket{
        \tfrac12\norm{y_i - HTy_i}_{\Sigma}^2,
    }_{\mathclap{
        \text{data ``lost by projection''}
    }}
\end{equation*}
where $\norm{\vardot}_{\Sigma} = \norm{\Sigma^{-\frac12}\vardot}$.
When the likelihood is optimised with respect to $H$, the correction terms will prevent the projection $T$ from discarding a component of the data $Y$ and the noise $\Sigma$ that is ``too large''.
For example, for the ILMM, if these correction terms were ignored, then after optimising we would find that $TY = 0$ and $\Sigma_T = 0$, because the density of a zero-mean Gaussian is highest at the origin, and becomes higher as the variance becomes smaller;
it is exactly $TY = 0$ and $\Sigma_T = 0$ that the two penalties prevent from happening.

\section{The Orthogonal Instantaneous Linear Mixing Model}
\label{sec:OILMM}
Inspired by \cref{prop:general_sufficiency}, we will now identify a subclass of the ILMM for which, in practice, inference and learning scale \emph{linearly} in the number of latent processes $m$ rather than cubically.
As we will see, this happens when the projected observation noise is diagonal, which is the case for the
Orthogonal Instantaneous Linear Mixing Model (OILMM):
the subclass of ILMMs where the basis $H$ is \emph{orthogonal}.
In particular, $H = U S^{\frac12}$ where $U$ is a matrix with orthonormal columns and $S > 0$ a diagonal. We define this model as follows:

\begin{model}[Orthogonal Instantaneous Linear Mixing Model] \label{mod:OILMM}
    The OILMM is an ILMM (\cref{mod:ILMM}) where the basis $H$ is a $p \times m$ matrix of the form $H=US^{\frac{1}{2}}$ with $U$ a matrix with orthonormal columns and $S > 0$ diagonal, and $\Sigma = \sigma^2 I_p + H D H^\T$ a $p \times p$ matrix with $D \geq 0$ diagonal.
\end{model}
\vspace*{5pt}

The difference between the ILMM and the OILMM is illustrated in \cref{fig:particle}.
In the OILMM, we require that $m \le p$, since the number of $p$-dimensional vectors that can be mutually orthogonal is at most $p$.
Also, $D$ in $\Sigma$ can be interpreted as heterogeneous noise deriving from the latent processes.
Moreover, although $H$ and $\Sigma$ do not depend on time, our analysis and results trivially carry over to the case where $H_t$ and $\Sigma_t$ do vary with time.
Finally, for the OILMM, \cref{prop:form} in \cref{app:oilmm-proj-and-noise} shows that
$
    T = S^{-\frac12}U^\T
$ and $
    \Sigma_T = \sigma^{2}S^{-1} + D.
$

Whereas the ILMM is a time-varying generalisation of FA, the OILMM can be seen as a time-varying generalisation of probabilistic principal component analysis \citep[PPCA,][]{Tipping:1999:Probabilistic_Principal_Component_Analysis}: $D=0$ and $K(t,t') = \delta[t-t'] I_m$ recovers the orthogonal solution of PPCA exactly;
recall that PPCA admits infinitely many solutions, with only one corresponding to orthogonal axes, whereas the modelling assumptions of the OILMM recover this solution automatically.
See \cref{fig:relationships} for a visualisation of the relationship between FA, PPCA, the ILMM, and the OILMM.
The OILMM is also related to Gaussian Process Factor Analysis \citep[GPFA,][]{Yu:2009:Gaussian-Process_Factor_Analysis_for_Low-Dimensional}, with the crucial difference being that in GPFA orthogonalisation of the columns of $H$ is done as a post-processing step, whereas in the OILMM orthogonality of the columns of $H$ is built into the model.
In this respect, the OILMM is more similar to the model by \citet{Higdon:2008:Computer_Model_Calibration_Using_High-Dimensional}, who also consider a MOGP with an orthogonal basis built in.

\begin{figure}
    \centering
    \begin{tikzpicture}
        \node [anchor=center] (FA) {FA};
        \node [above=1cm of FA] (ILMM) {ILMM};
        \node [right=3cm of FA] (PPCA) {PPCA};
        \node [above=1cm of PPCA] (OILMM) {OILMM};
        \draw [arrow]
            (FA) -- 
                node [pos=0.525, anchor=south] {\scriptsize orthogonality constraint}
            (PPCA);
        \draw [arrow]
            (FA) --
                node [pos=0.5, anchor=center, rotate=90, align=center] {\scriptsize time \\ \scriptsize varying}
            (ILMM);
        \draw [arrow]
            (ILMM) --
                node [pos=0.5, anchor=south] {\scriptsize orthogonality constraint}
            (OILMM);
        \draw [arrow]
            (PPCA) -- 
                node [pos=0.5, anchor=center, rotate=90, align=center] {\scriptsize time \\ \scriptsize varying}
            (OILMM);
    \end{tikzpicture}
    \caption{Relationship between factor analysis (FA), probabilistic principal component analysis \citep[PPCA,][]{Tipping:1999:Probabilistic_Principal_Component_Analysis}, the ILMM (\cref{mod:ILMM}), and the OILMM (\cref{mod:OILMM})}
    \vspace*{-0.5em}
    \label{fig:relationships}
\end{figure}
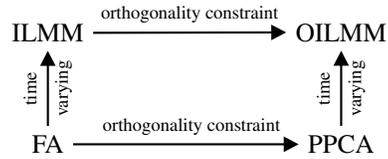

\subsection{Generality of the OILMM}
A central theme of the experiments will be to assess how restrictive the orthogonality assumption is for the OILMM.
In this section, we theoretically investigate this question from various perspectives.
In the separable case, where $K(t,t') = k(t,t') I_m$ for a scalar-valued kernel $k$, for every ILMM with homogeneous observation noise ($\Sigma= \sigma^2 I_p$), there exists an OILMM with $D=0$ that is equal in distribution to $y$.
To see this, note
\[\begin{aligned}
    y^{\text{(ILMM)}} &\sim \GP(0, k(t, t') H H^\T + \sigma^2 \delta[t-t'] I_p), \\
    y^{\text{(OILMM)}} &\sim \GP(0, k(t, t') U S U^\T + \sigma^2 \delta[t-t'] I_p).
\end{aligned}\]
Hence, letting $USU^\T$ be the eigendecomposition of $H H^\T$ gives an OILMM equal in distribution to $y$.
In the nonseparable case, where diagonal elements of $K$ are linearly independent, in general only the distribution of $y(t)$ at every $t$ can be recovered by an OILMM, but the correlation between $y(t)$ and $y(t')$ for $t' \neq t$ may be different.
In terms of the joint distribution over $x$ and $y$, which is important for interpretability of the latent processes, \cref{prop:kl} in \cref{app:kl} shows that the Kullback--Leibler (KL) divergence between two ILMMs with bases $H$ and $\hat H$ is proportional to $\norm{H - \hat H}_F^2$, hence symmetric, where $\norm{\vardot}_F$ denotes the Frobenius norm.
As a consequence (\cref{prop:kl}), the KL between an ILMM with basis $H$ and the OILMM closest in KL is upper bounded by $\norm{I_m - V}_F^2$ where $V$ are the right singular vectors of $H$.
This makes sense: $V = I_m$ implies that $H$ is of the form $US^{\frac12}$ with $U$ a matrix with orthonormal columns and $S > 0$ diagonal.
It also shows that an ILMM is close to an OILMM if $V$ is close to $I_m$ in the sense of the Frobenius norm.

\subsection{Choice of Basis}
The basis $H$ is a parameter of the model that can be learned through gradient-based optimisation of the likelihood.
Parametrising the orthogonal part $U$ of the basis $H$ takes $\O(m^2 p)$ time and $\O(m p)$ memory (see \cref{app:complexities}).
This complexity is \emph{quadratic} in $m$, rather than linear.
However, the cost of parametrising $U$ is typically far from dominant, which means that this cost is typically negligible.
See \cref{app:cost_u} for a more detailed discussion.

Observing that $\E[f(t)f^\T(t)]=HH^\T$, a sensible initialisation of the basis $H$ is (a truncation of) $\hat U\hat S^{\frac12}$ where $\hat \Sigma = \hat U \hat S \hat U^\T$ is the eigendecomposition of an estimate $\hat \Sigma$ of the spatial covariance.
In the case that there is a kernel over the outputs, \eg{} in separable spatio-temporal GP models, $H$ can be set to (a truncation of) $US^{\frac12}$ where $USU^\T$ is an eigendecomposition of the kernel matrix over the outputs.
The hyperparameters of the kernel over the outputs can then be learned with gradient-based optimisation by differentiating through the eigendecomposition.
See \cref{sec:spatio-temporal}.

\subsection{Diagonal Projected Noise}
As alluded to in \cref{sec:ILMM}, under the OILMM, the projected noise $\Sigma_T$ from \cref{prop:general_sufficiency} is diagonal:
$
    \Sigma_T = \sigma^2 S^{-1} + D
$;
\cref{prop:decoupling} in the \cref{app:diagonal-noise} characterises exactly when this is the case.
This property is crucial, because, as we explain in the next paragraph, it allows the model to break down the high-dimensional multi-output problem into independent single-output problems, which brings significant computational advantages.

\subsection{Inference}
Since the projected noise is diagonal, the latent processes remain independent when data is observed.
We may hence treat the latent processes independently, conditioning the $i$\textsuperscript{th} latent process $x_i$ on $(T Y)_{i:} = Y^\T U_{i:} / \sqrt{S_{ii}}$ under noise $(\Sigma_{T})_{ii} = \sigma^2 /S_{ii} + D_{ii}$,
which means that the high-dimensional prediction problem breaks down into independent single-output problems.
Therefore, inference takes $\O(n^3 m + nmp)$ time and $\O(n^2 m + np)$ memory (see \cref{app:complexities}), which are \emph{linear} in $m$.
This decoupled inference procedure is depicted in \cref{fig:commutative_diagram_OILMM} and outlined in more detail in \cref{app:subsec:inference,app:subsec:sampling}.
Note that the decoupled problems can be treated in parallel to achieve sublinear wall time, and that in the separable case further speedups are possible.

\subsection{Learning}
For computing the marginal likelihood, the OILMM also offers computational benefits.
\cref{prop:likelihood} in \cref{app:oilmm-likelihood} shows that $\log p(Y)$ from \cref{prop:sufficiency} simplifies to:
\[\begin{aligned}
    &\log p(Y)  \\
    &=
        \!-\! \frac{n}{2} \! \log |S|
        \!-\! \frac{n (p\!-\!m)}{2} \! \log 2 \pi \sigma^2
        \!-\! \frac{1}{2\sigma^2}\norm{(I_p \!-\! UU^\T)Y}_F \\
    &\qquad + \textstyle\sum_{i=1}^m \log \Normal((TY)_{i:}\cond 0, K_i + (\sigma^2/S_{ii} + D_{ii}) I_n)
\end{aligned}\]
where $\norm{\vardot}_F$ denotes the Frobenius norm and
$K_i$ is the $n\times n$ kernel matrix for the $i$\textsuperscript{th} latent process $x_i$.
We conclude that learning also takes $\O(n^3m + nmp)$ time and $\O(n^2m + np)$ memory (see \cref{app:complexities}), again \emph{linear} in the number of latent processes.
Computation of the marginal likelihood is outlined in more detail in \cref{app:subsec:likelihood}.

\subsection{Interpretability}
Besides computational benefits, the fact that the OILMM breaks down into independent problems for the latent processes also promotes interpretability.\footnote{In the OILMM, the latent processes retain independence in the posterior distribution, which is not generally true for the ILMM.}
Namely, the independent problems can be separately inspected to interpret, diagnose, and improve the model.
This is \emph{much} easier than directly working with predictions for the data, which are high dimensional and often strongly correlated between outputs.
For example, the OILMM allows a simple and interpretable decomposition of the mean squared error:
\vspace{-.15em}
\begin{equation*}
    \underbracket{\norm{y - H x}^2 \vphantom{\sum}}_{
        \text{MSE}
    }
    = \underbracket{\norm{P_{H^\perp} y}^2 \vphantom{\sum}}_{
        \mathclap{\substack{\text{data not} \\ \text{captured by basis}}}
    } + \sum_{i=1}^m
    S_{ii}
    \underbracket{((T y)_i - x_i)^2, \vphantom{\sum}}_{
        \mathclap{\substack{\text{MSE of} \\ \text{$i$\textsuperscript{th} latent process}}}
    }
\end{equation*}
\vspace{-.1em}%
where $P_{H^\perp}$ is the orthogonal projection onto the orthogonal complement of $\col(H)$.
See \cref{prop:mse} in \cref{app:mse} for a proof.

\subsection{Scaling}
For both learning and inference, the problem decouples into $m$ independent single-output problems.
Therefore, to scale to a large number of data points $n$, off-the-shelf single-output GP scaling techniques can be trivially applied to these independent problems.
For example, if the variational inducing point method by \textcite{Titsias:2009:Variational_Learning} is used with $r \ll n$ inducing points, then inference and learning are further reduced to $\O(nm r^2)$ time and $\O(nm r)$ memory, ignoring the cost of the projection (see \cref{app:complexities}).
Most importantly, if $k(t,t')$ is Markovian (\eg\ of the Mat\'ern class), then one can leverage state-space methods to efficiently solve the $m$ independent problems exactly \parencite{Hartikainen:2010:Kalman_filtering_and_smoothing_solutions,Sarkka:2019:Applied_Stochastic_Differential_Equations}.
This brings down the scaling to $\O(nmd^3)$ time and $\O(nmd^2)$ memory, where $d$ is the state dimension, typically $d \ll m, n$ (see \cref{app:complexities}).
We further discuss this approach in \cref{sec:spatio-temporal}.

\newcommand{\obs}{_{\text{o}}}
\newcommand{\miss}{_{\text{m}}}

\subsection{Missing Data}
Missing data is troublesome for the OILMM, because it is not possible to take away a subset of the rows of $H$ and retain orthogonality of the columns.
In this section, we develop an approximation for the OILMM to deal with missing data in a simple and effective way.
For a matrix or vector $A$,
let $A\obs$ and $A\miss$  denote the rows of $A$ corresponding to respectively observed and missing values.
Also, for a matrix $A$, let $d[A]$ denote the diagonal matrix resulting from setting the off-diagonal entries of $A$ to zero.
In the case of missing data, \cref{prop:form-missing}
in \cref{app:missing-data-projection-noise-approximation} shows that the projection and projected noise are given by $T\obs = S^{-\frac12} {U\obs}^\dagger$ and $\Sigma_{T\obs} = \sigma^{2} S^{-\frac12}({U\obs}^\T U\obs)^{-1} S^{-\frac12} + D$.
Observe that $\Sigma_{T\obs}$ is dense, because, unlike $U$, the columns of $U\obs$ are not orthogonal.
However, they may be approximately orthogonal, which motivates the approximation $\Sigma_{T\obs} \approx d[\Sigma_{T\obs}]$.
\cref{prop:diagonal-approximation} in \cref{app:missing-data-projection-noise-approximation} shows that this approximation will be accurate if missing observations cannot decrease the norm of a vector in $\col(H)$ too much:
\[
    \e\ss{rel}
    =\frac
        {\norm{\Sigma_{T\obs} - d[\Sigma_{T\obs}]}\ss{op}}
        {\norm{d[\Sigma_{T\obs}]}\ss{op}}
    \lessapprox
        \max_{y \in \col(H) : \norm{y}=1} \norm{y\miss}^2
\]
where $\norm{\vardot}\ss{op}$ denotes the operator norm and $\lessapprox$ denotes inequality up to a proportionality constant.
For example, if the $i$\textsuperscript{th} column of $H$ is a unit vector, say $e_k$, then the bound does not guarantee anything.
Indeed, if the $k$\textsuperscript{th} output is missing, then potentially all information about the $i$\textsuperscript{th} latent process is lost.
On the other hand, if, for example, $\norm{U}^2_\infty \lessapprox 1/p$, then \cref{prop:bound-U-inequality} in \cref{app:missing-data-projection-noise-approximation} shows that $\e\ss{rel} \lessapprox s/p$ if $s$ outputs are missing, which means that the approximation will be accurate if $s \ll p$.
With this approximation, two things change in the log-likelihood (\cref{rem:using-form-missing} in \cref{app:missing-data-projection-noise-approximation}):
for every time point with missing data
{\it (i)} $UU^\T$ becomes $U\obs U\obs^\dagger$ and
{\it (ii)} an extra term $-\tfrac12\log |U\obs^\T U\obs|$ appears.

It is also easy to use variational inference to handle missing data (\cref{app:missing-data-variational-approximation}) and to support heterogeneous observation noise (\cref{app:oilmm-het-obs-noise}), but we leave experimental tests of these approaches to future work.

\subsection{Application to Separable Spatio--Temporal GPs}
\label{sec:spatio-temporal}

Separable spatio--temporal GPs, which are of the form
$
    f \sim \GP(0,k_t(t,t') k_r(r,r'))
$,
form a vector-valued process $f(t) = (f(t, r_i))_{r=1}^p \sim \GP(0, k_t(t, t') K_r)$ when observed at a fixed number of locations in space, where $K_r$ is the $p \times p$ matrix with $(K_r)_{ij} = k_r(r_i, r_j)$. 
Letting $USU^\T$ be the eigendecomposition of $K_r$, $f(t)$ is an OILMM with $H=US^{\frac12}$ and $K(t,t')=k_t(t,t') I_p$. Note that $m = p$, so the projection takes $O(np^2)$ time (see \cref{app:complexities}).
\newpage

Combining the OILMM framework with efficient state-space scaling techniques \parencite{Hartikainen:2010:Kalman_filtering_and_smoothing_solutions, Sarkka:2019:Applied_Stochastic_Differential_Equations, Solin:2018:Infinite-Horizon_Gaussian_Processes, nickisch2018state}, which are either exact or arbitrarily good approximations, the complexities are reduced to $\O(np^2 + p^3)$ time and $\O(np + p^2)$ memory for the entire problem, which are \emph{linear} in $n$ (see \cref{app:complexities}).
This compares favourably with the filtering techniques of \textcite{Sarkka:2013:Spatiotemporal_Learning_via} and \textcite{hartikainen2011sparse}, both of which have $\O(np^3)$ time and $\O(np^2)$ memory, and the Kronecker product decomposition \citep[][Ch.~5]{saatcci2012scalable} approach, which requires $\O(p^3 + n^3)$ time and $\O(p^2 + n^2)$ memory complexity.

By relaxing $K$ to be a general diagonal multi-output kernel with $K(t,t)=I_p$, we obtain a new class of models which are nonseparable relaxations of the above in which exact inference remains efficient.
The orthogonal basis for this OILMM is, as before, the eigenvectors of a kernel matrix whose hyperparameters can be optimised.

\section{Experiments}
\label{sec:experiments}
We test the OILMM in experiments on synthetic and real-world data sets.
A Python implementation and code to reproduce the experiments is available at {\footnotesize \url{https://github.com/wesselb/oilmm}}.
A Julia implementation is available at {\footnotesize \url{https://github.com/willtebbutt/OILMMs.jl}}.

\begin{figure}[t]\small
    \centering
    \vspace{0.5em}
    \includegraphics[width=\linewidth]{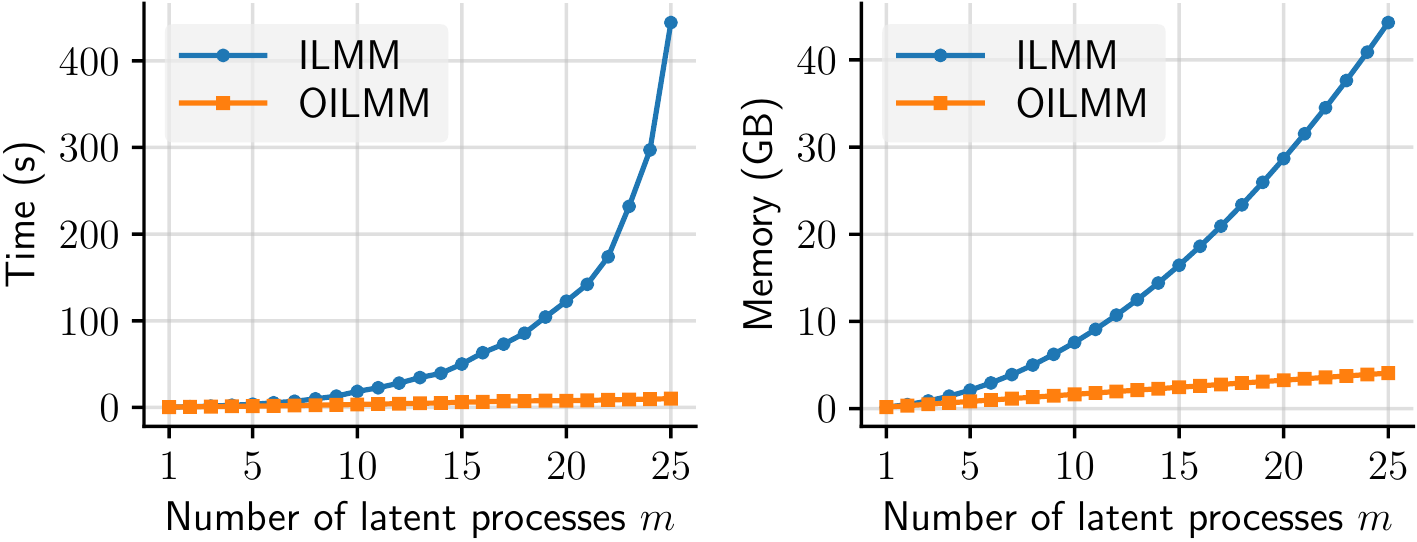}
    \vspace*{-1.5em}
    \caption[Runtime and memory usage of the ILMM and OILMM for various numbers of latent processes]{
        Runtime (left) and memory usage (right) of the ILMM and OILMM for computing the evidence of $n=1500$ observations for $p=200$ outputs.
    }
    \vspace*{-1em}
    \label{fig:scaling}
\end{figure}

\subsection{Computational Scaling}
\label{exp:scaling}
We demonstrate that exact inference scales favourably in $m$ for the OILMM, whereas the ILMM quickly becomes computationally infeasible as $m$ increases.
We use a highly optimised implementation of exact inference for the ILMM, kindly made available by Invenia Labs \footnote{\url{https://invenialabs.co.uk}}.
\cref{fig:scaling} shows the runtime and the memory usage of the ILMM and OILMM.
Observe that the ILMM scales $\O(m^3)$ in time and $\O(m^2)$ in memory, whereas the OILMM scales $\O(m)$ in both time and memory.
For $m=25$, the ILMM takes nearly 10 minutes to compute the evidence, whereas the OILMM only requires a couple seconds.
See \cref{app:scaling} for more details.

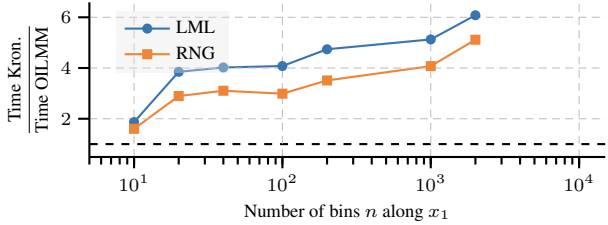
\begin{figure}[t]
    \centering\scriptsize
    \definecolor{mycolor1}{HTML}{3976af}
    \definecolor{mycolor2}{HTML}{ef8536}
    \setlength{\figurewidth}{.4\textwidth}
    \setlength{\figureheight}{.6\figurewidth}  
    \tikzset{every picture/.append style = {xscale = .5, yscale = .5}}
    \pgfplotsset{
        width=\figurewidth,
        height=0.5\figureheight,
        xlabel={Number of bins $n$ along $x_1$},
        ylabel={$\displaystyle \frac{\text{Time Kron.}}{\text{Time OILMM}}$},
        grid=major,
        xmin=5,
        xmax=15000,
        major grid style={densely dashed,black!25},
        legend style = {{at={(0.05, 0.95)},anchor=north west}},
        legend style={legend cell align=left, align=left, draw=none, fill=black!10, fill opacity=0.35, text opacity=1, draw opacity=1},
        scale only axis,
        axis on top,
        axis x line*=bottom,
        axis y line*=left,
        every axis plot/.append style={thick},
        axis line style = thick,
        xtick align=outside,
        xtick style={black, thick},
        ytick align=outside,
        ytick style={black, thick},
        clip=false
    }
    \vspace*{1em}
    \begin{tikzpicture}
\begin{axis}[xmode = {log}]
\addplot+[color=mycolor1, mark options={fill=mycolor1, opacity=1}, mark size=1.5pt] coordinates {
(2000.0,6.080886413727074)
(1000.0,5.127248318776301)
(200.0,4.739247269903093)
(100.0,4.077705646480687)
(40.0,4.02122448158909)
(20.0,3.85026607492165)
(10.0,1.8683002927560601)
};
\addlegendentry{LML}
\addplot+[color=mycolor2, mark options={fill=mycolor2, opacity=1}, mark size=1.5pt] coordinates {
(2000.0,5.116240323495079)
(1000.0,4.0743851853265305)
(200.0,3.5106622582877254)
(100.0,2.9887877155369686)
(40.0,3.1036948231517885)
(20.0,2.8949935749493347)
(10.0,1.6033773298694052)
};
\addlegendentry{RNG}
\addplot[color=black, dashed] coordinates {
(15000.0, 1)
(5.0, 1)
};
\end{axis}
\end{tikzpicture}
    \vspace*{-1em}
    \caption{Ratio of timings of the Kronecker approach \citep[][Ch.~5]{saatcci2012scalable} and the OILMM to compute the marginal likelihood of the latent function (LML) and to generate a single prior sample (RNG). See \cref{tab:kronecker-timing-statistics} in \cref{app:point-process} for full results.}
    \label{fig:timing-ratios}
    \vspace*{-0.5em}
\end{figure}

\subsection{Rainforest Tree Point Process Modelling}
\label{exp:rainforest}
We consider a subset of the extensive rain forest data set credited to \citet{rainforest,Condit:2005,Hubbell:1999} in which the locations of $12929$ \emph{trichilia tuberculata} have been recorded.
This data is modelled via an inhomogeneous Poisson process, whose log-intensity is given a GP prior.
Inference is framed in terms of a latent GP with a Poisson likelihood over a discrete collection of bins.
The methodology of \citet{Solin:2018:Infinite-Horizon_Gaussian_Processes} is adapted to accelerate inference of the latent processes, which demonstrates the ability of the OILMM to be combined with existing scaling techniques in a plug-and-play fashion.
Inference in the kernel parameters and log-intensity process utilise a simple blocked Gibbs sampler.

It takes roughly three hours\footnote{
    3.6 GHz Intel Core i7 processor and 48 GB RAM
} to perform $10^5$ iterations of MCMC (\textit{circa} $10^5$ marginal likelihood evaluations and $10^6$ prior samples) with $20000$ bins, demonstrating the feasibility of a computationally demanding choice of approximate inference procedure.
The Kronecker product factorisation technique \citep[][Ch.~5]{saatcci2012scalable} is a competitive method in this setting, as it can also efficiently and exactly compute log marginal likelihoods and generate prior samples efficiently.
\cref{fig:timing-ratios} shows the trade off between the two approaches to inference.
In this experiment we define $p = n / 2$, meaning that the approach described in \cref{sec:spatio-temporal} scales cubically in $n$.
Despite their quite different implementation details, they do obtain similar performance, with the OILMM performing relatively better as $n$ increases.
See \cref{app:point-process} for further experimental details and analysis.

\begin{table}[t]\small\centering%
    \caption{
        Root-mean-square error (RMSE) and
        normalised posterior predictive log-probability (PPLP)
        of held-out test data for the OILMM with varying $m$ and independent GPs (IGP) in the temperature extrapolation experiment.
        The OILMM achieves parity in RMSE with IGP at $m=200$ and surpasses it in PPLP at $m=5$.
    }
    \vspace{1em}
    \begin{tabular}{
        @{}
        c@{\hspace{6pt}}
        l@{\hspace{4pt}}
        c@{\hspace{4pt}}
        c@{\hspace{4pt}}
        c@{\hspace{4pt}}
        c@{\hspace{4pt}}
        c@{\hspace{4pt}}
        c@{}
    }
        \toprule
        & \multicolumn{1}{c}{$m$} & $1$ & $5$ & $50$ & $100$ & $200$ & $247$ \\ \midrule
        \multirow{2}{*}{\rotatebox[origin=c]{90}{\scriptsize RMSE}}
        & OILMM & $2.151$ & $2.072$ & $2.030$ & $2.002$ & $1.992$ & $1.991$ \\ 
        & IGP & $1.993$ & $1.993$ & $1.993$ & $1.993$ & $1.993$ & $1.993$ \\ \midrule
        \multirow{2}{*}{\rotatebox[origin=c]{90}{\scriptsize PPLP \hspace*{-2pt}}}
        & OILMM & $-1.976$ & $-1.457$ & $-0.905$ & $-0.774$ & $-0.600$ & $-0.525$ \\ 
        & IGP & $-1.923$ & $-1.923$ & $-1.923$ & $-1.923$ & $-1.923$ & $-1.923$ \\ \bottomrule
    \end{tabular}
    \label{tab:temperature_extrapolation_results}
    \vspace*{0.5em}
\end{table}

\subsection{Temperature Extrapolation}
\label{exp:temp_extrapolation}
Having demonstrated that the OILMM offers computational benefits, we now show that the method can scale to large numbers of latent processes ($m=p=247$) to capture meaningful dependencies between outputs.
We consider a simple spatio--temporal temperature prediction problem over Europe.
Approximately 30 years worth of the ERA-Interim reanalysis temperature data\footnote{%
    \label{footnote:regridding}%
    All output from CMIP5 and ERA-Interim models was regridded onto the latitude--longitude grid used for the IPSL-CM5A-LR model.
} \parencite{dee2011era} is smoothed in time with a Hamming window of width 31 and sub-sampled once every 31 days to produce a data set comprising $13 \times 19 = 247$ outputs and approximately $350$ months worth of data.
We train the OILMM and IGPs (both models use \Matern--$5/2$ kernels with a periodic component) on the first 250 months of the data and test on the next 100 months.
For the OILMM, we use a range of numbers of latent processes, up to $m = p = 247$, and let the basis $H$ be given by the eigenvectors of the kernel matrix over the points in space (\Matern--$5/2$ with a different length scale for latitude and longitude).

\cref{tab:temperature_extrapolation_results} summarises the performance of the models; more detailed graphs can be found in \cref{app:temp}.
The OILMM achieves parity in RMSE with IGP at $m=200$ latent processes---the data is highly periodic and the predictions are accurate for both models.
Moreover, the OILMM requires only $m=5$ latent processes to achieve a better PPLP than IGP and continues to improve with increasing $m$, demonstrating the need for a large number of latent processes.

\begin{figure*}[t]
    \centering
    \includegraphics[width=\linewidth]{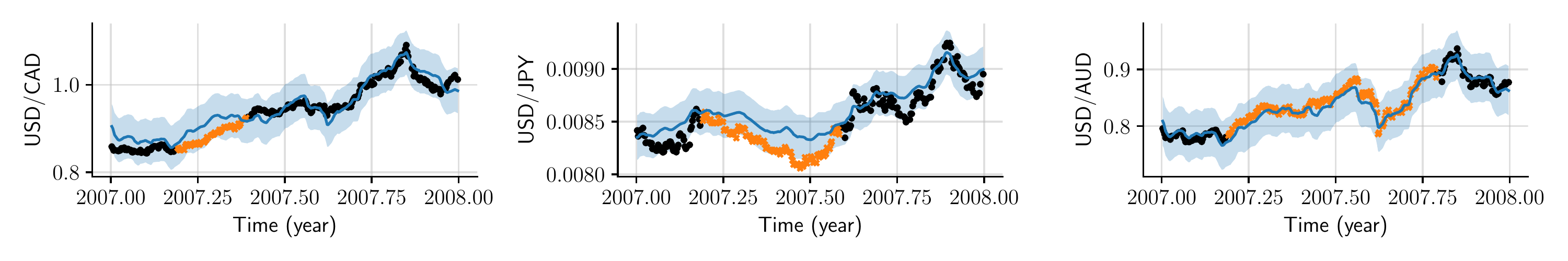}
    \includegraphics[width=\linewidth]{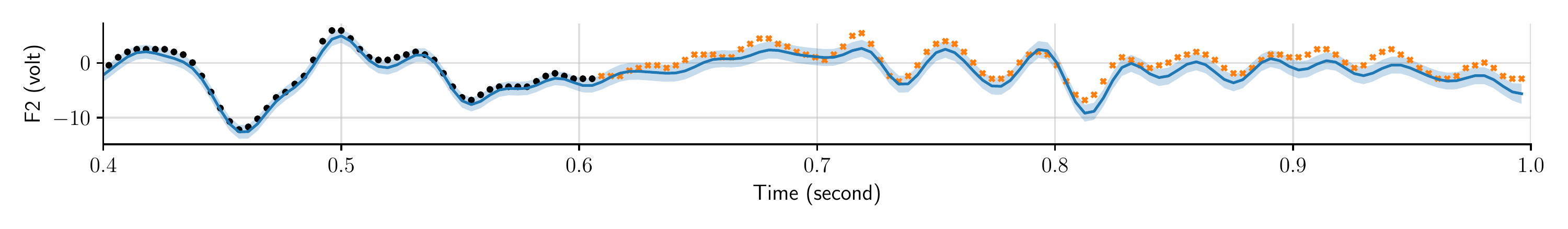}
    \vspace{-2.5em}
    \caption{
        Predictions of the OILMM for the exchange rates experiment (top) and for one of the seven electrodes (F2) in the EEG experiment (bottom).
        Predictions are shown in blue, denoting the mean and central $95\%$ credible region.
        Training data are denoted as black dots ($\bullet$) and held-out test data as orange crosses (\mplorange{$\times$}).
    }
    \vspace*{-1em}
    \label{fig:prediction_ex_and_eeg}
\end{figure*}

\subsection{Exchange Rates Prediction}
\label{exp:exchange}
In this experiment and the next, we test the orthogonality assumption and missing data approximation of the OILMM by comparing its performance to an equivalent ILMM with no restrictions on $H$ and which deals exactly with missing data.
We consider daily exchange rates with respect to USD of the top ten international currencies and three precious materials in the year 2007.
The task is to predict CAD, JPY, and AUD on particular days given that all other currencies are observed throughout the whole year;
we exactly follow \citet{Requeima:2019:The_Gaussian_Process_Autoregressive_Regression} in the data and setup of the experiment.
For the (O)ILMM, we use $m=3$ latent processes with \Matern--$1/2$ kernels and randomly initialise and learn the basis $H$.

\newpage
\cref{tab:results_er_and_eeg} shows that the ILMM and OILMM have identical performance.
This shows that the orthogonality assumption and missing data approximation of the OILMM can work well in practice.

\begin{table}[t]\small\centering%
    \caption{%
        Standardised mean-squared error (SMSE) and normalised posterior predictive log-probability (PPLP) of held-out test data for various models in the exchange rates (ER) and EEG experiment.
        IGP stands for independent GPs.
        The references in square brackets are to models in \cref{fig:mixing_model_hierarchy} in \cref{app:mmh}.
        GPAR (right column) is not a linear MOGP, and thus not comparable to the other methods. However, it is state-of-the-art on both tasks and hence provided as reference.
        The ILMM and OILMM achieve results equal up to two decimal places.
        ${}^*$Numbers are taken from \citet{Nguyen:2014:Collaborative_Multi-Output}.
        ${}^\dagger$Numbers are taken from \citet{Requeima:2019:The_Gaussian_Process_Autoregressive_Regression}.
    }
    \vspace{1em}
    \setlength{\tabcolsep}{3pt}
    \begin{tabular}{
        @{}
        c@{\hspace{5pt}}
        l@{\hspace{3pt}}
        c@{\hspace{3pt}}
        c@{\hspace{3pt}}
        c@{\hspace{3pt}}
        c@{\hspace{3pt}}
        c|
        c@{}
    }
        \toprule
        && IGP           & CMOGP$^{\text{[11]}}\!$ & CGP$^{\text{[14]}}\!$ & ILMM   & OILMM  & GPAR$^{\text{[18]}}\!$ \\ \midrule
        \multirow{2}{*}{\rotatebox[origin=c]{90}{\scriptsize SMSE}}
        & ER  & $0.60^*$       & $0.24^*$   & $0.21^*$ & $0.19$ & $0.19$ & $0.03^\dagger$ \\
        & EEG & $1.75^\dagger$ &            &          & $0.49$ & $0.49$ & $0.26^\dagger$ \\ \midrule
        \multirow{2}{*}{\rotatebox[origin=c]{90}{\scriptsize PPLP\hspace*{-1pt}}}
        & ER  &  $3.57$ &            &          & $3.39$ & $3.39$ & \\
        & EEG & $-1.27$ &            &          & $-2.11$ & $-2.11$ & \\
        \bottomrule
    \end{tabular}
    \label{tab:results_er_and_eeg}
    \vspace*{-0.5em}
\end{table}

\subsection{Electroencephalogram Prediction}
\label{exp:eeg}
We consider 256 voltage measurements from 7 electrodes placed on a subject's scalp while the subject is shown a certain image;
\citet{Zhang:1995:Event_Related_Potentials_During_Object} describes the data collection process in detail.
The task is to predict the last 100 samples for three electrodes given that the remainder of the data is observed;
we exactly follow \citet{Requeima:2019:The_Gaussian_Process_Autoregressive_Regression} in the data and setup of the experiment.
For the (O)ILMM, we use $m=3$ latent processes with exponentiated quadratic kernels and randomly initialise and learn $H$.

\cref{tab:results_er_and_eeg} shows that the ILMM and OILMM again have identical performance.
This again shows that the orthogonality assumption does not harm the model's predictive power and that the missing data approximation can work well.

\begin{figure*}[t]
    \centering\footnotesize
    \begin{subfigure}[b]{.45\textwidth}
        \centering
        \includegraphics[width=\linewidth]{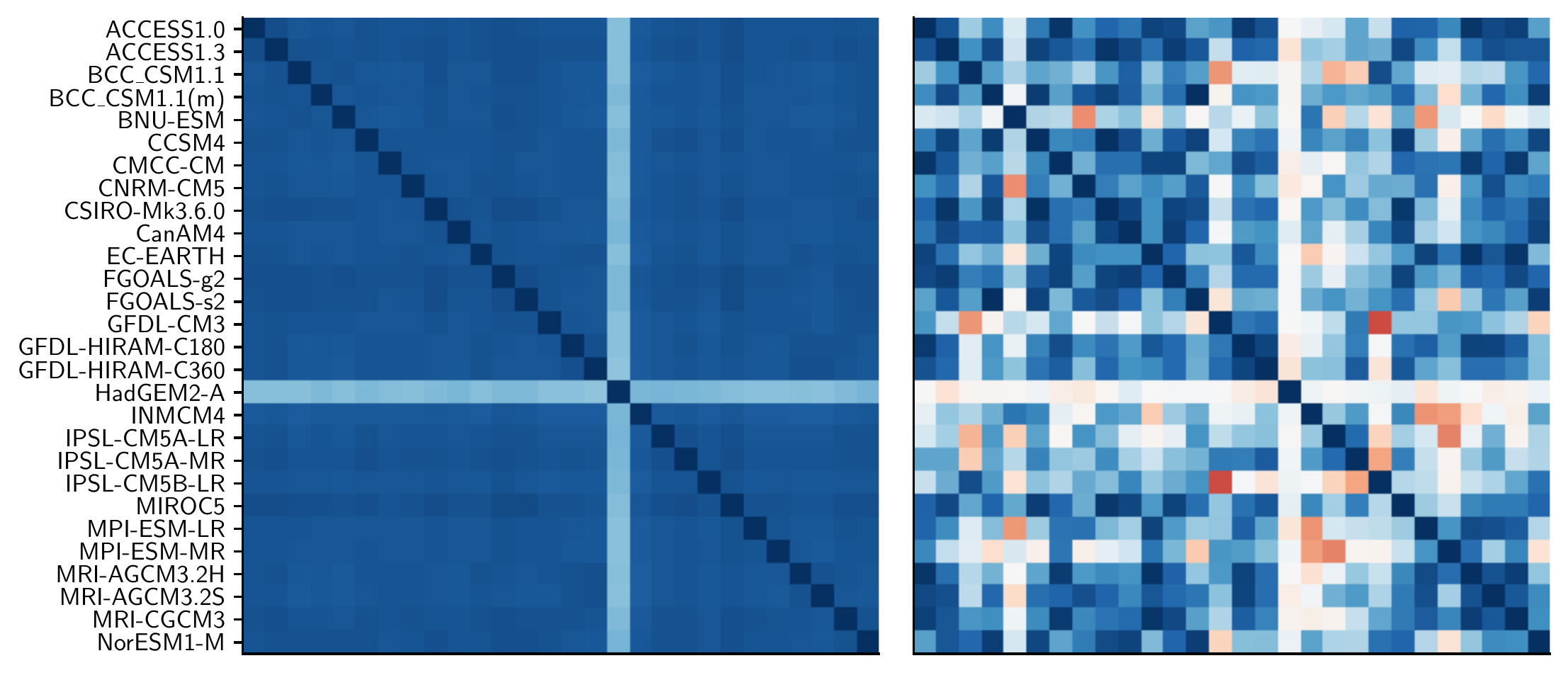}
        \caption{Empirical (left) and learned (right) correlations}
        \label{fig:correlations}    
    \end{subfigure}
    \hspace*{\fill}
    \begin{subfigure}[b]{.20\textwidth}
        \centering
        \includegraphics[width=\linewidth]{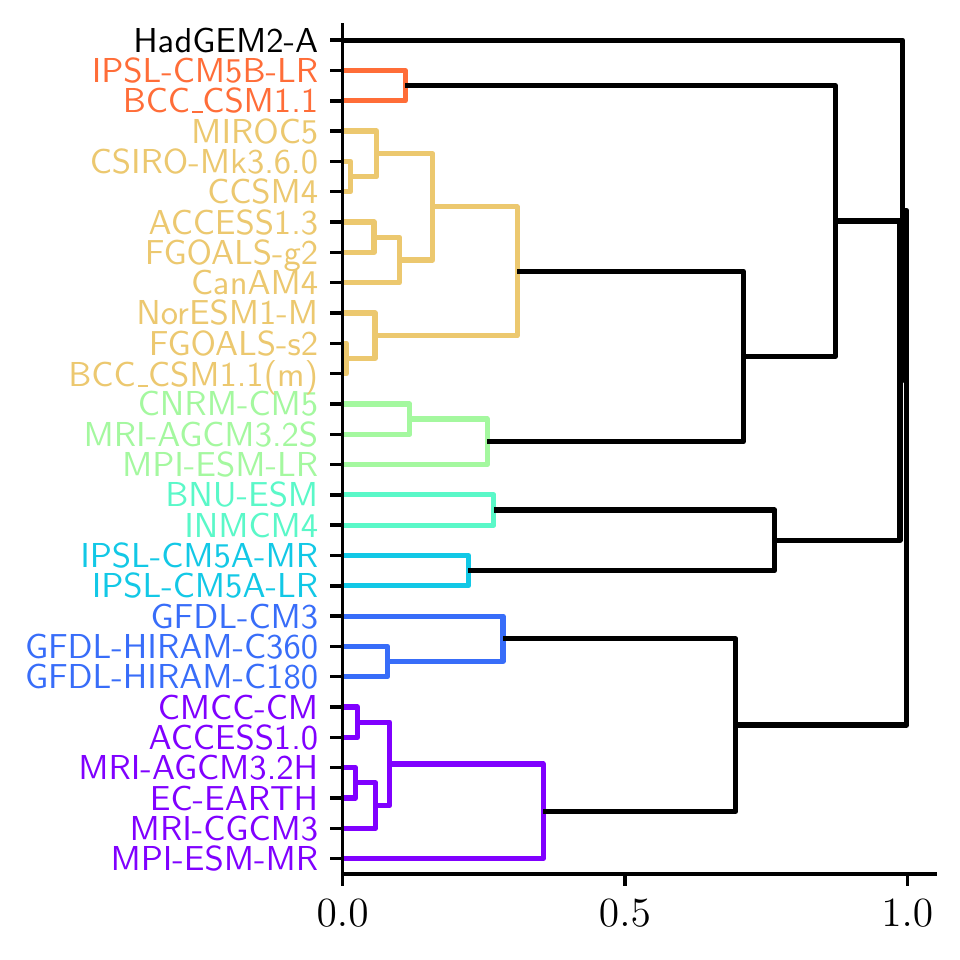}
        \caption{Clustering}
        \label{fig:dendrogram}    
    \end{subfigure}
    \hspace*{\fill}
    \begin{subfigure}[b]{.32\textwidth}
        \centering
        \includegraphics[width=\linewidth]{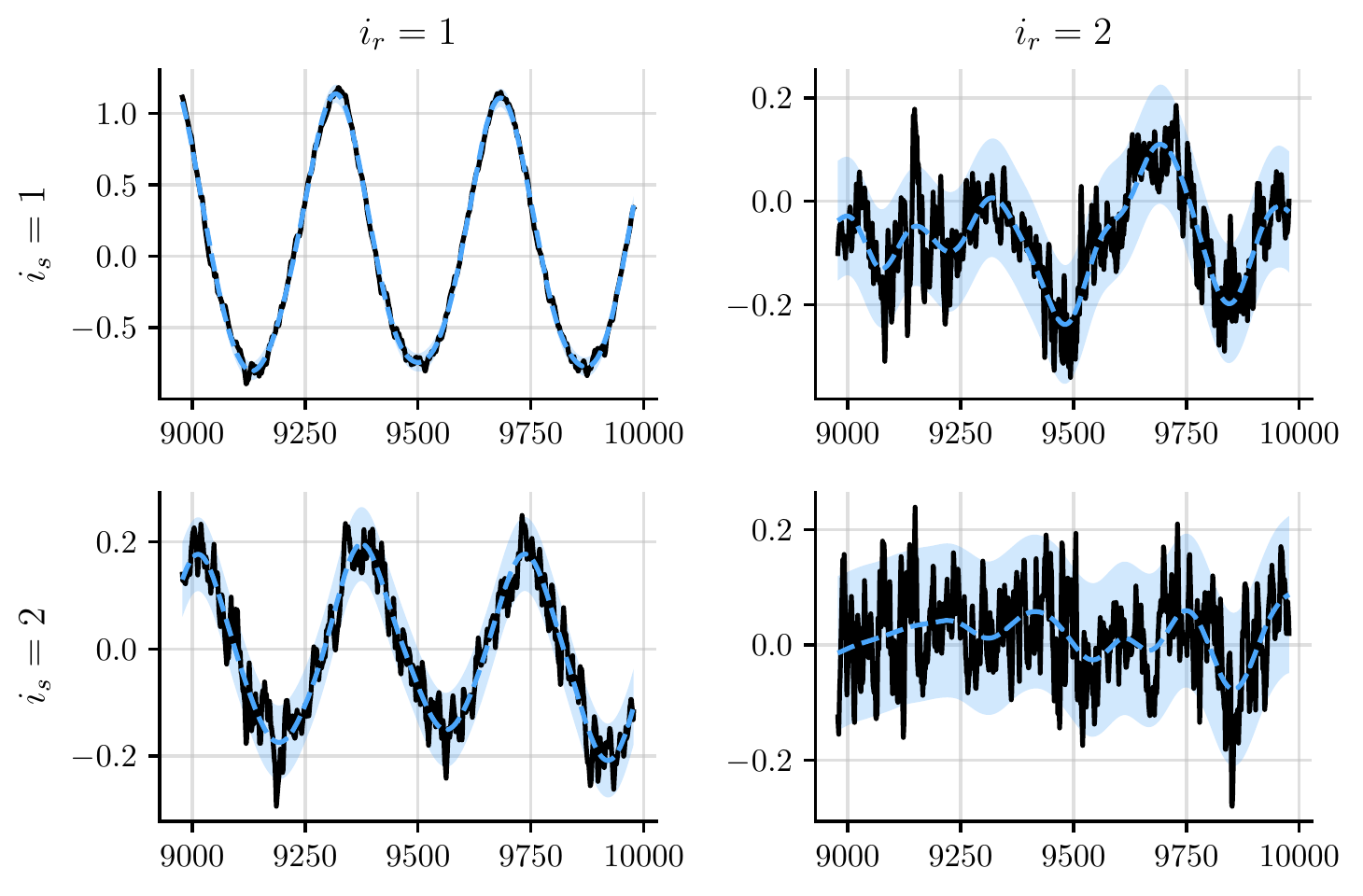}
        \caption{Latent processes}
        \label{fig:latents}        
    \end{subfigure} 
    \caption{
        Results of the large-scale climate simulator experiment, showing
        (a) the empirical correlations and learned correlations ($K_s$) between the simulators,
        (b) a dendrogram deriving from hierarhically clustering the simulators based on the learned correlations where the colours indicate discovered groups,
        and (c) predictions for the latent processes for the first two eigenvectors of the covariance matrix between simulators $i_s=1,2$ and the first two eigenvectors of the spatial covariance $i_r=1,2$, for the last 1000 days.
        Predictions are shown in blue, denoting the mean and central $95\%$ credible region.
    }
    \label{fig:climate}
\end{figure*}

\subsection{Large-Scale Climate model Calibration}
\label{exp:climate}
In this final experiment, we scale to large data in a climate modelling task over Europe.
We use the OILMM to find relationships between 28 climate simulators\footref{footnote:regridding} \citep[see][for background]{taylor2012overview} by letting $H = H_s \kron H_r$ (see \cref{app:tensor}), where $H_s$ are the first $m_s = 5$ eigenvectors of a $28\times 28$ covariance matrix $K_s$ between the simulators, and $H_r$ are the first $m_r = 10$ eigenvectors of the kernel matrix over the points in space (\Matern--$5/2$ with a different length scale for latitude and longitude).
This means that the $m_rm_s = 50$ latent processes are indexed by two indices $i_s$ and $i_r$, one corresponding to the eigenvector of the simulator covariance and one to the eigenvector of the spatial covariance.
The kernels for the latent processes are \Matern--$5/2$.
We consider $n=10000$ time points for all 28 simulators, each with 247 outputs, giving a total of roughly 70 million data points.
For the independent problems, we use the variational inducing point method by \textcite{Titsias:2009:Variational_Learning}.

\cref{fig:correlations} shows that, as opposed to the empirical correlations, which ignore all temporal structure, the correlations learned by the OILMM exhibit a rich structure.
A clustering of these correlations in \cref{fig:dendrogram} reveals that the identified structure is meaningful, because structurally similar simulators are grouped near each other.
We conclude that the OILMM can be used to analyse large data in a simple and straightforward way, and is able to produce interpretable and meaningful results.
See \cref{app:climate} for further experimental details and analysis of the results.

\section{Discussion and Conclusion}
\label{sec:conclusion}

We investigated the use of a sufficient statistic of the data to accelerate inference in MOGPs with orthogonal bases.
In practice, the proposed methodology scales linearly with the number of latent processes $m$, allowing to scale to large $m$ without sacrificing significant expressivity nor requiring any approximations.
This is achieved by breaking down the high-dimensional prediction problem into independent single-output problems, whilst retaining exact inference.
As a consequence, the method is interpretable, extremely simple to implement, and trivially compatible with off-the-shelf single-output GP scaling techniques for handling large numbers of observations.
We tested the method in a variety of experiments,
demonstrating that it offers significant computational benefits without harming predictive performance.
Interesting future directions are the application to non-Gaussian models for the latent processes (see \cref{prop:general_sufficiency})
and targeting sub-linear time complexity by parallelisation \citep[see, \eg,][]{Sarkka:2019:Temporal_Parallelization_of_Bayesian_Filters}.

\section*{Acknowledgements}
WT acknowledges funding from DeepMind.
JSH is supported by EPSRC grant EP/T001569/1, and NERC grant NE/N018028/1.
AS acknowledges funding from the Academy of Finland (grant numbers 308640 and 324345).
RET is supported by Google, Amazon, ARM, Improbable and EPSRC grants EP/M0269571 and EP/L000776/1.

\bibliographystyle{icml2020}
\bibliography{bibliography}

\begin{thebibliography}{71}
\providecommand{\natexlab}[1]{#1}
\providecommand{\url}[1]{\texttt{#1}}
\expandafter\ifx\csname urlstyle\endcsname\relax
  \providecommand{\doi}[1]{doi: #1}\else
  \providecommand{\doi}{doi: \begingroup \urlstyle{rm}\Url}\fi

\bibitem[{\' A}lvarez \& Lawrence(2009){\' A}lvarez and
  Lawrence]{Alvarez:2009:Sparse_Convolved_Gaussian_Processes_for}
{\' A}lvarez, M. and Lawrence, N.~D.
\newblock Sparse convolved {Gaussian} processes for multi-output regression.
\newblock volume~21, pp.\  57--64. Curran Associates, Inc., 2009.

\bibitem[{\'A}lvarez et~al.(2009){\'A}lvarez, Luengo, and
  Lawrence]{Alvarez:2009:Latent_Force_Models}
{\'A}lvarez, M., Luengo, D., and Lawrence, N.
\newblock Latent force models.
\newblock 5:\penalty0 9--16, 2009.

\bibitem[{\'A}lvarez et~al.(2010){\'A}lvarez, Luengo, Titsias, and
  Lawrence]{Alvarez:2010:Efficient_Multioutput_Gaussian_Processes_Through}
{\'A}lvarez, M., Luengo, D., Titsias, M., and Lawrence, N.~D.
\newblock Efficient multioutput {G}aussian processes through variational
  inducing kernels.
\newblock In \emph{Proceedings of the Thirteenth International Conference on
  Artificial Intelligence and Statistics}, volume~9 of \emph{Proceedings of
  Machine Learning Research}, pp.\  25--32. PMLR, 2010.

\bibitem[{\' A}lvarez \& Lawrence(2011){\' A}lvarez and
  Lawrence]{Alvarez:2011:Computationally_Efficient_Convolved}
{\' A}lvarez, M.~A. and Lawrence, N.~D.
\newblock Computationally efficient convolved multiple output {Gaussian}
  processes.
\newblock \emph{Journal of Machine Learning Research}, 12:\penalty0 1459--1500,
  7 2011.

\bibitem[Bellouin et~al.(2011)Bellouin, Collins, Culverwell, Halloran,
  Hardiman, Hinton, Jones, McDonald, McLaren, O'Connor,
  et~al.]{bellouin2011hadgem2}
Bellouin, N., Collins, W., Culverwell, I., Halloran, P., Hardiman, S., Hinton,
  T., Jones, C., McDonald, R., McLaren, A., O'Connor, F., et~al.
\newblock The hadgem2 family of met office unified model climate
  configurations.
\newblock \emph{Geoscientific Model Development}, 4\penalty0 (3):\penalty0
  723--757, 2011.

\bibitem[Bezanson et~al.(2017)Bezanson, Edelman, Karpinski, and
  Shah]{bezanson2017julia}
Bezanson, J., Edelman, A., Karpinski, S., and Shah, V.~B.
\newblock Julia: A fresh approach to numerical computing.
\newblock \emph{SIAM review}, 59\penalty0 (1):\penalty0 65--98, 2017.

\bibitem[Bi et~al.(2013)Bi, Dix, Marsland, O’Farrell, Rashid, Uotila, Hirst,
  Kowalczyk, Golebiewski, Sullivan, et~al.]{bi2013access}
Bi, D., Dix, M., Marsland, S.~J., O’Farrell, S., Rashid, H., Uotila, P.,
  Hirst, A., Kowalczyk, E., Golebiewski, M., Sullivan, A., et~al.
\newblock The access coupled model: description, control climate and
  evaluation.
\newblock \emph{Aust. Meteorol. Oceanogr. J}, 63\penalty0 (1):\penalty0 41--64,
  2013.

\bibitem[Bonilla et~al.(2007)Bonilla, Agakov, and
  Williams]{Bonilla:2007:Kernel_Multi-Task_Learning_Using_Task-Specific}
Bonilla, E.~V., Agakov, F.~V., and Williams, C. K.~I.
\newblock Kernel multi-task learning using task-specific features.
\newblock In \emph{Proceedings of the Eleventh International Conference on
  Artificial Intelligence and Statistics}, volume~2 of \emph{Proceedings of
  Machine Learning Research}, pp.\  43--50. PMLR, 2007.

\bibitem[Bonilla et~al.(2008)Bonilla, Chai, and
  Williams]{Bonilla:2008:Multi-Task_Gaussian_Process}
Bonilla, E.~V., Chai, K.~M., and Williams, C. K.~I.
\newblock Multi-task {Gaussian} process prediction.
\newblock In \emph{Advances in Neural Information Processing Systems},
  volume~20, pp.\  153--160. MIT Press, 2008.

\bibitem[Boyle \& Frean(2005)Boyle and
  Frean]{Boyle:2005:Dependent_Gaussian_Processes}
Boyle, P. and Frean, M.
\newblock Dependent {Gaussian} processes.
\newblock In Saul, L.~K., Weiss, Y., and Bottou, L. (eds.), \emph{Advances in
  Neural Information Processing Systems {17}}, pp.\  217--224. MIT Press, 2005.

\bibitem[Brochu et~al.(2010)Brochu, Cora, and
  de~Freitas]{Brochu:2010:A_Tutorial_on_Bayesian_Optimization}
Brochu, E., Cora, V.~M., and de~Freitas, N.
\newblock {A} tutorial on {Bayesian} optimization of expensive cost functions,
  with application to active user modeling and hierarchical reinforcement
  learning.
\newblock \emph{arXiv preprint arXiv:1012.2599}, 12 2010.

\bibitem[Bruinsma(2016)]{Bruinsma:2016:GGPCM}
Bruinsma, W.~P.
\newblock The generalised {Gaussian} convolution process model.
\newblock Master's thesis, Department of Engineering, University of Cambridge,
  2016.

\bibitem[Bui et~al.(2016)Bui, Hern{\'a}ndez-Lobato, Hernandez-Lobato, Li, and
  Turner]{Bui:2016:Deep_Gaussian_Processes_for_Regression}
Bui, T., Hern{\'a}ndez-Lobato, D., Hernandez-Lobato, J., Li, Y., and Turner, R.
\newblock Deep {G}aussian processes for regression using approximate
  expectation propagation.
\newblock pp.\  1472--1481, 2016.

\bibitem[Bui et~al.(2017)Bui, Yan, and
  Turner]{Bui:2016:A_Unifying_Framework_for_Gaussian}
Bui, T.~D., Yan, J., and Turner, R.~E.
\newblock A unifying framework for {G}aussian process pseudo-point
  approximations using power expectation propagation.
\newblock \emph{Journal of Machine Learning Research}, 18\penalty0
  (104):\penalty0 1--72, 2017.

\bibitem[Candela \& Rasmussen(2005)Candela and
  Rasmussen]{Quinonero:2005:Unifying_View}
Candela, J.~Q. and Rasmussen, C.~E.
\newblock {A} unifying view of sparse approximate {Gaussian} process
  regression.
\newblock \emph{Journal of Machine Learning Research}, 6:\penalty0 1939--1959,
  12 2005.

\bibitem[Casella \& Berger(2001)Casella and
  Berger]{Casella:2001:Statistical_Inference}
Casella, G. and Berger, R.
\newblock \emph{Statistical Inference}.
\newblock Duxbury Resource Center, 6 2001.

\bibitem[{Chen} \& {Revels}(2016){Chen} and {Revels}]{BenchmarkTools.jl-2016}
{Chen}, J. and {Revels}, J.
\newblock {Robust benchmarking in noisy environments}.
\newblock \emph{arXiv e-prints}, art. arXiv:1608.04295, Aug 2016.

\bibitem[Cheng \& Boots(2017)Cheng and
  Boots]{Cheng:2017:Variational_Inference_for_Gaussian_Process}
Cheng, C.-A. and Boots, B.
\newblock Variational inference for {G}aussian process models with linear
  complexity.
\newblock In \emph{Advances in Neural Information Processing Systems}, pp.\
  5184--5194. Curran Associates, Inc., 2017.

\bibitem[Condit(1998)]{Condit:2005}
Condit, R.
\newblock \emph{Tropical Forest Census Plots}.
\newblock Springer-Verlag and R. G. Landes Company, Berlin, Germany, and
  Georgetown, Texas, 1998.

\bibitem[Dahl \& Bonilla(2019)Dahl and
  Bonilla]{Dahl:2018:Grouped_Gaussian_Processes_for_Solar}
Dahl, A. and Bonilla, E.~V.
\newblock Grouped {Gaussian} processes for solar power prediction.
\newblock \emph{Machine Learning}, 108\penalty0 (8-9):\penalty0 1287--1306,
  2019.

\bibitem[Damianou(2015)]{Damianou:2015:Deep_Gaussian_Processes_and_Variational}
Damianou, A.
\newblock \emph{Deep {Gaussian} Processes and Variational Propagation of
  Uncertainty}.
\newblock PhD thesis, Department of Neuroscience, University of Sheffield,
  2015.

\bibitem[Dee et~al.(2011)Dee, Uppala, Simmons, Berrisford, Poli, Kobayashi,
  Andrae, Balmaseda, Balsamo, Bauer, et~al.]{dee2011era}
Dee, D.~P., Uppala, S., Simmons, A., Berrisford, P., Poli, P., Kobayashi, S.,
  Andrae, U., Balmaseda, M., Balsamo, G., Bauer, d.~P., et~al.
\newblock The era-interim reanalysis: Configuration and performance of the data
  assimilation system.
\newblock \emph{Quarterly Journal of the royal meteorological society},
  137\penalty0 (656):\penalty0 553--597, 2011.

\bibitem[Deisenroth \& Rasmussen(2011)Deisenroth and
  Rasmussen]{Deisenroth:2011:PILCO_A_Model-Based_and_Data-Efficient}
Deisenroth, M.~P. and Rasmussen, C.~E.
\newblock {PILCO:} {A} model-based and data-efficient approach to policy
  search.
\newblock volume~28, pp.\  465--472. Omnipress, 2011.

\bibitem[Dezfouli et~al.(2017)Dezfouli, Bonilla, and
  Nock]{Dezfouli:2017:Semi-Parametric_Network_Structure_Discovery_Models}
Dezfouli, A., Bonilla, E.~V., and Nock, R.
\newblock Semi-parametric network structure discovery models.
\newblock \emph{arXiv preprint arXiv:1702.08530}, 2 2017.

\bibitem[Duvenaud(2014)]{Duvenaud:2014:Automatic_Construction}
Duvenaud, D.
\newblock \emph{Automatic Model Construction With {Gaussian} Processes}.
\newblock PhD thesis, Computational and Biological Learning Laboratory,
  University of Cambridge, 2014.

\bibitem[Goovaerts(1997)]{Goovaerts:1997:Geostatistics_for_Natural_Resources_Evaluation}
Goovaerts, P.
\newblock \emph{Geostatistics for Natural Resources Evaluation}.
\newblock Oxford University Press, 1 edition, 1997.

\bibitem[Hartikainen \& S{\"a}rkk{\"a}(2010)Hartikainen and
  S{\"a}rkk{\"a}]{Hartikainen:2010:Kalman_filtering_and_smoothing_solutions}
Hartikainen, J. and S{\"a}rkk{\"a}, S.
\newblock Kalman filtering and smoothing solutions to temporal {G}aussian
  process regression models.
\newblock In \emph{Proceedings of the IEEE International Workshop on Machine
  Learning for Signal Processing (MLSP)}, pp.\  379--384, 2010.

\bibitem[Hartikainen et~al.(2011)Hartikainen, Riihim{\"a}ki, and
  S{\"a}rkk{\"a}]{hartikainen2011sparse}
Hartikainen, J., Riihim{\"a}ki, J., and S{\"a}rkk{\"a}, S.
\newblock Sparse spatio-temporal gaussian processes with general likelihoods.
\newblock In \emph{International Conference on Artificial Neural Networks},
  pp.\  193--200. Springer, 2011.

\bibitem[Hastings(1970)]{hastings1970monte}
Hastings, W.~K.
\newblock Monte carlo sampling methods using markov chains and their
  applications.
\newblock 1970.

\bibitem[Hennig et~al.(2015)Hennig, Osborne, and
  Girolami]{Hennig:2015:Probabilistic_Numerics_and_Uncertainty_in}
Hennig, P., Osborne, M.~A., and Girolami, M.
\newblock Probabilistic numerics and uncertainty in computations.
\newblock \emph{Proceedings of the Royal Society of London {A:} Mathematical,
  Physical and Engineering Sciences}, 471\penalty0 (2179), 2015.

\bibitem[Hensman et~al.(2013)Hensman, Fusi, and
  Lawrence]{Hensman:2013:Gaussian_Processes_for_Big_Data}
Hensman, J., Fusi, N., and Lawrence, N.~D.
\newblock Gaussian processes for big data.
\newblock In \emph{Proceedings of the 29th Conference on Uncertainty in
  Artificial Intelligence}, pp.\  282--290, 2013.

\bibitem[Hensman et~al.(2018)Hensman, Durrande, and
  Solin]{Hensman:2018:Variational_Fourier_Features_for_Gaussian}
Hensman, J., Durrande, N., and Solin, A.
\newblock Variational fourier features for {Gaussian} processes.
\newblock \emph{Journal of Machine Learning Research}, 18\penalty0
  (151):\penalty0 1--52, 2018.

\bibitem[Higdon et~al.(2008)Higdon, Gattiker, Williams, and
  Rightley]{Higdon:2008:Computer_Model_Calibration_Using_High-Dimensional}
Higdon, D., Gattiker, J., Williams, B., and Rightley, M.
\newblock Computer model calibration using high-dimensional output.
\newblock \emph{Journal of the American Statistical Association}, 103\penalty0
  (482):\penalty0 570--583, 2008.
\newblock ISSN 01621459.
\newblock URL \url{http://www.jstor.org/stable/27640080}.

\bibitem[Hubbell et~al.(1999)Hubbell, Foster, O'Brien, Harms, Condit, Wechsler,
  Wright, and De~Lao]{Hubbell:1999}
Hubbell, S., Foster, R., O'Brien, S., Harms, K., Condit, R., Wechsler, B.,
  Wright, S., and De~Lao, S.
\newblock Light-gap disturbances, recruitment limitation, and tree diversity in
  a neotropical forest.
\newblock \emph{Science}, 283\penalty0 (5401):\penalty0 554--557, 1999.

\bibitem[Hubbell et~al.(2005)Hubbell, Condit, and Foster]{rainforest}
Hubbell, S., Condit, R., and Foster, R.
\newblock Barro colorado forest census plot data.
\newblock {URL}: \url{https://ctfs.arnarb.harvard.edu/webatlas/datasets/bci},
  2005.

\bibitem[Kaiser et~al.(2018)Kaiser, Otte, Runkler, and
  Ek]{Kaiser:2017:Bayesian_Alignments_of_Warped_Multi-Output}
Kaiser, M., Otte, C., Runkler, T., and Ek, C.~H.
\newblock Bayesian alignments of warped multi-output {G}aussian processes.
\newblock In \emph{Advances in Neural Information Processing Systems}, pp.\
  6995--7004, 2018.

\bibitem[Kingma \& Welling(2013)Kingma and
  Welling]{Kingma:2013:Auto-Encoding_VB}
Kingma, D.~P. and Welling, M.
\newblock Auto-encoding variational {Bayes}.
\newblock \emph{arXiv preprint arXiv:1312.6114}, 12 2013.

\bibitem[L{\' a}zaro-Gredilla et~al.(2010)L{\' a}zaro-Gredilla, Candela,
  Rasmussen, and
  Figueiras-Vidal]{Lazaro-Gredilla:2010:Sparse_Spectrum_Gaussian_Process_Regression}
L{\' a}zaro-Gredilla, M., Candela, J.~Q., Rasmussen, C.~E., and
  Figueiras-Vidal, A.~R.
\newblock Sparse spectrum {Gaussian} process regression.
\newblock \emph{Journal of Machine Learning Research}, 11:\penalty0 1865--1881,
  2010.

\bibitem[MacKay(2002)]{MacKay:2002:Information_Theory_Learning}
MacKay, D. J.~C.
\newblock \emph{Information Theory, Inference {\&} Learning Algorithms}.
\newblock Cambridge University Press, 2002.

\bibitem[Matheron(1969)]{Matheron:1969:Le_Krigeage_Universel}
Matheron, G.
\newblock Le krigeage universel.
\newblock In \emph{Cahiers du Centre de morphologie math{\' e}matique de
  Fontainebleau}, volume~1. {\' E}cole nationale sup{\' e}rieure des mines de
  Paris, 1969.

\bibitem[Minka(2000)]{Minka:2000:Quadrature_GP}
Minka, T.
\newblock Deriving quadrature rules from {Gaussian} processes.
\newblock Technical report, 2000.

\bibitem[MISO(2019)]{MISO}
MISO.
\newblock Historical annual real{-}time {LMP}s, 2019.
\newblock URL
  \url{https://www.misoenergy.org/markets-and-operations/real-time--market-data/market-reports/#nt=%2FMarketReportType%3AHistorical%20LMP%2FMarketReportName%3AHistorical%20Annual%20Real-Time%20LMPs%20(zip)&t=10&p=0&s=MarketReportPublished&sd=desc}.

\bibitem[M{\o}ller et~al.(1998)M{\o}ller, Syversveen, and
  Waagepetersen]{Moller+Syversveen+Waagepetersen:1998}
M{\o}ller, J., Syversveen, A.~R., and Waagepetersen, R.~P.
\newblock Log {G}aussian {C}ox processes.
\newblock \emph{Scandinavian Journal of Statistics}, 25\penalty0 (3):\penalty0
  451--482, 1998.

\bibitem[Murray \& Adams(2010)Murray and Adams]{murray2010slice}
Murray, I. and Adams, R.~P.
\newblock Slice sampling covariance hyperparameters of latent {G}aussian
  models.
\newblock In \emph{Advances in Neural Information Processing Systems},
  volume~23, pp.\  1732--1740. Curran Associates, Inc., 2010.

\bibitem[Murray et~al.(2010)Murray, Adams, and
  MacKay]{Murray:2010:Elliptical_Slice_Sampling}
Murray, I., Adams, R., and MacKay, D.
\newblock Elliptical slice sampling.
\newblock In \emph{Proceedings of the Thirteenth International Conference on
  Artificial Intelligence and Statistics}, volume~9 of \emph{Proceedings of
  Machine Learning Research}, pp.\  541--548. PMLR, 13--15 May 2010.

\bibitem[Nguyen \& Bonilla(2014)Nguyen and
  Bonilla]{Nguyen:2014:Collaborative_Multi-Output}
Nguyen, T.~V. and Bonilla, E.~V.
\newblock Collaborative multi-output {Gaussian} processes.
\newblock In \emph{Conference on Uncertainty in Artificial Intelligence},
  volume~30, 2014.

\bibitem[Nickisch et~al.(2018)Nickisch, Solin, and
  Grigorievskiy]{nickisch2018state}
Nickisch, H., Solin, A., and Grigorievskiy, A.
\newblock State space gaussian processes with non-gaussian likelihood.
\newblock \emph{arXiv preprint arXiv:1802.04846}, 2018.

\bibitem[Nocedal \& Wright(2006)Nocedal and
  Wright]{Nocedal:2006:Numerical_Optimisation}
Nocedal, J. and Wright, S.~J.
\newblock \emph{Numerical Optimization}.
\newblock Springer, 2 edition, 2006.

\bibitem[Osborne et~al.(2008)Osborne, Roberts, Rogers, Ramchurn, and
  Jennings]{Osborne:2008:Towards_Real-Time_Information_Processing_of}
Osborne, M.~A., Roberts, S.~J., Rogers, A., Ramchurn, S.~D., and Jennings,
  N.~R.
\newblock Towards real-time information processing of sensor network data using
  computationally efficient multi-output {Gaussian} processes.
\newblock In \emph{Proceedings of the 7th International Conference on
  Information Processing in Sensor Networks}, {IPSN} {'08}, pp.\  109--120.
  IEEE Computer Society, 2008.

\bibitem[Parra \& Tobar(2017)Parra and
  Tobar]{Parra:2017:Spectral_Mixture_Kernels_for_Multi-Output}
Parra, G. and Tobar, F.
\newblock Spectral mixture kernels for multi-output {G}aussian processes.
\newblock In \emph{Advances in Neural Information Processing Systems}, pp.\
  6681--6690. Curran Associates, Inc., 2017.

\bibitem[Rasmussen \& Williams(2006)Rasmussen and
  Williams]{Rasmussen:2006:Gaussian_Processes}
Rasmussen, C.~E. and Williams, C. K.~I.
\newblock \emph{{Gaussian} Processes for Machine Learning}.
\newblock MIT Press, 2006.

\bibitem[Requeima et~al.(2019)Requeima, Tebbutt, Bruinsma, and
  Turner]{Requeima:2019:The_Gaussian_Process_Autoregressive_Regression}
Requeima, J., Tebbutt, W., Bruinsma, W., and Turner, R.~E.
\newblock The {Gaussian} process autoregressive regression model {(GPAR)}.
\newblock In Chaudhuri, K. and Sugiyama, M. (eds.), \emph{Proceedings of
  Machine Learning Research}, volume~89 of \emph{Proceedings of Machine
  Learning Research}, pp.\  1860--1869. PMLR, 4 2019.

\bibitem[Roweis \& Ghahramani(1999)Roweis and
  Ghahramani]{Roweis:1999:A_Unifying_Review_of_Linear}
Roweis, S. and Ghahramani, Z.
\newblock {A} unifying review of linear {Gaussian} models.
\newblock \emph{Neural Computation}, 11\penalty0 (2):\penalty0 305--345, 1999.
\newblock \doi{10.1162/089976699300016674}.
\newblock URL \url{https://doi.org/10.1162/089976699300016674}.

\bibitem[Saat{\c{c}}i(2012)]{saatcci2012scalable}
Saat{\c{c}}i, Y.
\newblock \emph{Scalable Inference for Structured {G}aussian Process Models}.
\newblock PhD thesis, University of Cambridge, Cambridge, UK, 2012.

\bibitem[S{\" a}rkk{\" a} \& Garc{\' i}a-Fern{\' a}ndez(2019)S{\" a}rkk{\" a}
  and Garc{\' i}a-Fern{\'
  a}ndez]{Sarkka:2019:Temporal_Parallelization_of_Bayesian_Filters}
S{\" a}rkk{\" a}, S. and Garc{\' i}a-Fern{\' a}ndez, {\' A}.~F.
\newblock Temporal parallelization of {Bayesian} filters and smoothers.
\newblock \emph{arXiv preprint arXiv:1905.13002}, 5 2019.

\bibitem[S{\"a}rkk{\"a} \& Solin(2019)S{\"a}rkk{\"a} and
  Solin]{Sarkka:2019:Applied_Stochastic_Differential_Equations}
S{\"a}rkk{\"a}, S. and Solin, A.
\newblock \emph{Applied Stochastic Differential Equations}.
\newblock Institute of Mathematical Statistics Textbooks. Cambridge University
  Press, 2019.

\bibitem[S\"arkk\"a et~al.(2013)S\"arkk\"a, Solin, and
  Hartikainen]{Sarkka:2013:Spatiotemporal_Learning_via}
S\"arkk\"a, S., Solin, A., and Hartikainen, J.
\newblock Spatiotemporal learning via infinite-dimensional {B}ayesian filtering
  and smoothing.
\newblock \emph{IEEE Signal Processing Magazine}, 30\penalty0 (4):\penalty0
  51--61, 2013.

\bibitem[Solin et~al.(2018)Solin, Hensman, and
  Turner]{Solin:2018:Infinite-Horizon_Gaussian_Processes}
Solin, A., Hensman, J., and Turner, R.~E.
\newblock Infinite-horizon {Gaussian} processes.
\newblock In \emph{Advances in Neural Information Processing Systems},
  volume~31. Curran Associates, Inc., 2018.

\bibitem[Taylor et~al.(2012)Taylor, Stouffer, and Meehl]{taylor2012overview}
Taylor, K.~E., Stouffer, R.~J., and Meehl, G.~A.
\newblock An overview of {CMIP5} and the experiment design.
\newblock \emph{Bulletin of the American Meteorological Society}, 93\penalty0
  (4):\penalty0 485--498, 2012.

\bibitem[Teh \& Seeger(2005)Teh and
  Seeger]{Teh:2005:Semiparametric_Latent_Factor}
Teh, Y.~W. and Seeger, M.
\newblock Semiparametric latent factor models.
\newblock In \emph{International Workshop on Artificial Intelligence and
  Statistics}, volume~10, 2005.

\bibitem[Tipping \& Bishop(1999)Tipping and
  Bishop]{Tipping:1999:Probabilistic_Principal_Component_Analysis}
Tipping, M.~E. and Bishop, C.~M.
\newblock Probabilistic principal component analysis.
\newblock \emph{Journal of the Royal Statistical Society, Series {B}},
  61\penalty0 (3):\penalty0 611--622, 1999.

\bibitem[Titsias(2009)]{Titsias:2009:Variational_Learning}
Titsias, M.
\newblock Variational learning of inducing variables in sparse {G}aussian
  processes.
\newblock In \emph{Proceedings of the Twelth International Conference on
  Artificial Intelligence and Statistics}, volume~5 of \emph{Proceedings of
  Machine Learning Research}, pp.\  567--574. PMLR, 2009.

\bibitem[Tokdar \& Ghosh(2007)Tokdar and Ghosh]{Tokdar+Ghosh:2007}
Tokdar, S.~T. and Ghosh, J.~K.
\newblock Posterior consistency of logistic {G}aussian process priors in
  density estimation.
\newblock \emph{Journal of Statistical Planning and Inference}, 137\penalty0
  (1):\penalty0 34--42, 2007.

\bibitem[Ulrich et~al.(2015)Ulrich, Carlson, Dzirasa, and
  Carin]{Ulrich:2015:Cross_Spectrum}
Ulrich, K., Carlson, D.~E., Dzirasa, K., and Carin, L.
\newblock {GP} kernels for cross-spectrum analysis, 2015.

\bibitem[Wackernagel(2003)]{Wackernagel:2003:Multivariate_Geostatistics}
Wackernagel, H.
\newblock \emph{Multivariate Geostatistics}.
\newblock Springer-Verlag Berlin Heidelberg, 3 edition, 2003.

\bibitem[Wilson \& Nickisch(2015)Wilson and
  Nickisch]{Wilson:2015:Kernel_Interpolation_for_Scalable_Structured}
Wilson, A. and Nickisch, H.
\newblock Kernel interpolation for scalable structured {G}aussian processes
  ({KISS-GP}).
\newblock 37:\penalty0 1775--1784, 2015.

\bibitem[Wilson et~al.(2012)Wilson, Knowles, and
  Ghahramani]{Wilson:2012:GP_Regression_Networks}
Wilson, A.~G., Knowles, D.~A., and Ghahramani, Z.
\newblock {Gaussian} process regression networks.
\newblock In \emph{International Conference on Machine Learning}, volume~29.
  Omnipress, 2012.

\bibitem[Wu et~al.(2014)Wu, Song, Li, Wang, Zhang, Xin, Zhang, Zhang, Li, Wu,
  et~al.]{wu2014overview}
Wu, T., Song, L., Li, W., Wang, Z., Zhang, H., Xin, X., Zhang, Y., Zhang, L.,
  Li, J., Wu, F., et~al.
\newblock An overview of bcc climate system model development and application
  for climate change studies.
\newblock \emph{Journal of Meteorological Research}, 28\penalty0 (1):\penalty0
  34--56, 2014.

\bibitem[Yu et~al.(2009)Yu, Cunningham, Santhanam, Ryu, Shenoy, and
  Sahani]{Yu:2009:Gaussian-Process_Factor_Analysis_for_Low-Dimensional}
Yu, B.~M., Cunningham, J.~P., Santhanam, G., Ryu, S.~I., Shenoy, K.~V., and
  Sahani, M.
\newblock {Gaussian-Process} factor analysis for low-dimensional single-trial
  analysis of neural population activity.
\newblock In \emph{Advances in Neural Information Processing Systems},
  volume~21, pp.\  1881--1888. Curran Associates, Inc., 2009.

\bibitem[Zhang et~al.(1995)Zhang, Begleiter, Porjesz, Wang, and
  Litke]{Zhang:1995:Event_Related_Potentials_During_Object}
Zhang, X., Begleiter, H., Porjesz, B., Wang, W., and Litke, A.
\newblock Event related potentials during object recognition tasks.
\newblock \emph{Brain Research Bulletin}, 38\penalty0 (6):\penalty0 531--538,
  1995.

\bibitem[Zhe et~al.(2019)Zhe, Xing, and
  Kirby]{Zhe:2019:Scalable_High-Order_Gaussian_Process_Regression}
Zhe, S., Xing, W., and Kirby, R.~M.
\newblock Scalable high-order {Gaussian} process regression.
\newblock In Chaudhuri, K. and Sugiyama, M. (eds.), \emph{Proceedings of
  Machine Learning Research}, volume~89 of \emph{Proceedings of Machine
  Learning Research}, pp.\  2611--2620. PMLR, 4 2019.

\end{thebibliography}


\appendix

\newcommand{\nipstitle}[1]{{%
    \phantomsection\hsize\textwidth\linewidth\hsize%
    \vskip 0.1in%
    \toptitlebar%
    \begin{minipage}{\textwidth}%
        \centering{\Large\bf #1\par}%
    \end{minipage}%
    \bottomtitlebar%
    \addcontentsline{toc}{section}{#1}%
}}

\clearpage
\normalsize\onecolumn

\setcounter{section}{0}

\nipstitle{
    {Supplementary Material:} \\
    Scalable Exact Inference in Multi-Output Gaussian Processes
}
\pagestyle{empty}

\section*{Table of Contents}
\vspace*{5pt}
\newcommand{\extraspace}{2pt}
\begin{tabularx}{\linewidth}{@{\hspace{-1pt}}lL@{\hspace{2pt}}r}
    \ref{app:implementation} & How to Implement the OILMM \dotfill & \pageref{app:implementation} \\[\extraspace]
    \ref{app:mmh} & Unifying Presentation of Multi-Output Gaussian Processes \dotfill & \pageref{app:mmh} \\[\extraspace]
    \ref{app:complexities} & Runtime and Memory Complexities \dotfill & \pageref{app:complexities} \\[\extraspace]
    \ref{app:mle} & Maximum Likelihood Estimate \dotfill & \pageref{app:mle} \\[\extraspace]
    \ref{app:sufficiency} & Sufficient Statistic \dotfill & \pageref{app:sufficiency} \\[\extraspace]
    \ref{app:proof} & Proof of Prop.\ \ref{prop:general_sufficiency} \dotfill & \pageref{app:proof} \\[\extraspace]
    \ref{app:interpretation-likelihood} & Interpretation of the Likelihood \dotfill & \pageref{app:interpretation-likelihood} \\[\extraspace]
    \ref{app:tensor} & Tensor Product Basis \dotfill & \pageref{app:tensor} \\[\extraspace]
    \ref{app:cost_u} & Cost of Parametrising the Basis \dotfill & \pageref{app:cost_u} \\[\extraspace]
    \ref{app:diagonal-noise} & Characterisation of Diagonal Projected Noise \dotfill & \pageref{app:diagonal-noise} \\[\extraspace]
    \ref{app:kl} & Kullback--Leibler Divergence Between and ILMM and OILMM \dotfill & \pageref{app:kl} \\[\extraspace]
    \ref{app:oilmm-proj-and-noise} & OILMM: Projection and Projected Noise \dotfill & \pageref{app:oilmm-proj-and-noise} \\[\extraspace]
    \ref{app:oilmm-likelihood} & OILMM: Likelihood \dotfill & \pageref{app:oilmm-likelihood} \\[\extraspace]
    \ref{app:oilmm-missing-data} & OILMM: Missing Data \dotfill & \pageref{app:oilmm-missing-data} \\[\extraspace]
    \ref{app:oilmm-het-obs-noise} & OILMM: Heterogenous Observation Noise \dotfill & \pageref{app:oilmm-het-obs-noise} \\[\extraspace]
    \ref{app:scaling} & Computational Scaling Experiment (Sec.\ \ref{exp:scaling}) Additional Details \dotfill & \pageref{app:scaling} \\[\extraspace]
    \ref{app:point-process} & Point Process Experiment (Sec.\ \ref{exp:rainforest}) Additional Details and Analysis \dotfill & \pageref{app:point-process} \\[\extraspace]
    \ref{app:temp} & Temperature Extrapolation Experiment (Sec.\ \ref{exp:temp_extrapolation}) Additional Results \dotfill & \pageref{app:temp} \\[\extraspace]
    \ref{app:climate} & Large-Scale Climate Model Calibration Experiment (Sec.\ \ref{exp:climate}) Additional Details and Analysis \dotfill & \pageref{app:climate}
\end{tabularx}

\section*{Notation}
\vspace*{5pt}
\begin{tabularx}{\linewidth}{@{}lL@{}}
    $\lra{\vardot, \vardot}$ & Euclidean inner product \\
    $\norm{\vardot}$ & Euclidean norm \\
    $\norm{\vardot}\ss{op}$ & Operator norm \\
    $\norm{\vardot}_\infty$ & Supremum norm \\
    $\norm{\vardot}_F$ & Frobenius norm \\[.5em]
    $S^\perp$ & Orthogonal complement of $S$  \\[.5em]
    $I_n$ & $n \times n$ identity matrix \\
    $A > 0$ & $A$ is strictly-positive definite \\
    $|A|$ & Determinant of $A$ \\
    $A^\dagger$ & Moore--Penrose pseudo-inverse of $A$ \\
    $\col(A)$ & Column space of $A$ \\
    $A \otimes B$ & Kronecker product of $A$ and $B$ \\[.5em]
    $\Normal(x\cond \mu, \Sigma)$ & Density of the multivariate normal distribution with mean $\mu$ and covariance $\Sigma$ at $x$
\end{tabularx}

\section*{Assumptions}
Throughout the appendix, we assume that the columns of $H$ are linearly independent and that $\Sigma > 0$.
As a consequence, $H^\T \Sigma^{-1} H > 0$.
\vspace*{-1cm}
 
\clearpage

\section{How to Implement the OILMM}
\label{app:implementation}

\subsection{Parameters}
The parameters of the OILMM are as follows:

\begin{tabular}{lll}
    Symbol & Type & Description \\ \midrule
    $U$
        & Truncated orthogonal $p \times m$ matrix 
        & Orthogonal part of the basis $H = U S^{\frac12}$  \\
    $S$
        & Positive, diagonal $m \times m$ matrix
        & Diagonal part of the basis $H = U S^{\frac12}$ \\
    $\sigma^2$
        & Positive scalar
        & Part of the observation noise \\
    $D$
        & Positive, diagonal $m \times m$ matrix
        & Part of the observation noise deriving from the latent processes \\
    $(\th_i)_{i=1}^m$
        & Hyperparameters
        & Hyperparameters for the latent processes, \eg\ kernel parameters
\end{tabular}

\subsection{Inference}
\label{app:subsec:inference}
Inference in the OILMM proceeds in three steps.
Let $Y \in \R^{p \times n}$ be a matrix where the columns correspond to observations.

\paragraph{Projection step.}
In the projection step, we project the data to generate ``observations for the latent processes''.
We denote these observations by $Y\ss{proj} \in \R^{m \times n}$, where again the columns corresponds to observations.
We also construct the ``projected noise'', which is the observation noise under which the latent processes perform their observations.
\begin{enumerate}[topsep=0pt,itemsep=2pt]
    \item
        Construct the projection:
        \[
            T = S^{-\frac12} U^\T \in \R^{m \times p}.
        \]
    \item
        Project the observations:
        \[
            Y\ss{proj} = T Y \in \R^{m \times n}.
        \]
    \item
        Construct the projected noise:
        \[
            \Sigma_{T} = \sigma^2 S^{-1} + D \in \R^{m \times m}\ss{diag}.
        \]
        This is a diagonal matrix.
\end{enumerate}
The $i$\textsuperscript{th} row of $Y\ss{proj}$, which we denote by $y^{(i)}\ss{proj} \in \R^{n}$, corresponds to observations for latent process $i$.

\paragraph{Projection step (missing data).}
In the case of missing data, certain elements of $Y$ are missing.
Partition the columns (time stamps) of $Y$ into blocks $Y^{(1)} \in \R^{p \times n_1 },\ldots, Y^{(k)} \in \R^{p \times n_k}$ where $n_1 + \ldots + n_k = n$.
These blocks should be chosen such that, for every block $Y^{(i)}$, the observations for an output are either all missing or all available, i.e.\ every row of $Y^{(i)}$ is either entirely missing or entirely available.
Then consider the blocks $Y^{(1)} \in \R^{p \times n_1},\ldots, Y^{(k)} \in \R^{p \times n_k}$ separately by repeatedly performing inference.

For every block---we henceforth suppress the dependence on the block index---denote by $Y\obs \in \R^{p \times n}$ be the rows of the data matrix corresponding to observed outputs.
Similarly, let $U\obs \in \R^{p \times m}$ be the rows of $U$ corresponding to observed outputs.
\begin{enumerate}[topsep=0pt,itemsep=2pt]
    \item
        Construct the projection:
        \[
            T = S^{-\frac12} (U\obs^\T U\obs)^{-1}U\obs^\T \in \R^{m \times p}.
        \]
    \item
        Project the observations:
        \[
            Y\ss{proj} = T Y\obs \in \R^{m \times n}.
        \]
    \item
        Construct the projected noise:
        \[
            \Sigma_{T} = \sigma^{2} S^{-\frac12} d[({U\obs}^\T U\obs)^{-1}] S^{-\frac12} + D \in \R^{m\times m}\ss{diag}
        \]
        where $d[A]$ sets the off-diagonal elements of $A$ to zero.
        This is a diagonal matrix.
\end{enumerate}

\paragraph{Latent process inference step.}
In this step, we perform inference on the latent processes.

\begin{enumerate}[topsep=0pt,itemsep=2pt]
    \item
        For $i = 1,\ldots,m$, do the following:
        \begin{enumerate}[topsep=0pt,itemsep=2pt,leftmargin=6em]
            \item[Conditioning:]
                Condition latent process $i$ on data $y^{(i)}\ss{proj} \in \R^n$ where the observation noise is $(\Sigma_{T})_{ii}$.
                The latent process is just an independent GP, and any GP package can be used to do this step.
                Moreover, any single-output scaling technique can be used here, such as the variational inducing point approximation by \citet{Titsias:2009:Variational_Learning}.
            \item[Prediction:]
                Make predictions with the posterior of latent process $i$.
                Again, any GP package can be used to do this step.
                Denote the predictive means by $\mu^{(i)} \in \R^n$ and the predictive marginal variances by $\nu^{(i)} \in \R^n$.
        \end{enumerate}
    \item
        Collect the predictive means and marginal variances of the latent processes into matrices $\mu$ and $\nu$:
        \[
            \mu = \begin{bmatrix}
                (\mu^{(1)})^\T \\ \vdots \\ (\mu^{(m)})^\T
            \end{bmatrix} \in \R^{m \times n}, \quad
            \nu = \begin{bmatrix}
                (\nu^{(1)})^\T \\ \vdots \\ (\nu^{(m)})^\T
            \end{bmatrix} \in \R^{m \times n}.
        \]
\end{enumerate}

\paragraph{Reconstruction step.}
In the reconstruction step, we construct the predictions of the OILMM from the predictions of the latent processes.
\begin{enumerate}[topsep=0pt,itemsep=2pt]
    \item
        Construct the basis: $H = U S^{\frac12} \in \R^{p \times m}$.
    \item
        Construct the predictive mean of the OILMM:
        \[
            \text{predictive mean} = H \mu \in \R^{p \times n}.
        \]
    \item
        Construct the predictive marginal variances of the OILMM:
        \[
            \text{predictive marginal variances} = (H \circ H) \nu \in \R^{p \times n}
        \]
        where $\circ$ denotes the Hadamard product.
\end{enumerate}

\subsection{Posterior Sampling}
\label{app:subsec:sampling}
Instead of computing posterior means and marginal variances, you might want to generate posterior samples.

\paragraph{Projection step.}
See \cref{app:subsec:inference}.

\paragraph{Latent process sampling step.}
\begin{enumerate}[topsep=0pt,itemsep=2pt]
    \item
        For $i = 1,\ldots,m$, do the following:
        \begin{enumerate}[topsep=0pt,itemsep=2pt,leftmargin=6em]
            \item[Conditioning:]
                Condition latent process $i$ on data $y^{(i)}\ss{proj} \in \R^n$ where the observation noise is $(\Sigma_{T})_{ii}$.
                The latent process is just an independent GP, and any GP package can be used to do this step.
                Moreover, any single-output scaling technique can be used here, such as the variational inducing point approximation by \citet{Titsias:2009:Variational_Learning}.
            \item[Sampling:]
                Sample from the posterior of latent process $i$.
                Again, any GP package can be used to do this step.
                Denote the sample by $\hat x^{(i)} \in \R^n$.
        \end{enumerate}
    \item
        Collect the samples into a matrix:
        \[
            \hat X = \begin{bmatrix}
                (x^{(1)})^\T \\ \vdots \\ (x^{(m)})^\T
            \end{bmatrix} \in \R^{m \times n}.
        \]
\end{enumerate}

\paragraph{Reconstruction step.}
\begin{enumerate}[topsep=0pt,itemsep=2pt]
    \item
        Construct the basis: $H = U S^{\frac12} \in \R^{p \times m}$.
    \item
        Construct the posterior sample for the OILMM:
        \[
            \text{posterior sample} = H \hat X \in \R^{p \times n}.
        \]
\end{enumerate}

\subsection{Computation of the Log-Marginal Likelihood}
\label{app:subsec:likelihood}

\paragraph{Projection step.}
See \cref{app:subsec:inference}.

\paragraph{Latent process marginal likelihood calculation.}
\begin{enumerate}[topsep=0pt,itemsep=2pt]
    \item
        For $i = 1,\ldots,m$, do the following:
        \begin{enumerate}[topsep=0pt,itemsep=2pt,leftmargin=9em]
            \item[Marginal likelihood:]
                Compute the log-probability of data $y^{(i)}\ss{proj} \in \R^n$ under latent process $i$ where the observation noise is $(\Sigma_{T})_{ii}$.
                Denote the resulting log-probability by $\text{LML}_i$.
                The latent process is just an independent GP, and any GP package can be used to do this step.
                Moreover, any single-output scaling technique can be used here, such as the variational inducing point approximation by \citet{Titsias:2009:Variational_Learning}.
        \end{enumerate}
\end{enumerate}

\paragraph{Reconstruction step.}
\begin{enumerate}[topsep=0pt,itemsep=2pt]
    \item
        Construct the ``regularisation term'':
        \[
            \text{regulariser}
            =
                - \frac{n}{2} \log |S|
                - \frac{n (p-m)}{2} \log 2 \pi \sigma^2
                - \frac{1}{2\sigma^2} (
                    \norm{Y}_F^2
                    -\norm{U^\T Y}_F^2
                )
        \]
        where $\norm{\vardot}_F$ denotes the Frobenius norm.
    \item
        Construct the log-probability of the data $Y$ under the OILMM:
        \[
            \log p(Y) = \text{regulariser} + \sum_{i=1}^m \text{LML}_i.
        \]
\end{enumerate}

\paragraph{Reconstruction step (missing data).}
In the case of missing data, (1) is slightly different:
\begin{enumerate}[topsep=0pt,itemsep=2pt]
    \item
        Construct the ``regularisation term'':
        \[
            \text{regulariser}
            =
                - \frac{n}{2} \log |S|
                - \frac{n}{2} \log |U\obs^\T U\obs|
                - \frac{n (p-m)}{2} \log 2 \pi \sigma^2
                - \frac{1}{2\sigma^2} (
                    \norm{Y\obs}_F^2
                    - \norm{\operatorname{chol}(U\obs^\T U\obs)^{-1}U\obs^\T Y\obs}_F^2
                )
        \]
        where $\norm{\vardot}_F$ denotes the Frobenius norm and $\operatorname{chol}(\vardot)$ the Cholesky decomposition.
        In this case, recall that $n$ corresponds to the number of time points in the current block and to $p$ to the number of observed outputs in the current block.
\end{enumerate}

\begin{figure*}[ph!]
    \footnotesize
    \hfill
    \begin{tikzpicture}[
        xscale=0.85,
        yscale=0.85,
        latent node/.append style={
            align=center,
            inner sep=0pt,
            text width=0.75cm
        }
    ]
        \node [latent node] (xt) at (0, 0) {$x_n$};
        \node [latent node] (ft) at ($(xt) + (0, 1.5)$) {$f_n$};
        \node [latent node] (xt2) at ($(xt) + (1, 0)$) {$x_{n+1}$};
        \node [latent node] (ft2) at ($(xt2) + (0, 1.5)$) {$f_{n+1}$};
        \node [latent node, dashed] (xtpre) at ($(xt) - (1, 0)$) {};
        \node [latent node, dashed] (xtpost) at ($(xt2) + (1, 0)$) {};
        \node [latent node, dashed] (ftpre) at ($(xtpre) + (0, 1.5)$) {};
        \node [latent node, dashed] (ftpost) at ($(xtpost) + (0, 1.5)$) {};
        \draw [arrow, ->] (xt) -- (ft) node [pos=.45, fill=white] {$H$};
        \draw [arrow, ->] (xt2) -- (ft2) node [pos=.45, fill=white] {$H$};
        \draw [arrow, ->] (xt) -- (xt2);
        \draw [arrow, ->, dashed] (xtpre) -- (xt);
        \draw [arrow, ->, dashed] (xt2) -- (xtpost);
        \node [anchor=south, rotate=90, draw=black, thick] at (-1.5, 0.9) {instantaneous mixing};
        \node [anchor=south, draw=black, thick] at (0.5, 2.3) {time invariant};
        \node [anchor=south] (desc) at (0.5, 1.85) {$\vphantom{\text{y}} H\in \R^{p \times m}$ is a fixed matrix};
        \node [plate,
               fit=(xtpre) (desc) (ftpost),
               inner sep=5pt,
               inner ysep=3pt,
               yshift=-2pt,
               label={[anchor=south west]north west:ILMM}] () {};
    \end{tikzpicture}
    \hfill
    \begin{tikzpicture}[
        xscale=0.85,
        yscale=0.85,
        latent node/.append style={
            align=center,
            inner sep=0pt,
            text width=0.75cm
        }
    ]
        \node [latent node] (xt) at (0, 0) {$x_n$};
        \node [latent node] (ft) at ($(xt) + (0, 1.5)$) {$f_n$};
        \node [latent node] (xt2) at ($(xt) + (1, 0)$) {$x_{n+1}$};
        \node [latent node] (ft2) at ($(xt2) + (0, 1.5)$) {$f_{n+1}$};
        \node [latent node, dashed] (xtpre) at ($(xt) - (1, 0)$) {};
        \node [latent node, dashed] (xtpost) at ($(xt2) + (1, 0)$) {};
        \node [latent node, dashed] (ftpre) at ($(xtpre) + (0, 1.5)$) {};
        \node [latent node, dashed] (ftpost) at ($(xtpost) + (0, 1.5)$) {};
        \draw [arrow, ->] (xt) -- (ft) node [pos=.45, fill=white, fill opacity=.85] {$H_n$};
        \draw [arrow, ->] (xt2) -- (ft2) node [pos=.45, fill=white, fill opacity=.85] {$H_{n+1}$};
        \draw [arrow, ->] (xt) -- (xt2);
        \draw [arrow, ->, dashed] (xtpre) -- (xt);
        \draw [arrow, ->, dashed] (xt2) -- (xtpost);
        \node [anchor=south, draw=black, thick] at (0.5, 2.3) {time-varying};
        \node [anchor=south] at (0.5, 1.85) {$H\colon \Tc \to \R^{p \times m}$ is a time-varying matrix};
        \node [plate,
               fit=(xtpre) (desc) (ftpost),
               inner sep=5pt,
               inner ysep=3pt,
               yshift=-2pt,
               draw=none] () {};
    \end{tikzpicture}
    \hfill\strut
    \\[0em]
    \strut\hfill
    \begin{tikzpicture}[
        xscale=0.85,
        yscale=0.85,
        latent node/.append style={
            align=center,
            inner sep=0pt,
            text width=0.75cm
        }
    ]
        \node [latent node] (xt) at (0, 0) {$x_n$};
        \node [latent node] (ft) at ($(xt) + (0, 1.5)$) {$f_n$};
        \node [latent node] (xt2) at ($(xt) + (1, 0)$) {$x_{n+1}$};
        \node [latent node] (ft2) at ($(xt2) + (0, 1.5)$) {$f_{n+1}$};
        \node [latent node, dashed] (xtpre) at ($(xt) - (1, 0)$) {};
        \node [latent node, dashed] (xtpost) at ($(xt2) + (1, 0)$) {};
        \node [latent node, dashed] (ftpre) at ($(xtpre) + (0, 1.5)$) {};
        \node [latent node, dashed] (ftpost) at ($(xtpost) + (0, 1.5)$) {};
        \draw [arrow, ->, dashed] (xtpre) -- (xt);
        \draw [arrow, ->, dashed] (xt2) -- (xtpost);
        \draw [arrow, ->, dashed] (xt) -- (ftpre);
        \draw [arrow, ->, dashed] (xt) -- (ftpost);
        \draw [arrow, ->, dashed] (xt2) -- (ftpre);
        \draw [arrow, ->, dashed] (xt2) -- (ftpost);
        \draw [arrow, ->] (xt) -- (ft) node [pos=0.45, fill=white, fill opacity=.85, inner sep=1pt] {$H_0$};
        \draw [arrow, ->] (xt) -- (ft2) node [pos=0.2, fill=white, fill opacity=.85, inner sep=1pt] {$H_{-1}$};
        \draw [arrow, ->] (xt2) -- (ft) node [pos=0.7, fill=white, fill opacity=.85, inner sep=1pt] {$H_{1}$};
        \draw [arrow, ->] (xt2) -- (ft2) node [pos=0.45, fill=white, fill opacity=.85, inner sep=1pt] {$H_0$};
        \draw [arrow, ->] (xt) -- (xt2);
        \node [anchor=south, rotate=90, draw=black, thick] at (-1.5, 0.9) {convolutional mixing};
        \node [anchor=south] at (0.5, 1.85) {$\vphantom{\text{y}} H\colon \Tc \to \R^{p \times m}$ is a time-invariant filter};
        \node [plate,
              fit=(xtpre) (desc) (ftpost),
              inner sep=5pt,
              inner ysep=3pt,
              yshift=-2pt,
              draw=none] () {};
    \end{tikzpicture}
    \hfill
    \begin{tikzpicture}[
        xscale=0.85,
        yscale=0.85,
        latent node/.append style={
            align=center,
            inner sep=0pt,
            text width=0.75cm
        }
    ]
        \node [latent node] (xt) at (0, 0) {$x_n$};
        \node [latent node] (ft) at ($(xt) + (0, 1.5)$) {$f_n$};
        \node [latent node] (xt2) at ($(xt) + (1, 0)$) {$x_{n+1}$};
        \node [latent node] (ft2) at ($(xt2) + (0, 1.5)$) {$f_{n+1}$};
        \node [latent node, dashed] (xtpre) at ($(xt) - (1, 0)$) {};
        \node [latent node, dashed] (xtpost) at ($(xt2) + (1, 0)$) {};
        \node [latent node, dashed] (ftpre) at ($(xtpre) + (0, 1.5)$) {};
        \node [latent node, dashed] (ftpost) at ($(xtpost) + (0, 1.5)$) {};
        \draw [arrow, ->, dashed] (xtpre) -- (xt);
        \draw [arrow, ->, dashed] (xt2) -- (xtpost);
        \draw [arrow, ->, dashed] (xt) -- (ftpre);
        \draw [arrow, ->, dashed] (xt) -- (ftpost);
        \draw [arrow, ->, dashed] (xt2) -- (ftpre);
        \draw [arrow, ->, dashed] (xt2) -- (ftpost);
        \draw [arrow, ->] (xt) -- (ft) node [pos=0.45, fill=white, fill opacity=.85, inner sep=1pt] {$H_n^n$};
        \draw [arrow, ->] (xt) -- (ft2) node [pos=0.2, fill=white, fill opacity=.85, inner sep=1pt] {$H_n^{n+1}$};
        \draw [arrow, ->] (xt2) -- (ft) node [pos=0.7, fill=white, fill opacity=.85, inner sep=1pt] {$H_{n+1}^n$};
        \draw [arrow, ->] (xt2) -- (ft2) node [pos=0.45, fill=white, fill opacity=.85, inner sep=1pt] {$H_{n+1}^{n+1}$};
        \draw [arrow, ->] (xt) -- (xt2);
        \node [anchor=south] at (0.5, 1.85) {$H\colon \Tc\times\Tc \to \R^{p \times m}$ is a time-varying filter};
        \node [plate,
               fit=(xtpre) (desc) (ftpost),
               inner sep=5pt,
               inner ysep=3pt,
               yshift=-2pt,
               draw=none] () {};
    \end{tikzpicture}
    \hfill\strut\\
    \vspace{-2em}
    \caption[Graphical models illustrating the difference between time-invariant/time-varying and instantaneous/convolutional MOGPs]{
        Graphical models illustrating the difference between time-invariant/time-varying and instantaneous/convolutional multi-output GP models, for data sampled at real-valued times $t_1,t_2,\ldots$ (sampling period $\Delta t$).
        Abbreviations used:
        $x_n=x(t_n)$,
        $f_n=f(t_n)$,
        $H_n=H(n\Delta t)$, and
        $H_n^m=H(t_m, t_n)$.
        For simplicity, the dynamics of $x$ are depicted as a Markov chain;
        since $x$ is modelled with a GP, $x_n$ actually depends on $x_{n'}$ for all $n' \le n$.
    }
    \label{fig:mixing_model_hierarchy_graphical_model}
\end{figure*}
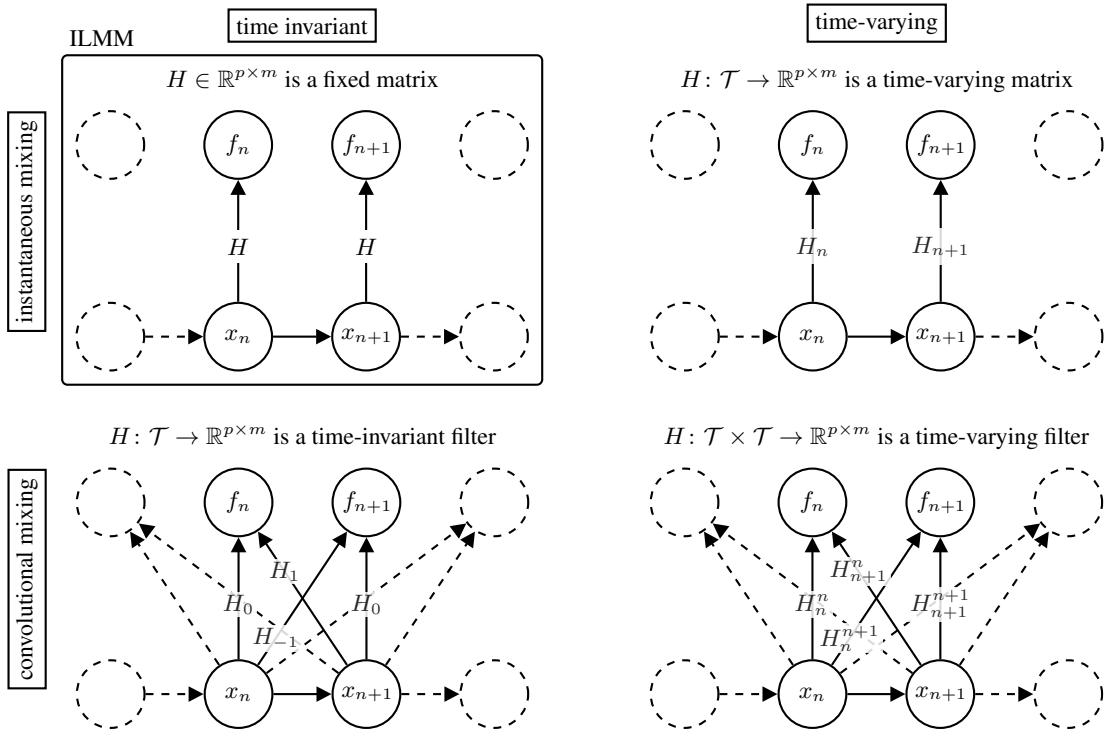

\begin{figure*}[ph!]
    \centering
    \begin{tikzpicture}[
        scale=2.0,
        labelbg/.style={
               fill=white,
               inner sep=0pt,
               opacity=0.7,
               text opacity=1
        }
    ]
        \draw [thick, draw=black!30]
            (0, 0, -1) coordinate (backsw)
            -- ++(0.8, 0, 0) coordinate (backse);
        \draw [thick]
            (backse)
            -- ++(0, 0.8, 0) coordinate (backne)
            -- ++(-0.8, 0, 0) coordinate (backnw);
        \draw [thick, draw=black!30]
            (backnw) -- (backsw);
        \draw [thick]
            (0, 0, 0) coordinate (frontsw)
                -- ++(1, 0, 0) coordinate (frontse)
                -- ++(0, 1, 0) coordinate (frontne)
                -- ++(-1, 0, 0) coordinate (frontnw)
                -- cycle;
        \draw [thick, draw=black!30]
            (frontsw) -- (backsw);
        \draw [thick]
            (frontse) -- (backse)
            (frontnw) -- (backnw)
            (frontne) -- (backne);
        \draw [arrow]
            ($(frontsw) + (0, -0.07, 0)$)
            -- ($(frontse) + (0, -0.07, 0)$)
            node [pos=0.475,
                  fill=white,
                  inner sep=.5pt,
                  ] {\small convolutional};
        \draw [arrow]
            ($(frontsw) + (-0.07, 0.07, 0)$)
            -- ($(frontnw) + (-0.07, -0.07, 0)$)
            node [pos=0.475,
                  rotate=90,
                  fill=white,
                  inner sep=.5pt,
                  ] {\small time varying};
        \draw [arrow]
            ($(frontnw) + (-0.1, 0, -0.17)$)
            -- ($(backnw) + (-0.1, 0.025, 0)$)
            node [pos=0.475,
                  rotate=22,
                  align=center,
                  anchor=south
                  ] {\small prior \\[-.35em] on $H$};
        \node [anchor=east]
            (labelfrontsw) at ($(frontsw) + (205:0.25)$)
            {$f(t)=H x(t)$};
        \node [anchor=east]
            (labelfrontnw) at ($(frontnw) + (190:0.25)$)
            {$f(t)=H(t) x(t)$};
        \node [anchor=west, labelbg]
            (labelfrontne) at ($(frontne) + (-10:0.25)$)
            {$\displaystyle f(t)=\int H(t, \tau) x(\tau)\isd{\tau}$};
        \node [anchor=west]
            (labelfrontse) at ($(frontse) + (-25:0.25)$)
            {$\displaystyle f(t)=\int H(t - \tau) x(\tau)\isd{\tau}$};
        \draw [dashed] (labelfrontsw.east) -- (frontsw);
        \draw [dashed] (labelfrontse.west) -- (frontse);
        \draw [dashed] (labelfrontnw.east) -- (frontnw);
        \draw [dashed] (labelfrontne.west) -- (frontne);
        \node [anchor=south west,
               align=left,
               labelbg,
               xshift=3pt,
               yshift=4pt] at (frontsw) {[1--3, 5--7, 9, \\\phantom{[}13, 14, 19--21]}; 
        \node [anchor=south west,
               yshift=3pt,
               xshift=4pt,
               labelbg] at (frontse) {[4, 8, 10, 11]}; 
        \node [anchor=south west] at (backnw) {[12, 17]};  
        \node [anchor=south west, color=black] at (backse) {[15]};  
        \node [anchor=south west] at (backsw) {[16, 18]}; 
        \node [anchor=north, align=center, text width=\linewidth] (Hs) at (0.5, -0.35, 0) {
            \small
            \begin{tabular}{llll}
                \toprule
                & Form of $H$ & Form of $K$ & Mixing  \\ \midrule
                {}[1, 5, 6, 16] 
                & $H$
                & $k(t,t') I$
                & Instantaneous \\
                {}[2] 
                & $\big[ H_1 \; \cdots \; H_q \big]$
                & $\diag(k_1(t,t') I, \ldots, k_q(t,t') I)$
                & Instantaneous \\
                {}[3, 7, 9, 13, 20, 21] 
                & $H$
                & $\diag(k_1(t,t'), \ldots, k_q(t,t'))$
                & Instantaneous \\
                {}[4, 10, 11, 15] 
                & $H(t - t')$
                & $\diag(\delta(t-t'), \ldots, \delta(t-t'))$
                & Convolutional \\
                {}[8] 
                & Green's function
                & $\diag(k_1(t,t'), \ldots, k_q(t,t'))$
                & Convolutional \\
                {}[12, 17] 
                & $H(t)$
                & $\diag(k_1(t,t'), \ldots, k_q(t,t'))$
                & Instantaneous \\
                {}[14] 
                & $\big[ H  \; I \big]$
                & $\diag(k_1(t,t'), \ldots, k_{q+p}(t,t'))$
                & Instantaneous \\
                {}[18] 
                & Lower triangular
                & $\diag(k_1(t,t'), \ldots, k_{q}(t,t'))$
                & Instantaneous \\
                {}[19] 
                & $H_1 \otimes \cdots \otimes H_q$ 
                & $k(t,t') I$
                & Instantaneous \\
                \bottomrule
            \end{tabular}
        };
        \node [anchor=north, align=left, text width=.8\linewidth]
            at ($(Hs.south) + (-90:0.1)$)
            {\setstretch{1.0}\footnotesize%
            \begin{tabularx}{\linewidth}{lL}
            {}[1] & Intrinstic Coregionalisation Model \parencite{Goovaerts:1997:Geostatistics_for_Natural_Resources_Evaluation} \\
            {}[2] & Linear Model of Coregionalisation \parencite{Goovaerts:1997:Geostatistics_for_Natural_Resources_Evaluation} \\
            {}[3] & Semiparametric Latent Factor Model \parencite{Teh:2005:Semiparametric_Latent_Factor} \\
            {}[4] & Dependent Gaussian Processes \parencite{Boyle:2005:Dependent_Gaussian_Processes} \\
            {}[5] & Multi-Task Gaussian Processes \parencite{Bonilla:2008:Multi-Task_Gaussian_Process} \\
            {}[6] & \textcite{Osborne:2008:Towards_Real-Time_Information_Processing_of} \\
            {}[7] & \textcite{Higdon:2008:Computer_Model_Calibration_Using_High-Dimensional} \\
            {}[8] & Latent Force Models \parencite{Alvarez:2009:Latent_Force_Models} \\
            {}[9] & Gaussian Process Factor Analysis \parencite{Yu:2009:Gaussian-Process_Factor_Analysis_for_Low-Dimensional} \\
            {}[10] & Multi-Output {Gaussian} Processes Through Variational Inducing Kernels \parencite{Alvarez:2010:Efficient_Multioutput_Gaussian_Processes_Through} \\
            {}[11] & Convolved Multiple Output Gaussian Processes \parencite{Alvarez:2011:Computationally_Efficient_Convolved} \\
            {}[12] & Gaussian Process Regression Network \parencite{Wilson:2012:GP_Regression_Networks} \\
            {}[13] & Spatio--Temporal Bayesian Filtering and Smoothing \parencite{Sarkka:2013:Spatiotemporal_Learning_via} \\
            {}[14] & Collaborative Multi-Output Gaussian Processes \parencite{Nguyen:2014:Collaborative_Multi-Output} \\
            {}[15] & Generalised Gaussian Process Convolution Model \parencite{Bruinsma:2016:GGPCM} \\
            {}[16] & Semi-Parametric Network Structure Discovery Models \parencite{Dezfouli:2017:Semi-Parametric_Network_Structure_Discovery_Models} \\
            {}[17] & Grouped Gaussian Processes \parencite{Dahl:2018:Grouped_Gaussian_Processes_for_Solar} \\
            {}[18] & The Gaussian Process Autoregressive Regression Model \parencite{Requeima:2019:The_Gaussian_Process_Autoregressive_Regression} \\
            {}[19] & High-Order Gaussian Process Regression \parencite{Zhe:2019:Scalable_High-Order_Gaussian_Process_Regression} \\
            {}[20] & Instantaneous Linear Mixing Model (\cref{mod:ILMM}) \\
            {}[21] & Orthogonal Instantaneous Linear Mixing Model (\cref{mod:OILMM})
            \end{tabularx}
            \par
        };
    \end{tikzpicture}
    \caption[The Mixing Model Hierarchy]{
        The Mixing Model Hierarchy, which organises MOGPs from the machine learning and geostatistics literature according to their distinctive modelling assumptions
    }
    \label{fig:mixing_model_hierarchy}
\end{figure*}

\section{Unifying Presentation of Multi-Output Gaussian Processes}
\label{app:mmh}

Our attempt at a unifying presentation of MOGP models starts from setting up what we call the {\em Mixing Model Hierarchy} (MMH).
At the bottom of the Mixing Model Hierarchy stands the Instantaneous Linear Mixing Model (ILMM, \cref{mod:ILMM} in \cref{sec:ILMM}), which is a simple, but general class of MOGP models typically characterised by low-rank covariance structure.

The graphical model of the ILMM is illustrated in the top-left corner of \cref{fig:mixing_model_hierarchy_graphical_model}, which highlights two restrictions of the ILMM compared to a general MOGP:
{\em (i)}~the \textit{instantaneous spatial covariance} of $f$, $\E[f(t) f^\T(t)] = H H^\T$, does not vary with time, because neither $H$ nor $K(t, t) = I_m$ vary with time; and
{\em (ii)}~the noise-free observation $f(t)$ is a function of $x(t')$ for $t'=t$ only, meaning that, for example, $f$ cannot be $x$ with a delayed or a smoothed version of $x$.
We hence call the ILMM a \emph{time-invariant} (due to \emph{(i)}) and \emph{instantaneous} (due to \emph{(ii)})  MOGP.

The ILMM can be generalised in three ways.
First, the mixing matrix $H$ may vary with time.
Then $H \in \R^{p \times m}$ becomes a matrix-valued function $H\colon \Tc \to \R^{p \times m}$, and the mixing mechanism becomes
\[
    f(t)\cond H, x = H(t) x(t).
\]
We call such MOGP models \emph{time-varying} (see \cref{fig:mixing_model_hierarchy_graphical_model}, top right).
Second, $f(t)$ may depend on $x(t')$ for all $t' \in \Tc$.
Then the mixing matrix $H \in \R^{p \times m}$ becomes a matrix-valued time-invariant filter $H\colon \Tc \to \R^{p \times m}$, and the mixing mechanism becomes
\[
    f(t)\cond H, x = \int H(t - \tau) x(\tau) \isd{\tau}.
\]
We call such MOGP models \emph{convolutional} (see \cref{fig:mixing_model_hierarchy_graphical_model}, bottom left).
Finally, $f(t)$ may depend on $x(t')$ for all $t' \in \Tc$ \emph{and} this relationship may vary with time.
Then the mixing matrix $H \in \R^{p \times m}$ becomes a matrix-valued time-varying filter $H\colon \Tc\times\Tc \to \R^{p \times m}$, and the mixing mechanism becomes
\[
    f(t)\cond H, x = \int H(t, \tau) x(\tau) \isd{\tau}.
\]
We call such MOGP models \emph{time-varying} and \emph{convolutional} (see \cref{fig:mixing_model_hierarchy_graphical_model}, bottom right).
The graphical models corresponding to these generalisations of the ILMM are depicted in \cref{fig:mixing_model_hierarchy_graphical_model}.

The ILMM can be extended in one other way, which is to include a prior distribution on $H$.
This extension and the two previously proposed generalisations together form the \emph{Mixing Model Hierarchy} (MMH), which is depicted in \cref{fig:mixing_model_hierarchy}.
The MMH organises multi-output Gaussian process models according to their distinctive modelling assumptions.
\cref{fig:mixing_model_hierarchy} shows how sixteen MOGP models from the machine learning and geostatistics literature can be recovered as special cases of the various generalisations of the ILMM.

Not all multi-output Gaussian process models are covered by the MMH, however.
For example, Deep GPs \parencite{Damianou:2015:Deep_Gaussian_Processes_and_Variational} and variations thereon \parencite{Kaiser:2017:Bayesian_Alignments_of_Warped_Multi-Output} are excluded because they transform the latent processes \textit{nonlinearly} to generate the observations.

\section{Runtime and Memory Complexities}
\label{app:complexities}

For the ILMM and OILMM, \cref{tab:complexities_no_projection} gives an overview of the runtime and memory complexities associated to learning and inference, and \cref{tab:complexities_projection} gives an overview of the runtime and memory complexities associated to projecting the data and reconstructing the predictions.

\begin{table}[t]
    \small
    \caption{
        Complexities of learning and inference in the ILMM and OILMM, ignoring the projection.
        In the table,
        $n$ is the number of time points;
        $p$ is the number of outputs;
        $m$ is the number of latent processes;
        $r$ is the number of inducing points, typically $r \ll n$;
        and $d$ is the state dimensionality, typically $d \ll n, m$.
    }
    \label{tab:complexities_no_projection}
    \vspace{1em}
    \centering
    \begin{tabular}{p{10cm}p{1.5cm}p{1.5cm}}
        \toprule
        Model & Runtime & Memory \\ \midrule
        General MOGP & $\O(n^3p^3)$ & $\O(n^2 p^2)$ \\
        ILMM (\cref{mod:ILMM}) & $\O(n^3 m^3)$ & $\O(n^2 m^2)$ \\
        OILMM (\cref{mod:OILMM}) & $\O(n^3 m)$ & $\O(n^2 m)$ \\
        OILMM (\cref{mod:OILMM}) + \citet{Titsias:2009:Variational_Learning} & $\O(n m r^2)$ & $\O(n m r)$ \\
        OILMM (\cref{mod:OILMM}) + \citet{Hartikainen:2010:Kalman_filtering_and_smoothing_solutions} & $\O(n m d^3)$ & $\O(n m d^2)$ \\[.5em]
        \multicolumn{3}{l}{
            \textsc{Application to separable spatio--temporal GPs (\cref{sec:spatio-temporal})}
        } \\[.2em]
        OILMM (\cref{mod:OILMM}) & $\O(n^3 p)$ & $\O(n^2 p)$ \\
        OILMM (\cref{mod:OILMM}) + \citet{Titsias:2009:Variational_Learning} & $\O(n p r^2)$ & $\O(n p r)$ \\
        OILMM (\cref{mod:OILMM}) + \citet{Hartikainen:2010:Kalman_filtering_and_smoothing_solutions} & $\O(n p d^3)$ & $\O(n p d^2)$ \\
        Kronecker product factorisation \citep[][Ch.~5]{saatcci2012scalable} & $\O(n^3 + p^3)$ & $\O(n^2 + p^2)$ \\
        \bottomrule
    \end{tabular}
\end{table}

\begin{table}[t]
    \small
    \caption{
        Complexities of projecting the data and reconstructing the predictions in the ILMM and OILMM.
        In the table,
        $n$ is the number of time points;
        $p$ is the number of outputs;
        and $m$ is the number of latent processes.
    }
    \label{tab:complexities_projection}
    \vspace{1em}
    \centering
    \begin{tabular}{p{10cm}p{1.5cm}p{1.5cm}}
        \toprule
        Action & Runtime & Memory \\ \midrule
        Storing data & $-$ & $\O(np)$ \\
        Construction of projection $T$ & $\O(m^2p)$ & $\O(mp)$ \\
        Projection & $\O(nmp)$ & $\O(np)$ \\
        Construction of predictive marginal statistics & $\O(nmp)$ & $\O(np)$ \\[.5em]
        \multicolumn{2}{l}{
            \textsc{Application to separable spatio--temporal GPs (\cref{sec:spatio-temporal})}
        } \\[.2em]
        Construction of projection $T$ & $\O(p^3)$ & $\O(p^2)$ \\
        Projection & $\O(np^2)$ & $\O(np)$ \\
        Construction of predictive marginal statistics & $\O(np^2)$ & $\O(np)$ \\
        \bottomrule
    \end{tabular}
\end{table}

\section{Maximum Likelihood Estimate}
\label{app:mle}

\begin{proposition}
    \label{prop:mle}
    Denote $p(y\cond x) = \Normal(y\cond Hx, \Sigma)$,
    and let $T$ be the $m\times p$ matrix $(H^\T \Sigma^{-1} H)^{-1} H^\T \Sigma^{-1}$.
    Then
    \[
        Ty = \argmax_{x}\, p(y\cond x)
    \]
    and $Ty$ is an unbiased estimate of $x$: $\E[Ty\cond x] = x$.
\end{proposition}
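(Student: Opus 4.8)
The plan is to recognise the maximisation of $p(y \cond x) = \Normal(y \cond Hx, \Sigma)$ over $x$ as a generalised least-squares problem and to solve it in closed form. Since $\log p(y \cond x) = -\tfrac12 (y - Hx)^\T \Sigma^{-1}(y - Hx) + c$ for a constant $c$ not depending on $x$, maximising the likelihood is equivalent to minimising the quadratic form $Q(x) = (y - Hx)^\T \Sigma^{-1}(y - Hx)$.

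First I would remove the noise covariance from the picture by writing $\Sigma^{-1} = \Sigma^{-1/2}\Sigma^{-1/2}$ with $\Sigma^{-1/2}$ the invertible symmetric square root, so that $Q(x) = \norm{\Sigma^{-1/2}y - \Sigma^{-1/2} H x}^2$. This is an ordinary least-squares problem with design matrix $\tilde H = \Sigma^{-1/2} H$, and $\tilde H$ has linearly independent columns because $H$ does and $\Sigma^{-1/2}$ is invertible; hence $\tilde H^\T \tilde H = H^\T \Sigma^{-1} H$ is strictly positive definite, in particular invertible. (This is exactly the standing assumption already recorded in the Assumptions paragraph.)

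The minimiser then follows from the normal equations: setting $\nabla_x Q(x) = -2 H^\T \Sigma^{-1}(y - Hx) = 0$ gives $H^\T \Sigma^{-1} H x = H^\T \Sigma^{-1} y$, and invertibility of $H^\T \Sigma^{-1} H$ yields the unique solution $x = (H^\T \Sigma^{-1} H)^{-1} H^\T \Sigma^{-1} y = Ty$. That this stationary point is in fact the global maximum of $p(y \cond x)$ follows because $Q$ is a convex quadratic with Hessian $2 H^\T \Sigma^{-1} H \succ 0$; alternatively, completing the square gives the identity $Q(x) = Q(Ty) + (x - Ty)^\T (H^\T \Sigma^{-1} H)(x - Ty)$, which exhibits $Ty$ as the unique minimiser directly and sidesteps calculus entirely. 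Either route is routine; this is the bulk of the argument but presents no real obstacle.

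Finally, unbiasedness is immediate from linearity of expectation: $\E[Ty \cond x] = T\,\E[y \cond x] = T H x = (H^\T \Sigma^{-1} H)^{-1} H^\T \Sigma^{-1} H x = x$. The only place any care is needed is the invertibility of $H^\T \Sigma^{-1} H$, which is precisely where the hypotheses that $H$ has full column rank and $\Sigma > 0$ enter.
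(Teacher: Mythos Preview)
Your proposal is correct and follows essentially the same approach as the paper: reduce to minimising the quadratic $(y-Hx)^\T\Sigma^{-1}(y-Hx)$, solve the normal equations using invertibility of $H^\T\Sigma^{-1}H$, and verify unbiasedness via $TH=I_m$. You supply more detail (the $\Sigma^{-1/2}$ whitening, the Hessian/completing-the-square argument for uniqueness) than the paper, which dismisses these steps as ``an elementary calculation'', but the route is the same.
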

\begin{proof}
    Note that
    \[
        \log p(y\cond x) \simeq -\tfrac12 (y - Hx)^\T \Sigma^{-1}(y - Hx)
    \]
    Using invertibility of $H^\T \Sigma^{-1} H$, an elementary calculation then shows that the unique maximum with respect to $x$ is given by
    \[
        x = (H^\T \Sigma^{-1} H)^{-1} H^\T \Sigma^{-1} y = T y.
    \]
    To show that $Ty$ is an unbiased estimate of $x$, we use that $\E[y\cond x]=Hx$:
    \[
        \E[Ty\cond x] = T H x = (H^\T \Sigma^{-1} H)^{-1} (H^\T \Sigma^{-1} H) x = x. \qedhere
    \]
\end{proof}

\section{Sufficient Statistic}
\label{app:sufficiency}

To prove sufficiency of $Ty$ , we need the property of $T$ that it ``preserves the signal-to-noise ratio''.
This is characterised in the following lemma.

\begin{lemma} \label{lemma:signal-to-noise_ratio}
    \[
        \frac
            {\Normal(y\cond Hx, \Sigma)}
            {\Normal(y\cond 0, \Sigma)}
        = \frac
            {\Normal(Ty\cond x, (H^\T\Sigma^{-1}H)^{-1})}
            {\Normal(Ty\cond 0,(H^\T\Sigma^{-1}H)^{-1})}.
    \]
\end{lemma}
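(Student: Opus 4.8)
The plan is a direct computation: expand both sides, cancel the normalising constants, and verify that the surviving exponents agree. Write $A = H^\T \Sigma^{-1} H$, which is invertible under the standing assumptions, so that $T = A^{-1} H^\T \Sigma^{-1}$ and the projected noise covariance is $A^{-1}$. The single algebraic fact that drives everything is $AT = A A^{-1} H^\T \Sigma^{-1} = H^\T \Sigma^{-1}$, i.e. $A\,Ty = H^\T \Sigma^{-1} y$.

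First I would note that on the left-hand side the two Gaussian densities share the prefactor $(2\pi)^{-p/2}\abs{\Sigma}^{-1/2}$, which cancels in the ratio, and likewise on the right-hand side the prefactor $(2\pi)^{-m/2}\abs{A^{-1}}^{-1/2}$ cancels; so it suffices to compare the exponentials. Expanding the left-hand exponent, the $y^\T\Sigma^{-1}y$ contributions cancel and one is left with
\[
    -\tfrac12 (y - Hx)^\T \Sigma^{-1} (y - Hx) + \tfrac12 y^\T \Sigma^{-1} y
    = x^\T H^\T \Sigma^{-1} y - \tfrac12 x^\T A x .
\]
Expanding the right-hand exponent in the same way, the $(Ty)^\T A (Ty)$ contributions cancel and, using $A\,Ty = H^\T\Sigma^{-1}y$, one gets
\[
    -\tfrac12 (Ty - x)^\T A (Ty - x) + \tfrac12 (Ty)^\T A (Ty)
    = x^\T A\,Ty - \tfrac12 x^\T A x
    = x^\T H^\T \Sigma^{-1} y - \tfrac12 x^\T A x .
\]
The two exponents coincide, which proves the claimed identity.

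Equivalently, and a little more conceptually: both sides, viewed as functions of $x$, have the form $\exp\!\big(-\tfrac12 x^\T A x + x^\T b\big)$ with the \emph{same} linear coefficient $b = H^\T\Sigma^{-1}y = A\,Ty$, and both sides equal $1$ at $x=0$; hence they are equal everywhere. There is no genuine obstacle in this lemma — the only things to be careful about are the bookkeeping of the two (different-dimensional) normalising constants and the identity $AT = H^\T\Sigma^{-1}$, which is precisely the statement that $T$ is the projection adapted to the precision-weighted geometry. This lemma is the algebraic core of the sufficiency argument: it exhibits $\Normal(y\mid Hx,\Sigma)$ as $\Normal(y\mid 0,\Sigma)$ times a function of $Ty$ and $x$, so the $x$-dependence of the likelihood enters only through $Ty$, which is exactly the Fisher–Neyman factorisation needed to conclude that $Ty$ is sufficient for $x$.
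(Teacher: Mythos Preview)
Your proof is correct and follows essentially the same direct-computation approach as the paper: both hinge on the identity $(H^\T\Sigma^{-1}H)\,Ty = H^\T\Sigma^{-1}y$ and reduce to comparing the quadratic exponents after the normalising constants cancel. The only cosmetic difference is that the paper completes the square on one side to obtain the other, whereas you expand both sides independently and match terms.
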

\begin{proof}
    It is simple to check the equality by direct verification.
    We show, however, how the equality may be derived.
    To begin with, we have
    \[
        (y - Hx)^\T \Sigma^{-1}(y - Hx)
        = y^\T \Sigma^{-1}y - 2 x^\T H^\T \Sigma^{-1} y + x^\T H^\T \Sigma^{-1} H x.
    \]
    Here
    \[
        H^\T \Sigma^{-1} y
        = (H^\T \Sigma^{-1} H)(H^\T \Sigma^{-1} H)^{-1}H^\T \Sigma^{-1}y
        = (H^\T \Sigma^{-1} H)T y,
    \]
    so
    \[
        (y - Hx)^\T \Sigma^{-1}(y - Hx)
        = y^\T \Sigma^{-1} y - 2 x^\T (H^\T \Sigma^{-1} H)T y + x^\T (H^\T \Sigma^{-1} H) x.
    \]
    Adding and subtracting $yT^\T (H^\T \Sigma^{-1}H) T y$, we find
    \[
        (y - Hx)^\T \Sigma^{-1}(y - Hx)
        = y^\T \Sigma^{-1}y - yT^\T (H^\T \Sigma^{-1}H) T y + (x - T y)^\T (H^\T \Sigma^{-1} H)(x - Ty).
    \]
    Hence, rearranging,
    \[
        (y - Hx)^\T \Sigma^{-1}(y - Hx) - y^\T \Sigma^{-1}y
        = (x - T y)^\T (H^\T \Sigma^{-1} H)(x - Ty) - yT^\T (H^\T \Sigma^{-1}H) T y,
    \]
    which yields the result.
\end{proof}

\begin{proposition} \label{prop:sufficiency}
    The MLE $Ty$ of $x$ is a minimal sufficient statistic for $x$. 
\end{proposition}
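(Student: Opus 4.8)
The plan is to prove the two assertions of \cref{prop:sufficiency} --- sufficiency and minimality --- separately, in both cases letting \cref{lemma:signal-to-noise_ratio} do the heavy lifting. Write $\Sigma_T = (H^\T \Sigma^{-1} H)^{-1}$, which is a genuine (invertible, positive-definite) matrix under the standing assumptions that $H$ has linearly independent columns and $\Sigma > 0$.

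For \textbf{sufficiency}, I would invoke the Fisher--Neyman factorisation theorem. Rearranging the identity of \cref{lemma:signal-to-noise_ratio} gives
\[
    p(y \cond x) = \Normal(y \cond Hx, \Sigma)
    = \underbrace{\frac{\Normal(y \cond 0, \Sigma)}{\Normal(Ty \cond 0, \Sigma_T)}}_{\text{function of } y \text{ alone}}
    \cdot \underbrace{\Normal(Ty \cond x, \Sigma_T)}_{\text{depends on } y \text{ only through } Ty}.
\]
The first factor carries no dependence on $x$, and the second depends on $y$ only through $Ty$; by the factorisation criterion, $Ty$ is sufficient for $x$. (One could alternatively note that an MLE is always a function of any sufficient statistic that exists, but the factorisation is the cleaner route and also sets up the minimality argument.)

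For \textbf{minimality}, I would use the Lehmann--Scheffé characterisation: a sufficient statistic $S$ is minimal iff, for all $y, y'$, one has $S(y) = S(y')$ exactly when the likelihood ratio $p(y \cond x)/p(y' \cond x)$ is constant in $x$. From the factorisation above,
\[
    \frac{p(y \cond x)}{p(y' \cond x)}
    = c(y, y') \, \frac{\Normal(Ty \cond x, \Sigma_T)}{\Normal(Ty' \cond x, \Sigma_T)},
\]
with $c(y,y')$ free of $x$. Expanding the two Gaussian densities in the ratio, the quadratic term $-\tfrac12 x^\T \Sigma_T^{-1} x$ appears in numerator and denominator and cancels, leaving an $x$-dependence of the form $\exp\bigl(x^\T \Sigma_T^{-1}(Ty - Ty')\bigr)$ up to an $x$-free constant. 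Since $\Sigma_T^{-1}$ is invertible, this is constant in $x$ if and only if $Ty = Ty'$. Applying the criterion yields that $Ty$ is minimal sufficient.

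The \textbf{main obstacle} is essentially bookkeeping rather than a genuine difficulty: \cref{lemma:signal-to-noise_ratio} reduces sufficiency to a one-line factorisation, and minimality is a short computation once the Lehmann--Scheffé criterion is recalled. The single point that warrants care is checking that the $x$-quadratic terms in the log-ratio of the two Gaussians cancel, so that the residual dependence on $x$ is \emph{linear} and therefore vanishes precisely on the set $\{Ty = Ty'\}$; this is exactly where invertibility of $\Sigma_T$ --- equivalently, $H^\T\Sigma^{-1}H > 0$ --- is needed.
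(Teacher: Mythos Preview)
Your proposal is correct and takes essentially the same approach as the paper: both proofs use \cref{lemma:signal-to-noise_ratio} to reduce the likelihood ratio $p(y\cond x)/p(y'\cond x)$ to an expression whose $x$-dependence is $\exp\bigl(x^\T \Sigma_T^{-1}(Ty-Ty')\bigr)$, and then invoke the Lehmann--Scheff\'e criterion (Casella--Berger, Th.~6.2.13) together with invertibility of $\Sigma_T^{-1}=H^\T\Sigma^{-1}H$. The only cosmetic difference is that the paper applies the criterion in one shot to obtain minimal sufficiency directly, whereas you first establish sufficiency via the Fisher--Neyman factorisation and then minimality; the factorisation step is harmless but not needed, since the likelihood-ratio criterion already yields sufficiency as part of its conclusion.
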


\begin{proof}[Proof of \cref{prop:sufficiency}]
    By a general characterisation of minimal sufficient statistics \citep[see, \eg, Th.~6.2.13 in][]{Casella:2001:Statistical_Inference}, $Ty$ is a minimal sufficient statistic for $x$ if and only if it is true that $p(y_1 \cond x) / p(y_2 \cond x)$ is constant as a function of $x$ if and only if $Ty_1 = Ty_2$.
    Indeed, by \cref{lemma:signal-to-noise_ratio},
    \[
        \log \frac{p(y_1 \cond x)}{p(y_2 \cond x)}
        = (T y_1 - T y_2)^\T (H^\T \Sigma^{-1} H)^{-1} x
        + \text{const.}
    \]
    which, by invertibility of $H^\T \Sigma^{-1} H$, does not depend on $x$ if and only if $T y_1 = T y_2$.
\end{proof}

\section{Proof of Prop.\ \ref{prop:general_sufficiency}}
\label{app:proof}

\begin{proof}[Proof of \cref{prop:general_sufficiency}]
    By \cref{prop:sufficiency},
    \[
        p(f \cond Y)
        = \int p(f \cond x) p(x \cond Y) \isd x
        = \int p(f \cond x) p(x \cond TY) \isd x
        = p(f \cond TY)
    \]
    where $TY$ are observations for the process $Ty$.
    Since 
    \[
        y \cond x \sim \GP(Hx, \delta[t - t']\Sigma),
    \]
    the process $Ty$ has distribution
    \[
        Ty \cond x \sim \GP(THx, \delta[t - t']T \Sigma T^\T).
    \]
    By explicit calculation, we find that
    \[
        TH = (H^\T \Sigma^{-1} H)^{-1}H^\T \Sigma^{-1} H = I
    \]
    and
    \[
        T \Sigma T^\T = (H^\T \Sigma^{-1} H)^{-1}H^\T \Sigma^{-1} \Sigma \Sigma^{-1} H (H^\T \Sigma^{-1} H)^{-1} = (H^\T \Sigma^{-1} H)^{-1}.
    \]
    Thus
    \[
        Ty \cond x \sim \GP(x, \delta[t - t'] \Sigma_T)
        \quad\text{where}\quad
        \Sigma_T = (H^\T \Sigma^{-1} H)^{-1}.
    \]
    Moreover, using \cref{lemma:signal-to-noise_ratio},
    the probability of the data $Y$ is given by
    \[
        p(Y)
        = \int \prod_{i=1}^n \Normal(y_i\cond Hx, \Sigma) p(x) \isd x
        = \sbrac*{
            \frac{\Normal(y_i\cond 0, \Sigma)}{\Normal(y_i\cond 0, \Sigma_T)}
        } \int \prod_{i=1}^n \Normal(Ty_i\cond x, \Sigma_T) p(x) \isd x. \qedhere
    \]
\end{proof}

\section{Interpretation of the Likelihood}
\label{app:interpretation-likelihood}
\begin{proposition} \label{prop:regularisation-term}
    The regularisation terms in like likelihood in \cref{prop:general_sufficiency} can be written as
    \[
        \log \frac{\Normal(y\cond 0, \Sigma)}{\Normal(T y\cond 0, \Sigma_T)}
        = 
        -\frac12 (p - m)\log 2\pi
        - \overbracket{
            \frac12\log \frac{|\Sigma|}{|\Sigma_T|}
        }^{\mathclap{
            \text{noise ``lost by projection''}
        }}
        - \underbracket{
            \frac12\norm{(I_p - HT)y}_{\Sigma}^2,
            \vphantom{\frac{|\Sigma|}{|\Sigma_T|}}
        }_{\mathclap{
            \text{data ``lost by projection''}
        }}
    \]
    where $\norm{\vardot}_\Sigma$ denotes the norm induced by the weighted inner product $\lra{\vardot, \vardot}_\Sigma = \lra{\Sigma^{-1}\vardot, \vardot}$.
\end{proposition}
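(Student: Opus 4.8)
The plan is to expand both Gaussian densities explicitly and reduce the claim to a single quadratic-form identity, which can then be dispatched using \cref{lemma:signal-to-noise_ratio}. Writing out $\log\Normal(y\cond 0,\Sigma) = -\tfrac{p}{2}\log 2\pi - \tfrac12\log|\Sigma| - \tfrac12 y^\T\Sigma^{-1}y$ and, since $\Sigma_T$ is $m\times m$, $\log\Normal(Ty\cond 0,\Sigma_T) = -\tfrac{m}{2}\log 2\pi - \tfrac12\log|\Sigma_T| - \tfrac12 (Ty)^\T\Sigma_T^{-1}(Ty)$, the difference immediately produces the first two stated terms, $-\tfrac12(p-m)\log 2\pi$ and $-\tfrac12\log(|\Sigma|/|\Sigma_T|)$. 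Everything then comes down to showing $y^\T\Sigma^{-1}y - (Ty)^\T\Sigma_T^{-1}(Ty) = (y - HTy)^\T\Sigma^{-1}(y - HTy) = \norm{(I_p-HT)y}_\Sigma^2$.

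For that identity, the cleanest route is to invoke \cref{lemma:signal-to-noise_ratio} at the specific point $x = Ty$ (the lemma holds for every $x\in\R^m$). The right-hand side numerator becomes $\Normal(Ty\cond Ty,\Sigma_T)$, the density evaluated at its own mean, hence equal to the bare normalising constant $(2\pi)^{-m/2}|\Sigma_T|^{-1/2}$. Rearranging the lemma then gives $\Normal(y\cond 0,\Sigma)/\Normal(Ty\cond 0,\Sigma_T) = \Normal(y\cond HTy,\Sigma)\,(2\pi)^{m/2}|\Sigma_T|^{1/2}$, and taking logarithms yields exactly the claimed expression, with $\tfrac12\norm{(I_p-HT)y}_\Sigma^2$ arising as the quadratic term of $\log\Normal(y\cond HTy,\Sigma)$ and the constants $(2\pi)^{m/2}|\Sigma_T|^{1/2}$ converting $-\tfrac{p}{2}\log 2\pi$ and $|\Sigma|$ into $-\tfrac12(p-m)\log 2\pi$ and $|\Sigma|/|\Sigma_T|$.

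Alternatively one can verify the quadratic-form identity directly: using $\Sigma_T^{-1} = H^\T\Sigma^{-1}H$ and $TH = I_m$ (both established in the proof of \cref{prop:general_sufficiency}), one checks that $P := HT$ is idempotent, since $P^2 = H(TH)T = HT = P$, and $\Sigma^{-1}$-self-adjoint, since $\Sigma^{-1}P = \Sigma^{-1}H(H^\T\Sigma^{-1}H)^{-1}H^\T\Sigma^{-1}$ is symmetric; consequently $(Ty)^\T\Sigma_T^{-1}(Ty) = y^\T\Sigma^{-1}Py$ and $(y-Py)^\T\Sigma^{-1}(y-Py) = y^\T\Sigma^{-1}y - y^\T\Sigma^{-1}Py$ collapse to the same quantity. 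There is no genuine obstacle here; the only point requiring care is the bookkeeping of the $\log 2\pi$ and log-determinant constants, in particular recognising that evaluating $\Normal(\,\cdot\cond\cdot\,,\Sigma_T)$ at its mean leaves precisely the $(2\pi)^{-m/2}|\Sigma_T|^{-1/2}$ factor that accounts for the shift from $p$ to $p-m$ and from $|\Sigma|$ to $|\Sigma|/|\Sigma_T|$. I would present the lemma-based derivation as the main proof, since it reuses machinery already available and avoids re-deriving the completion of the square.
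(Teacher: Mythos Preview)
Your proposal is correct. The paper takes a slightly different route for the quadratic-form identity: it conjugates by $\Sigma^{-1/2}$ to turn $HT$ into a genuine orthogonal projection $P=\Sigma^{-1/2}HT\Sigma^{1/2}$ onto $\col(\Sigma^{-1/2}H)$, and then uses $(I_p-P)^2=I_p-P$ and symmetry of $I_p-P$ to obtain $\norm{(I_p-P)\Sigma^{-1/2}y}^2=\norm{(I_p-HT)y}_\Sigma^2$. Your main argument---specialising \cref{lemma:signal-to-noise_ratio} to $x=Ty$ so that the right-hand numerator collapses to the bare normalising constant---is a genuinely different and cleaner derivation: it recycles existing machinery and delivers all three terms at once without re-establishing any projection structure. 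Your alternative route (idempotence of $HT$ together with $\Sigma^{-1}$-self-adjointness) is essentially the paper's argument recast in the weighted inner product rather than via the $\Sigma^{-1/2}$ change of variables. The lemma-based proof is shorter; the paper's computation is self-contained and makes the orthogonal-projection interpretation explicit.
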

\begin{proof}
    The first two terms come directly from the multivariate Gaussian densities.
    We show how the third term may be obtained.
    Rearrange
    \[
        \lra{y, T^\T\Sigma_T^{-1} Ty}
        = \lra{\Sigma^{-\frac12} y, (\Sigma^{\frac12}T^\T\Sigma_T^{-1} T \Sigma^{\frac12}) \Sigma^{-\frac12}y}
        = \lra{\Sigma^{-\frac12} y, P \Sigma^{-\frac12} y}
    \]
    where
    \[
        P
        = \Sigma^{\frac12} (T^\T \Sigma_T^{-1} T) \Sigma^{\frac12}
        = \Sigma^{-\frac12} H (H^\T \Sigma^{-1} H)^{-1} H^\T \Sigma^{-\frac12}
        = \Sigma^{-\frac12} H T \Sigma^{\frac12}
    \]
    which is the orthogonal projection onto $\col(\Sigma^{-\frac12} H)$.
    Recall that an orthogonal projection $P$ is defined by $P^2 = P$ and $P^\T = P$.
    Then
    \begin{align*}
        \lra{y, \Sigma^{-1} y} - \lra{y, T^\T\Sigma_T^{-1} Ty}
        &= \lra{\Sigma^{-\frac12} y, (I_p - P) \Sigma^{-\frac12} y} \\
        &= \lra{\Sigma^{-\frac12} y, (I_p - P)^2\Sigma^{-\frac12} y} \\
        &= \lra{(I_p - P)^\T \Sigma^{-\frac12} y, (I_p - P)\Sigma^{-\frac12} y} \\
        &= \norm{(I_p - P) \Sigma^{-\frac12} y}^2,
    \end{align*}
    where we note that $(I_p - P)^2 = I_p - P$ and that $I_p - P$ is symmetric.
    (In fact, $P^\perp = I_p - P$ is the orthogonal projection onto $\col(\Sigma^{-\frac12} H)^\perp$.)
    To conclude, see that
    \[
        \norm{(I_p - P) \Sigma^{-\frac12} y}^2
        = \norm{\Sigma^{-\frac12}(I_p - \Sigma^{\frac12}P \Sigma^{-\frac12}) y}^2
        = \norm{(I_p - HT) y}_\Sigma^2. \qedhere
    \]
\end{proof}

We note that $HT$ is a projection, but not necessarily an \emph{orthogonal} projection.

\section{Tensor Product Basis}
\label{app:tensor}

If the observations can be naturally represented as multi-index arrays in $\R^{p_1\times\cdots \times p_q}$,
where the total number of outputs is $p = \prod_{i=1}^q p_i$,
to obtain a reduction in parameters of $H$, a natural choice is to correspondingly decompose $H = H_1 \otimes \cdots \otimes H_q$ where $\otimes$ is the Kronecker product and $H_i$ a $p_i \times m_i$ matrix.
The latent processes are then naturally seen as a $\R^{m_1\times\cdots\times m_q}$-valued process,
where their total number is $m = \prod_{i=1}^q m_i$.
In this parametrisation of the ILMM, \cref{prop:tensor} shows that the projection and projected noise also become the Kronecker products:
$T = T_1 \otimes \cdots \otimes T_q$ and $\Sigma_T = \Sigma_{T_1} \otimes \cdots \otimes \Sigma_{T_q}$.
Using the vectorisation trick, $TY$ can be computed efficiently without the need to explicitly construct $T$.

\begin{proposition} \label{prop:tensor}
    Let $H$ be a basis that is a tensor product of other bases and let the observation noise $\Sigma$ factorise similarly:
    \[
        H = H_1 \otimes \cdots \otimes H_q
        \quad\text{and}\quad
        \Sigma = \Sigma_1 \otimes \cdots \otimes \Sigma_q.
    \]
    Then the projection is the tensor product of the projections and the projected noise is the tensor product of the projected noises:
    \[
        T = T_1 \otimes \cdots \otimes T_q
        \quad\text{and}\quad
        \Sigma_T = \Sigma_{T_1} \otimes \cdots \otimes \Sigma_{T_q}
    \]
    where
    $T_i = (H_i^\T \Sigma_i^{-1} H_i)^{-1} H_i^\T \Sigma_i^{-1}$
    and
    $\Sigma_i = (H_i^\T \Sigma_i^{-1} H_i)^{-1}$.
\end{proposition}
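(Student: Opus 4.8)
The plan is to reduce the proposition to three standard identities for the Kronecker product: the transpose rule $(A\otimes B)^\T = A^\T\otimes B^\T$, the inverse rule $(A\otimes B)^{-1} = A^{-1}\otimes B^{-1}$ for invertible $A,B$, and the mixed-product rule $(A\otimes B)(C\otimes D) = (AC)\otimes(BD)$. Since $\otimes$ is associative it would be enough to handle $q=2$ and induct, but the bookkeeping is identical for general $q$, so I would simply carry it out once for the general case.

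First I would check that each factor $H_i^\T\Sigma_i^{-1}H_i$ is invertible, so that the claimed $T_i$ and $\Sigma_{T_i}$ are well defined. This is the only point requiring a moment's thought: $\Sigma = \Sigma_1\otimes\cdots\otimes\Sigma_q > 0$ forces each $\Sigma_i$ to be definite (up to an overall sign on each factor, which can be absorbed by rescaling so that each $\Sigma_i > 0$), and $H = H_1\otimes\cdots\otimes H_q$ having linearly independent columns forces each $H_i$ to have linearly independent columns; together these give $H_i^\T\Sigma_i^{-1}H_i > 0$ for every $i$. These are standard facts about Kronecker products of definite / full-column-rank matrices.

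The rest is a direct unwinding of the definitions $T = (H^\T\Sigma^{-1}H)^{-1}H^\T\Sigma^{-1}$ and $\Sigma_T = (H^\T\Sigma^{-1}H)^{-1}$ from \cref{prop:general_sufficiency}. Applying the transpose and inverse rules to the factorisations of $H$ and $\Sigma$ (so that $\Sigma^{-1} = \Sigma_1^{-1}\otimes\cdots\otimes\Sigma_q^{-1}$) and then the mixed-product rule twice,
\[
    H^\T\Sigma^{-1}H = (H_1^\T\Sigma_1^{-1}H_1)\otimes\cdots\otimes(H_q^\T\Sigma_q^{-1}H_q).
\]
Inverting this with the inverse rule gives $\Sigma_T = \Sigma_{T_1}\otimes\cdots\otimes\Sigma_{T_q}$, which is the second claim; and then multiplying on the right by $H^\T\Sigma^{-1} = (H_1^\T\Sigma_1^{-1})\otimes\cdots\otimes(H_q^\T\Sigma_q^{-1})$ and using the mixed-product rule once more yields $T = T_1\otimes\cdots\otimes T_q$, the first claim. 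I do not anticipate any genuine obstacle here; once the invertibility of the factors is in place the argument is mechanical, being nothing more than pushing the Kronecker structure through the closed-form expression for the projection.
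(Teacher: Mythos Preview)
Your proposal is correct and follows essentially the same approach as the paper, which simply states that the result ``follows directly from the compatibility of the Kronecker product with matrix multiplication, transposition, and inversion.'' Your version is more explicit, spelling out the three identities and the intermediate factorisation of $H^\T\Sigma^{-1}H$, and you additionally verify that each $H_i^\T\Sigma_i^{-1}H_i$ is invertible---a point the paper takes for granted under its standing assumptions---but the underlying argument is the same.
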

\begin{proof}
    Follows directly from the compatibility of the Kronecker product with matrix multiplication, transposition, and inversion.
\end{proof}

\section{Cost of Parametrising the Basis}
\label{app:cost_u}

\begin{figure}[t]
    \centering
    \includegraphics[width=\linewidth]{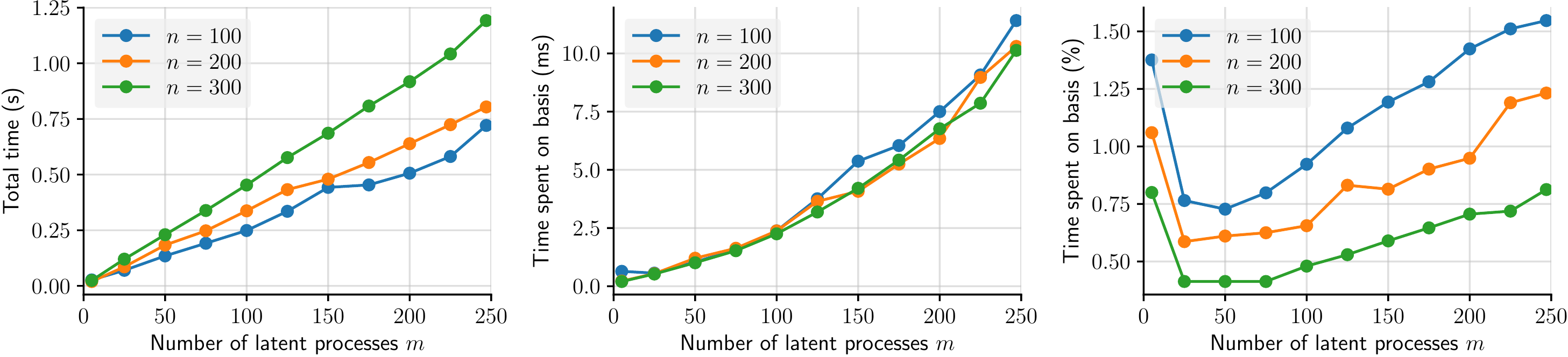}\vspace{-0.5em}
    \caption{
        Comparison of the time it takes to construct the basis $H$ to the total time of a log-marginal likelihood computation for a range of numbers of data points $n$ and numbers of latent processes $m$.
        The data used is from the temperature extrapolation experiment (\cref{exp:temp_extrapolation}).
    }
    \label{fig:timing_h}
\end{figure}

For the OILMM, the only computation that does not scale linearly with the number of latent processes $m$ is the parametrisation of the orthogonal part $U$ of the basis $H$, which takes $\O(m^2 p)$ time.
We argue that this cost is dominated by the cost $\O(n^3 m + n m p)$ of computing the log-marginal likelihood of the projected data:
\begin{enumerate}[topsep=1pt,itemsep=2pt]
    \item[\em (i)] typically $m \le n, p$;
    \item[\em (ii)] the cost of computing the log-marginal likelihood of the projected data scales with $n$, and often $n \gg m, p$; and
    \item[\em (iii)] assuming that $p$ is not much bigger than $n$, computing the log-marginal likelihood of the projected data costs at least $\O(n)$ more, so the cost of parametrising the basis $H$ should become insignificant as $n$ grows.
\end{enumerate}

We compare the time it takes to construct the basis $H$ to the total time of a log-marginal likelihood computation for a range of numbers of data points $n$ and numbers of latent processes $m$.
We use the data from the temperature extrapolation experiment (\cref{exp:temp_extrapolation}).
The results are depicted in \cref{fig:timing_h}.
Observe that, even in the worst case when $m = p = 247$, parametrising the basis $H$ takes no more than $1.5\%$ of the total time at $n = 100$ data points and no more than $0.8\%$ of the total time at $n = 300$ data points.
This cost is negligible, even in this worst case.

\section{Characterisation of Diagonal Projected Noise}
\label{app:diagonal-noise}

\cref{prop:decoupling} says that the projected noise is diagonal if and only if $H$ is of the form $H = \Sigma^{\frac{1}{2}} U S^{\frac{1}{2}}$ with $U$ a matrix with orthonormal columns and $S > 0$ diagonal.
This condition is awkward, as it couples $H$ and $\Sigma$.
Fortunately, \cref{prop:decoupling} also shows that we may drop $H$'s dependency on $\Sigma$ if and only if every column of $U$ is an eigenvector of $\Sigma$.

\begin{proposition} \label{prop:decoupling}
    The projected noise $\Sigma_T$ is diagonal if and only if $H$ is of the form $H = \Sigma^{\frac{1}{2}} U S^{\frac{1}{2}}$ with $U$ a matrix with orthonormal columns and $S > 0$ diagonal.
    Suppose that this is the case, and fix such a $U$.
    Then $H$ is of the form $H = U D^{\frac{1}{2}}$ with $D > 0$ diagonal if and only if every column of $U$ is an eigenvector of $\Sigma$.
\end{proposition}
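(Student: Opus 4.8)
The plan is to reduce everything to the identity $\Sigma_T = (H^\T \Sigma^{-1} H)^{-1}$ recalled after \cref{prop:general_sufficiency} and then argue by elementary linear algebra. Since $\Sigma > 0$ and $H$ has linearly independent columns, $H^\T \Sigma^{-1} H$ is invertible, and an invertible matrix is diagonal if and only if its inverse is; hence $\Sigma_T$ is diagonal if and only if $H^\T \Sigma^{-1} H$ is. Writing $G = \Sigma^{-\frac12} H$ (with $\Sigma^{\frac12}$ the positive-definite square root), we have $H^\T \Sigma^{-1} H = G^\T G$, whose off-diagonal entries are the pairwise inner products of the columns of $G$. So $\Sigma_T$ is diagonal iff the columns of $G$ are pairwise orthogonal; since they are moreover nonzero by linear independence of the columns of $H$, normalising them gives $G = U S^{\frac12}$ with $U$ having orthonormal columns and $S = \diag(\norm{g_1}^2, \ldots, \norm{g_m}^2) > 0$, i.e.\ $H = \Sigma^{\frac12} U S^{\frac12}$. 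The converse is the one-line check $H^\T \Sigma^{-1} H = S^{\frac12} U^\T U S^{\frac12} = S$, which is diagonal.

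For the second statement, fix $U$ and $S$ with $H = \Sigma^{\frac12} U S^{\frac12}$ as above. The key observation is that writing $H = U D^{\frac12}$ with the \emph{same} $U$ is equivalent to $U D^{\frac12} = \Sigma^{\frac12} U S^{\frac12}$, i.e.\ $\Sigma^{\frac12} U = U \Lambda$ for the diagonal matrix $\Lambda = D^{\frac12} S^{-\frac12}$; reading this column by column says exactly that each column of $U$ is an eigenvector of $\Sigma^{\frac12}$, equivalently of $\Sigma$ (an eigenvector of $\Sigma^{\frac12}$ with eigenvalue $\mu$ is an eigenvector of $\Sigma$ with eigenvalue $\mu^2$, and conversely on each eigenspace of $\Sigma$ — which is positive — $\Sigma^{\frac12}$ acts as the positive square root of the eigenvalue). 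Conversely, if $\Sigma u_i = \lambda_i u_i$ for every $i$, with $\lambda_i > 0$ since $\Sigma > 0$, then $\Sigma^{\frac12} U = U \Lambda^{\frac12}$ with $\Lambda = \diag(\lambda_1, \ldots, \lambda_m)$, so $H = \Sigma^{\frac12} U S^{\frac12} = U (\Lambda S)^{\frac12}$ and $D = \Lambda S > 0$ is diagonal.

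I do not anticipate a genuine obstacle here — both parts are short manipulations once the reduction to $H^\T \Sigma^{-1} H$ is made. The one point that needs care, and which I would state explicitly, is that in the second part ``$H$ is of the form $H = U D^{\frac12}$'' refers to the specific $U$ fixed in the first part (this $U$ is determined by $H$ and $\Sigma$ up to the signs of its columns). For an unrelated matrix $\tilde U$ with orthonormal columns the equivalence would fail — already for $p = 3$, $m = 2$ with $\Sigma$ having distinct eigenvalues one can pick $H$ with orthogonal columns whose $\Sigma^{-\frac12}$-normalised directions are not eigenvectors of $\Sigma$ — so it is precisely the shared-$U$ requirement that turns the statement into the clean eigenvector condition.
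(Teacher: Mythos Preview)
Your proof is correct and follows essentially the same route as the paper's: reduce to $H^\T\Sigma^{-1}H$ being diagonal, factor through $\Sigma^{-\frac12}H$ to extract orthonormal columns, and for the second part translate $H=UD^{\frac12}$ into $\Sigma^{\frac12}U=U\Lambda$ with $\Lambda$ diagonal. Your added remark clarifying that the second statement concerns the \emph{same} $U$ fixed in the first part is a helpful addition not made explicit in the paper.
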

\begin{proof}
    The projected noise is diagonal if and only if $H^\T \Sigma^{-1} H = S$ for some $S > 0$ diagonal.
    This condition is equivalent to
    \[
        S^{-\frac{1}{2}}H^\T \Sigma^{-\frac{1}{2}} \Sigma^{-\frac{1}{2}} H S^{-\frac{1}{2}} = I_m,
    \]
    which, in turn, holds if and only if $\Sigma^{-\frac{1}{2}} H S^{-\frac{1}{2}} = U$ is a matrix with orthonormal columns.
    Thus, the projected noise is diagonal if and only if $H$ is of the form
    $
        H = \Sigma^{\frac{1}{2}} U S^{\frac{1}{2}}
    $
    with $U$ a matrix with orthonormal columns and $S > 0$ diagonal.
    
    For the second statement, note that every column of $U$ is an eigenvector of $\Sigma$ if and only if it is an eigenvector of $\Sigma^{\frac12}$.
    Suppose that $H$ is of the form $H = U D^{\frac12}$ with $D > 0$ diagonal.
    Then
    \[
        \Sigma^{\frac12}U
        = \Sigma^{\frac12} U S^{\frac12} S^{-\frac12}
        = H S^{-\frac12}
        = U D^{\frac12} S^{-\frac12},
    \]
    so every column of $U$ is an eigenvector of $\Sigma^{\frac12}$.
    Conversely, suppose that every column of $U$ is an eigenvector of $\Sigma^{\frac12}$ with eigenvalues stacked into a diagonal matrix $D > 0$.
    Then
    \[
        H
        = \Sigma^{\frac12} U S^{\frac12}
        = U D S^{\frac12},
    \]
    which is of the desired form.
\end{proof}

\section{Kullback--Leibler Divergence Between an ILMM and OILMM}
\label{app:kl}
\begin{proposition} \label{prop:kl}
    Consider two ILMMs with equal $\Sigma = \sigma^2 I_p$, equal $K(t, t')$, but different bases $H$ and $\hat H$.
    Let $t_1,\ldots,t_n \in \Tc$ and denote $x_i = x(t_i)$ and $y_i = y(t_i)$.
    It then holds that
    \[
        \KL(p(y_{1:n}, x_{1:n}) \divsep \hat p(y_{1:n}, x_{1:n}))
        = \KL(\hat p(y_{1:n}, x_{1:n}) \divsep p(y_{1:n}, x_{1:n}))
        = n\frac{1}{2\sigma^2}\norm{H - \hat H}_F^2
    \]
    and
    \[
        \inf_{\hat H\,:\,\text{OILMM}}
        \KL(p(y_{1:n}, x_{1:n}) \divsep \hat p(y_{1:n}, x_{1:n}))
        \le n\frac{\E[\norm{f(t)}^2] }{\sigma^2} \max_i\, (1 - V_{ii})
        \le n\frac{\E[\norm{f(t)}^2] }{2\sigma^2} \norm{I_m - V}_F^2
    \]
    where $\hat H$ ranges over matrices of the form $U S^{\frac12}$ with $U$ a matrix with orthonormal columns and $S^{\frac12} > 0$ diagonal, $V$ is the orthogonal matrix collecting the right singular vectors of $H$, and $\E[\norm{f(t)}^2]$ denotes the variance of the observations under the first ILMM before adding noise.
\end{proposition}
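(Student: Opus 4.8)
The plan is to exploit the factorisation $p(y_{1:n},x_{1:n}) = p(x_{1:n})\,p(y_{1:n}\cond x_{1:n})$ together with the fact that the two models differ only through the basis. Since $K$ is shared, the latent marginals $p(x_{1:n})$ and $\hat p(x_{1:n})$ are identical, so the chain rule for the KL divergence reduces the joint divergence to $\E_{p(x_{1:n})}\big[\KL(p(y_{1:n}\cond x_{1:n})\divsep\hat p(y_{1:n}\cond x_{1:n}))\big]$. Conditional on $x$, the observations factorise over time into $\Normal(Hx_i,\sigma^2 I_p)$ versus $\Normal(\hat Hx_i,\sigma^2 I_p)$, and two Gaussians with a common covariance $\sigma^2 I_p$ have KL divergence $\tfrac{1}{2\sigma^2}\norm{Hx_i-\hat Hx_i}^2$, which is manifestly symmetric in the roles of the two models. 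Summing over $i$, taking the expectation over $x_i$, and using $K(t,t)=I_m$ so that $\E[x_ix_i^\T]=I_m$, gives $\E\norm{(H-\hat H)x_i}^2=\tr\big((H-\hat H)(H-\hat H)^\T\big)=\norm{H-\hat H}_F^2$; hence the joint divergence equals $\tfrac{n}{2\sigma^2}\norm{H-\hat H}_F^2$ in either direction, which is the first display.

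For the bound on the infimum over OILMM bases, I would not solve the minimisation exactly: it suffices to exhibit one feasible $\hat H$ and apply the formula just derived. Take the thin singular value decomposition $H=P\Lambda V^\T$, with $P\in\R^{p\times m}$ having orthonormal columns, $\Lambda$ the $m\times m$ diagonal matrix of singular values, and $V$ the $m\times m$ orthogonal matrix of right singular vectors. Since $H$ has linearly independent columns (the standing assumption), $\Lambda>0$, so $\hat H=P\Lambda$ is of the admissible form $US^{\frac12}$ with $U=P$ and $S^{\frac12}=\Lambda>0$. Consequently $\inf_{\hat H\,:\,\text{OILMM}}\KL(\cdots)\le\tfrac{n}{2\sigma^2}\norm{H-P\Lambda}_F^2$.

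It then remains to estimate $\norm{H-P\Lambda}_F^2$. Left-multiplication by $P$ (orthonormal columns) preserves the Frobenius norm, so $\norm{H-P\Lambda}_F^2=\norm{P\Lambda(V^\T-I_m)}_F^2=\norm{\Lambda(V^\T-I_m)}_F^2=\sum_{i=1}^m\Lambda_{ii}^2\norm{v_i-e_i}^2$, where $v_i$ is the $i$\textsuperscript{th} column of $V$; because $V$ is orthogonal, $\norm{v_i-e_i}^2=2(1-V_{ii})\ge0$. Factoring out $\max_i(1-V_{ii})$ and using $\sum_i\Lambda_{ii}^2=\norm{H}_F^2=\tr(HH^\T)=\tr \E[f(t)f^\T(t)]=\E[\norm{f(t)}^2]$ (again invoking $K(t,t)=I_m$) yields the middle expression $\tfrac{n\E[\norm{f(t)}^2]}{\sigma^2}\max_i(1-V_{ii})$. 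For the last inequality, the identical column-wise computation gives $\norm{I_m-V}_F^2=\sum_j\norm{e_j-v_j}^2=2\sum_j(1-V_{jj})\ge2\max_i(1-V_{ii})$, hence $\max_i(1-V_{ii})\le\tfrac12\norm{I_m-V}_F^2$, which completes the chain.

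The argument is almost entirely bookkeeping; the one place to stay alert is that the chain-rule reduction requires the two latent priors to coincide (guaranteed by the shared $K$) and that $\hat H=P\Lambda$ is admissible only because $\Lambda>0$, which is exactly the running assumption that $H$ has linearly independent columns. I do not foresee a genuine obstacle: the Gaussian-KL identity is standard and the Frobenius-norm manipulations are elementary, so the proof should be short.
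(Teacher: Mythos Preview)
Your proposal is correct and follows essentially the same route as the paper: both use the chain rule to reduce the joint KL to the expected conditional KL (since the latent priors coincide), obtain $\tfrac{n}{2\sigma^2}\norm{H-\hat H}_F^2$ via $\E[x_ix_i^\T]=I_m$, then pick $\hat H$ equal to the left singular vectors times the singular values and expand $\norm{\Lambda(V^\T-I_m)}_F^2=2\sum_i\Lambda_{ii}^2(1-V_{ii})$ before bounding by the max and by $\tfrac12\norm{I_m-V}_F^2$. The only cosmetic difference is that the paper reaches $2\sum_iS_{ii}^2(1-V_{ii})$ via the trace identity $\tr[SS-SVS]$ rather than your row-by-row sum, which is the same computation.
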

\begin{proof}
    Start out by expanding the Kullback--Leibler divergence and noting that $p(x_{1:n}) = \hat p(x_{1:n})$:
    \begin{align*}
        \KL(p(y_{1:n}, x_{1:n}) \divsep \hat p(y_{1:n}, x_{1:n}))
        &= -\E_{p(y_{1:n}, x_{1:n})}
            \log\frac
                {\hat p(y_{1:n} \cond x_{1:n}) \cancel{\hat p(x_{1:n})}}
                {p(y_{1:n}\cond x_{1:n}) \cancel{p(x_{1:n})}} \\
        &= -\sum_{i=1}^n \E_{p(y_i, x_i)}\sbrac{
            \log \hat p(y_i \cond x_i)
            - \log p(y_i \cond x_i)
        } \\
        &= -\sum_{i=1}^n \E_{p(y_i, x_i)}\sbrac{
            \log \Normal(y_i\cond\hat H x_i, \sigma^2 I_p)
            - \log \Normal(y_i\cond H x_i, \sigma^2 I_p)
        }.
    \end{align*}
    Here
    \[
        \E_{p(y_i, x_i)}[\log \Normal(y_i\cond\hat H x_i, \sigma^2 I_p)]
        =
            -\frac p2 \log 2 \pi \sigma^2
            -\frac{1}{2\sigma^2} \E_{p(y_i, x_i)}[\norm{y_i - \hat H x_i}^2]
    \]
    where
    \begin{align*}
        \E_{p(y_i, x_i)}[\norm{y_i - \hat H x_i}^2]
        &= \E_{p(y_i, x_i)}\tr[
            y_i y_i^\T
            - 2 y_i x_i^\T \hat H^\T
            + x_i x_i^\T \hat H \hat H^\T
        ] \\
        &= \tr[
            HH^\T + \sigma^2 I
            - 2 H \hat H^\T
            + \hat H \hat H^\T
        ] \\
        &= p \sigma^2 + \tr[
            (H - \hat H)
            (H - \hat H)^\T
        ] \\
        &= p \sigma^2 + \norm{H - \hat H}^2_F.
    \end{align*}
    Therefore,
    \[
        \KL(p(y_{1:n}, x_{1:n}) \divsep \hat p(y_{1:n}, x_{1:n}))
        =
            n\frac{1}{2\sigma^2}\norm{H - \hat H}_F^2.
    \]
    Let $H = U S V^\T$ be the SVD of $H$ where $U$ is a truncated  orthogonal matrix with the same shape as $H$, $S > 0$ is a square diagonal matrix, and $V$ is a square orthogonal matrix.
    Note that $U^\T U = I_m$, but $U U^\T \neq I_p$.
    Then, choosing $\hat H = U S$,
    \[
        \inf_{\hat H\,:\,\text{OILMM}}
        \KL(p(y_{1:n}, x_{1:n}) \divsep \hat p(y_{1:n}, x_{1:n}))
        \le n\frac{1}{2\sigma^2}\norm{U(SV^\T - S)}_F^2
        = n\frac{1}{2\sigma^2}\norm{SV^\T - S}_F^2
    \]
    since $\norm{UA}_F^2 = \tr[A^\T U^\T U A] = \tr[A^\T A] = \norm{A}_F^2$.
    We now further simplify:
    \[
        \norm{SV^\T - S}_F^2
        = \tr[
            (SV^\T - S)(SV^\T - S)^\T
        ]
        = \tr[
            SV^\T V S - S V^\T S - S V S + S S
        ]
        = 2 \tr[
            S S - S V S
        ].
    \]
    Hence, by definition of the trace and the fact that $S$ is diagonal,
    \[
        \norm{SV^\T - S}_F^2
        = 2\sum_{i=1}^m
            S_{ii}^2(1 - V_{ii})
        \le 2\parens*{
            \sum_{i=1}^m S_{ii}^2
        } \max_i\, (1 - V_{ii})
        = 2\E[\norm{f}^2]  \max_i\, (1 - V_{ii}),
    \]
    since 
    \[
        \E[\norm{f(t)}^2] = \E\tr[f(t)f^\T(t)] = \tr[HH^\T] = \tr[S^2].
    \]
    Therefore,
    \[
        \norm{SV^\T - S}_F^2
        \le 2\E[\norm{f}^2]  \max_i\, (1 - V_{ii})
        \le 2\E[\norm{f}^2] \sum_{i=1}^m (1 - V_{ii})
        = \E[\norm{f}^2]\norm{I_m - V}_F^2,
    \]
    where the equality follows from a similar calculation:
    \[
        \norm{I_m - V}_F^2
        =\tr[I_m - V^\T - V + V^\T V]
        =2\tr[I_m - V]. \qedhere
    \]
\end{proof}

\section{OILMM: Projection and Projected Noise}
\label{app:oilmm-proj-and-noise}

\begin{proposition}
    \label{prop:form}
    Consider the OILMM (\cref{mod:OILMM}).
    Then the projection and projected noise are given by
    \[
        T = S^{-\frac12}U^\T
        \quad\text{and}\quad
        \Sigma_T = \sigma^{2}S^{-1} + D.
    \]
\end{proposition}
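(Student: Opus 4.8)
The plan is to sidestep inverting $\Sigma$ explicitly (which would require the Woodbury identity) by exploiting that $\col(H)$ is an invariant subspace of $\Sigma$. First I would record the elementary identity $H^\T H = S^{\frac12} U^\T U S^{\frac12} = S$, using that $U$ has orthonormal columns. Then apply $\Sigma$ to $H$ on the right:
\[
    \Sigma H = (\sigma^2 I_p + H D H^\T) H = H(\sigma^2 I_m + D H^\T H) = H(\sigma^2 I_m + DS) =: HM,
\]
where $M = \sigma^2 I_m + DS$ is diagonal with strictly positive diagonal entries (since $\sigma^2 > 0$, $D \geq 0$, $S > 0$), hence invertible, and $\Sigma \succeq \sigma^2 I_p$ is invertible. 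Rearranging gives $\Sigma^{-1} H = H M^{-1}$, and transposing (using symmetry of $\Sigma^{-1}$ and $M^{-1}$) gives $H^\T \Sigma^{-1} = M^{-1} H^\T$.

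From here everything is substitution into $\Sigma_T = (H^\T \Sigma^{-1} H)^{-1}$ (from \cref{prop:general_sufficiency}) and $T = \Sigma_T H^\T \Sigma^{-1}$ (from \cref{prop:mle}). For the projected noise,
\[
    H^\T \Sigma^{-1} H = H^\T H M^{-1} = S M^{-1} = S(\sigma^2 I_m + DS)^{-1},
\]
which is diagonal; inverting entrywise gives $\Sigma_T$ with diagonal $(\sigma^2 + D_{ii}S_{ii})/S_{ii}$, i.e.\ $\Sigma_T = \sigma^2 S^{-1} + D$. For the projection,
\[
    T = \Sigma_T H^\T \Sigma^{-1} = \Sigma_T M^{-1} H^\T = (\sigma^2 S^{-1} + D)(\sigma^2 I_m + DS)^{-1} H^\T = S^{-1} H^\T = S^{-\frac12} U^\T,
\]
where the penultimate equality is the one-line diagonal computation $(\sigma^2 S_{ii}^{-1} + D_{ii})/(\sigma^2 + D_{ii} S_{ii}) = S_{ii}^{-1}$ and the last uses $H = U S^{\frac12}$.

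I do not expect any genuine obstacle here. The only points that need a word are the invertibility of $M$ (diagonal with positive diagonal) and of $\Sigma$ (immediate from $D \geq 0$ and $\sigma^2 > 0$), together with the remark that the diagonal matrices $S$, $D$, $M$ pairwise commute, so the entrywise manipulations are legitimate. The one mild pitfall is to resist expanding $\Sigma^{-1}$ via Woodbury, which is correct but messier; the invariant-subspace identity $\Sigma H = H M$ keeps the whole computation effectively one-dimensional in each coordinate.
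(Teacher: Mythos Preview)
Your proof is correct. The invariant-subspace identity $\Sigma H = HM$ with $M = \sigma^2 I_m + DS$ diagonal is exactly the right observation, and all subsequent manipulations are sound (diagonal matrices commute, $M$ and $\Sigma$ are invertible for the reasons you state).

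The paper takes a different route. Rather than computing with the full $\Sigma = \sigma^2 I_p + HDH^\T$, it observes that the term $HDH^\T$ can be absorbed into the latent kernel: the marginal covariance $HK(t,t')H^\T + \delta[t-t'](\sigma^2 I_p + HDH^\T)$ equals $H(K(t,t') + \delta[t-t']D)H^\T + \delta[t-t']\sigma^2 I_p$, so one may pretend $D=0$ and $\Sigma = \sigma^2 I_p$. In that case $H^\T\Sigma^{-1}H = \sigma^{-2}S$ and the formulas for $T$ and $\Sigma_T$ are immediate; at the end, ``pulling $D$ back out'' of $K$ simply adds $D$ to $\Sigma_T$, by the form of the projected model in \cref{prop:general_sufficiency}. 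Your approach is a clean, self-contained linear-algebra computation that does not need to invoke the generative-model equivalence; the paper's trick is shorter once seen, carries the conceptual message that $D$ is latent-process noise, and is reused verbatim in the missing-data proposition (\cref{prop:form-missing}).
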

\begin{proof}
    To begin with, note that
    \[
        y \sim \GP(
            \underbracket{
                H K(t, t') H^\T + \delta[t - t'](\sigma^2 I_p + H D H^\T)
            }_{
                \mathclap{
                    \displaystyle
                    =H (K(t, t') + \delta[t - t'] D) H^\T + \delta[t - t'] \sigma^2 I_p,
                }
            }
        ),
    \]
    so we can assume that $D = 0$ by ``absorbing it into $K(t, t')$''.
    We then find that
    \[
        H^\T \Sigma^{-1} H = \sigma^{-2} S,
    \]
    so
    \[
        \Sigma_T
        = T \Sigma T^\T
        = (H^\T \Sigma^{-1} H)^{-1}
        = \sigma^2 S^{-1}.
    \]
    Moreover, then
    \[
        T
        = (H^\T \Sigma^{-1} H)^{-1} H^\T \Sigma^{-1} \\
        = (\sigma^2 S^{-1})(\sigma^{-2} S^{\frac12} U^\T)
        = S^{-\frac12} U^\T.
    \]
    Finally, ``pull $D$ back out of $K(t,t')$'', which we note is equivalent to adding it to $\Sigma_T$ by \cref{prop:general_sufficiency}.
\end{proof}

\section{OILMM: Likelihood}
\label{app:oilmm-likelihood}

\begin{proposition} \label{prop:likelihood}
    Consider the OILMM (\cref{mod:OILMM}).
    Let $Y$ be an $p \times n$ matrix of observations for $y$.
    Then
    \[
        \log p(Y)
        =
            - \frac{n}{2} \log |S|
            - \frac{n (p-m)}{2} \log 2 \pi \sigma^2
            - \frac{1}{2\sigma^2} \norm{(I_p-UU^\T) Y}_F^2
            + \sum_{i=1}^m \log \Normal((TY)_{i:}\cond 0, K_i + (\sigma^2/S_{ii} + D_{ii}) I_n)
    \]
    where $\norm{\vardot}_F$ denotes the Frobenius norm and $K_i$ is the $n\times n$ kernel matrix for the $i$\textsuperscript{th} latent process $x_i$.
\end{proposition}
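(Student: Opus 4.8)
The plan is to start from the factorised expression for $p(Y)$ in \cref{prop:general_sufficiency} and specialise every ingredient to the OILMM using the explicit forms $T = S^{-\frac12}U^\T$ and $\Sigma_T = \sigma^2 S^{-1} + D$ from \cref{prop:form}, together with $H = US^{\frac12}$ and $\Sigma = \sigma^2 I_p + HDH^\T$. Writing $\log p(Y)$ as the sum of (a) the $n$ correction terms $\sum_{i=1}^n \log\bigl(\Normal(y_i\cond 0,\Sigma)/\Normal(Ty_i\cond 0,\Sigma_T)\bigr)$ and (b) $\log\int p(x)\prod_{i=1}^n\Normal(Ty_i\cond x_i,\Sigma_T)\isd x$, I will show that (b) produces the final sum over latent processes and (a) produces the three leading terms.

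For (b), the key point is that $\Sigma_T = \sigma^2 S^{-1}+D$ is diagonal and $K$ is a diagonal multi-output kernel, so $x_1,\dots,x_m$ are independent GPs and the likelihood $\prod_i\Normal(Ty_i\cond x_i,\Sigma_T)$ factorises across the $m$ coordinates. Hence the integral splits into $m$ independent single-output GP marginal likelihoods, the $i$th being that of $x_i\sim\GP(0,K_i)$ observed at $t_1,\dots,t_n$ as the row vector $(TY)_{i:}$ under i.i.d.\ noise of variance $(\Sigma_T)_{ii} = \sigma^2/S_{ii}+D_{ii}$. This is exactly $\sum_{i=1}^m\log\Normal((TY)_{i:}\cond 0, K_i + (\sigma^2/S_{ii}+D_{ii})I_n)$.

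For (a), I will invoke \cref{prop:regularisation-term}, which writes each correction term as $-\tfrac12(p-m)\log2\pi - \tfrac12\log(|\Sigma|/|\Sigma_T|) - \tfrac12\norm{(I_p-HT)y_i}_\Sigma^2$, and evaluate the two nontrivial pieces. Since $S^{\frac12}DS^{\frac12} = SD$ and $U$ has orthonormal columns, extending $U$ to an orthogonal basis $[U\ U_\perp]$ block-diagonalises $\Sigma = \sigma^2 I_p + U(SD)U^\T$, giving $|\Sigma| = (\sigma^2)^{p-m}\prod_i(\sigma^2 + S_{ii}D_{ii})$; and $|\Sigma_T| = \prod_i(\sigma^2/S_{ii}+D_{ii}) = |S|^{-1}\prod_i(\sigma^2 + S_{ii}D_{ii})$, so $|\Sigma|/|\Sigma_T| = (\sigma^2)^{p-m}|S|$. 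Summing the first two pieces over the $n$ observations then yields $-\tfrac n2(p-m)\log2\pi\sigma^2 - \tfrac n2\log|S|$. Next, $HT = US^{\frac12}S^{-\frac12}U^\T = UU^\T$, so $I_p-HT = I_p - UU^\T$ is the orthogonal projection onto $\col(U)^\perp$; on that subspace $\Sigma$ acts as $\sigma^2 I$ (because $U^\T v = 0$ implies $\Sigma v = \sigma^2 v$), hence $\norm{(I_p-UU^\T)y_i}_\Sigma^2 = \sigma^{-2}\norm{(I_p-UU^\T)y_i}^2$, and summing over $i$ gives $-\tfrac1{2\sigma^2}\norm{(I_p-UU^\T)Y}_F^2$. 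Collecting the three contributions yields the claimed formula.

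The calculation is essentially bookkeeping once \cref{prop:general_sufficiency,prop:regularisation-term,prop:form} are in hand; the only place requiring a small observation is the simultaneous evaluation of the two determinants so that the $\prod_i(\sigma^2+S_{ii}D_{ii})$ factors cancel and leave the clean $(\sigma^2)^{p-m}|S|$. Equivalently, one can first note (as in the proof of \cref{prop:form}) that $D$ may be ``absorbed into $K(t,t')$'', reducing to the case $\Sigma = \sigma^2 I_p$ where both the determinant ratio and $\norm{(I_p-UU^\T)y_i}_\Sigma^2 = \sigma^{-2}\norm{(I_p-UU^\T)y_i}^2$ are immediate, and then ``pull $D$ back out'' into $\Sigma_T$. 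I expect no genuine obstacle beyond keeping this bookkeeping straight.
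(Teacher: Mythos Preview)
Your proposal is correct and follows essentially the same route as the paper: both proofs start from \cref{prop:general_sufficiency} and \cref{prop:regularisation-term}, use \cref{prop:form} for $T$ and $\Sigma_T$, identify $I_p-HT=I_p-UU^\T$, and split the integral over $x$ into $m$ independent single-output GP marginal likelihoods. The only cosmetic difference is that the paper immediately ``absorbs $D$ into $K$'' (the alternative you mention at the end) so that $\Sigma=\sigma^2 I_p$ and the determinant ratio and the $\Sigma$-norm simplify trivially, whereas your primary route computes $|\Sigma|$ and $|\Sigma_T|$ directly and observes that $(I_p-UU^\T)y_i$ lies in an eigenspace of $\Sigma$ with eigenvalue $\sigma^2$; both arrive at the same $(\sigma^2)^{p-m}|S|$ and $\sigma^{-2}\norm{(I_p-UU^\T)y_i}^2$.
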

\begin{proof}
    By \cref{prop:general_sufficiency} and \cref{prop:regularisation-term}, we have
    \[
        \log p(Y) = 
                -\frac{n (p - m)}{2}\log 2\pi
                - \frac{n}{2}\log \frac{|\Sigma|}{|\Sigma_T|}
                - \frac12 \sum_{i=1}^n \norm{(I_p - HT)y_i}_{\Sigma}^2
            + \log \int p(x) \prod_{i=1}^n \Normal(T y_i\cond x_i,\Sigma_T) \isd x.
    \]
    Using the same trick as in the proof of \cref{prop:form}, assume that $D = 0$ by ``absorbing it into $K(t, t')$''.
    We then simplify the terms one by one.
    First, we have that
    \[
        \log \frac{|\Sigma|}{|\Sigma_T|}
        = \log \frac{|\sigma^2 I_p|}{|\sigma^{2} S^{-1}|}
        = (p - m)\log \sigma^2 + \log |S|.
    \]
    Second, note that $I_p - HT  = I_p - UU^\T$, which we denote by $P_{U^\perp}$ and which is symmetric, so
    \[
        \norm{(I_p - HT)y_i}_{\Sigma}^2
        = \norm{P_{U^\perp}y_i}_{\Sigma}^2
        = \lra{P_{U^\perp} y_i, \Sigma^{-1}P_{U^\perp}y_i}
        = \sigma^{-2}\lra{P_{U^\perp} y_i, P_{U^\perp} y_i}
        = \sigma^{-2}\tr[P_{U^\perp} P_{U^\perp} y_i y_i^\T].
    \]
    Then sum over $i=1,\ldots,n$ to obtain
    \[
        \sum_{i=1}^n \norm{(I_p - HT)y_i}_{\Sigma}^2
        = \sigma^{-2}\tr[P_{U^\perp} P_{U^\perp} YY^\T]
        = \sigma^{-2} \norm{P_{U^\perp} Y}_F^2.
    \]
    Finally,
    \[
        \log \int p(x) \prod_{i=1}^n \Normal(T y_i\cond x_i,\Sigma_T) \isd x
        = \sum_{i=1}^m \log \Normal((TY)_{i:}\cond 0, K_i + (\sigma^2/S_{ii} + D_{ii}) I_n)
    \]
    follows from independence of the latent processes and remembering that we ``absorbed $D$ into $K(t, t')$''.
\end{proof}

Observe that
\[
    \norm{(I_p-UU^\T) Y}_F^2
    = \norm{Y}_F^2 - \norm{U^\T Y}_F^2,
\]
which is a computationally more efficient implementation.

\section{OILMM: Decomposition of the Mean Squared Error}
\label{app:mse}

\begin{proposition} \label{prop:mse}
    Let $H = U S^{\frac12}$ with $U$ a matrix with orthonormal columns and $S^{\frac12} > 0$ diagonal.
    Then
    \[
        \underbracket{\norm{y - H x}^2 \vphantom{\sum}}_{
            \text{MSE}
        }
        = \underbracket{\norm{P_{U^\perp} y}^2 \vphantom{\sum}}_{
            \mathclap{\substack{\text{data not} \\ \text{captured by basis}}}
        } + \sum_{i=1}^m \overbracket{S_{ii} \vphantom{\sum}}^{
            \mathclap{\substack{\text{variance of} \\ \text{$i$\textsuperscript{th} latent process}}}
        } \underbracket{((T y)_i - x_i)^2 \vphantom{\sum}}_{
            \mathclap{\substack{\text{MSE of} \\ \text{$i$\textsuperscript{th} latent process}}}
        }
    \]
    where $T = S^{-\frac12}U^\T$ and $P_{U^\perp}$ is the orthogonal projection onto the orthogonal complement of $\col(U)$.
\end{proposition}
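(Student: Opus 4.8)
The plan is to exploit the orthogonal decomposition of $\R^p$ induced by $\col(U)$. Since the diagonal factor $S^{\frac12}$ is invertible we have $\col(H) = \col(U)$, and because $U$ has orthonormal columns, $UU^\T$ is the orthogonal projection onto $\col(U)$, so $P_{U^\perp} = I_p - UU^\T$. First I would split $y - Hx$ along this decomposition:
\[
    y - Hx = (UU^\T y - Hx) + (I_p - UU^\T)y.
\]
Since $Hx = US^{\frac12}x \in \col(U)$, the first summand lies in $\col(U)$ and the second in $\col(U)^\perp$, so the two are orthogonal and the Pythagorean theorem gives
\[
    \norm{y - Hx}^2 = \norm{UU^\T y - Hx}^2 + \norm{P_{U^\perp}y}^2.
\]
This already isolates the ``data not captured by the basis'' term $\norm{P_{U^\perp}y}^2$.

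Next I would simplify the remaining term using that $U$ is a linear isometry: $UU^\T y - Hx = U(U^\T y - S^{\frac12}x)$, hence
\[
    \norm{UU^\T y - Hx}^2 = \norm{U^\T y - S^{\frac12}x}^2 = \sum_{i=1}^m \bigl((U^\T y)_i - S_{ii}^{\frac12}x_i\bigr)^2.
\]
The final step is to rewrite this in the claimed form. Since $T = S^{-\frac12}U^\T$, we have $(Ty)_i = S_{ii}^{-\frac12}(U^\T y)_i$, so $(U^\T y)_i = S_{ii}^{\frac12}(Ty)_i$ and therefore $\bigl((U^\T y)_i - S_{ii}^{\frac12}x_i\bigr)^2 = S_{ii}\bigl((Ty)_i - x_i\bigr)^2$. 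Summing over $i$ and combining with the Pythagorean identity yields the stated decomposition.

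There is no genuine obstacle here; the argument is entirely routine orthogonal-projection bookkeeping. The only place that calls for a moment of care is tracking the diagonal scaling so that the factor $S_{ii}$ ends up in the right position — i.e.\ recognising that the quantity $(U^\T y)_i - S_{ii}^{\frac12}x_i$ that falls out of the isometry step equals $S_{ii}^{\frac12}\bigl((Ty)_i - x_i\bigr)$, which is what converts the sum of squared residuals on $U^\T y$ into the variance-weighted sum of squared latent-process errors.
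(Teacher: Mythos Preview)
Your proposal is correct and follows essentially the same approach as the paper. The paper proceeds by expanding $\norm{y-Hx}^2$ algebraically, adding and subtracting $\norm{U^\T y}^2$, and then regrouping, whereas you invoke the Pythagorean decomposition along $\col(U)\oplus\col(U)^\perp$ from the outset and use that $U$ is an isometry; these are the same computation organised in a slightly different order.
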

\begin{proof}
    By expanding and using orthogonality of $U$,
    \begin{align*}
        \norm{y - H x}^2
        &= \norm{y}^2 - 2 \lra{y, U S^{\frac12} x} + \norm{U S^{\frac12} x}^2 \\
        &= \norm{y}^2 - \norm{U^\T y}^2 + \norm{U^\T  y}^2 - 2 \lra{U^\T y, S^{\frac12} x} + \norm{S^{\frac12} x}^2 \\
        &= \lra{y, (I_p - UU^\T)y} + \sum_{i=1}^m \parens{
            \lra{u_i, y}^2 - 2 \lra{u_i, y} S^{\frac12}_{ii} x_i + (S^{\frac12}_{ii} x_i)^2
        } \\
        &= \lra{y, (I_p - UU^\T)y} + \sum_{i=1}^m  S_{ii}\parens{
            S^{-1}_{ii}\lra{u_i, y}^2 - 2 S^{-\frac12}_{ii} \lra{u_i, y}  x_i + x_i^2
        } \\
        &= \lra{y, (I_p - UU^\T)y} + \sum_{i=1}^m S_{ii} ((T y)_i - x_i)^2,
    \end{align*}
    where $u_i$ is the $i$\textsuperscript{th} column of $U$.
    Note that $P_U = UU^\T$ is the orthogonal projection onto $\col(U)$, so $I - UU^\T = P_{U^\perp}$ is the orthogonal projection onto the orthogonal complement of $\col(U)$.
    Therefore,
    \[
        \lra{y, (I_p - UU^\T)y}
         = \lra{y, P_{U^\perp} y}
         = \lra{y, P^2_{U^\perp} y}
         = \lra{P^\T_{U^\perp} y, P_{U^\perp} y}
         = \lra{P_{U^\perp} y, P_{U^\perp} y}
         = \norm{P_{U^\perp} y}^2. \qedhere
    \]
\end{proof}

\section{OILMM: Missing Data}
\label{app:oilmm-missing-data}

For a matrix or vector $A$,
let $A\obs$ and $A\miss$ denote the rows of $A$ corresponding to respectively observed and missing values.

\begin{proposition}
    \label{prop:form-missing}
    Consider the OILMM (\cref{mod:OILMM}).
    For observed outputs $y\obs$, which are a subset of all outputs $y$, the projection and projected noise are given by
    \[
        T\obs = S^{-\frac12} U\obs^\dagger
        \quad\text{and}\quad
        \Sigma_{T\obs} = \sigma^{2}S^{-\frac12}(U\obs^\T U\obs)^{-1} S^{-\frac12} + D
    \]
    where $U\obs^\dagger$ is the pseudo-inverse of $U\obs$.
\end{proposition}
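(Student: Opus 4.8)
The plan is to follow the proof of \cref{prop:form} essentially verbatim, with $H$ and $\Sigma$ replaced by the restrictions $H\obs$ and $\Sigma\obs$ to the observed outputs. Since the OILMM (\cref{mod:OILMM}) has $H = US^{\frac12}$, restricting rows gives $H\obs = U\obs S^{\frac12}$, and since $\Sigma = \sigma^2 I_p + HDH^\T$ the corresponding principal submatrix is $\Sigma\obs = \sigma^2 I + H\obs D H\obs^\T$, which is no longer as convenient as before because the columns of $U\obs$ need not be orthogonal. The projection of this sub-model is $T\obs = (H\obs^\T \Sigma\obs^{-1} H\obs)^{-1} H\obs^\T \Sigma\obs^{-1}$ with projected noise $\Sigma_{T\obs} = (H\obs^\T \Sigma\obs^{-1} H\obs)^{-1}$, and the task is to re-express both in the stated closed form without inverting the dense matrix $\Sigma\obs$.

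First I would remove the $D$ term by the same absorption argument used in the proof of \cref{prop:form}. Writing $y\obs = H\obs x + \epsilon$ with $\epsilon \sim \Normal(0,\Sigma\obs)$, this equals in distribution $y\obs = H\obs(x+\eta) + \epsilon'$ with $\eta$ an independent white-noise process of instantaneous covariance $D$ and $\epsilon' \sim \Normal(0,\sigma^2 I)$, since $H\obs D H\obs^\T$ is precisely the instantaneous covariance of $H\obs\eta$. Applying \cref{prop:general_sufficiency} to the model with latent process $x' = x+\eta$ and homoscedastic noise $\sigma^2 I$ then shows that conditioning on $y\obs$ is conditioning on $T\obs y\obs$ with $T\obs = (H\obs^\T H\obs)^{-1}H\obs^\T$, the factor $\sigma^2$ cancelling, and that $T\obs y\obs \cond x'$ has instantaneous covariance $\sigma^2(H\obs^\T H\obs)^{-1}$. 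Undoing the reparametrisation $x' = x + \eta$ adds back the variance $D$ of $\eta$, giving $\Sigma_{T\obs} = \sigma^2(H\obs^\T H\obs)^{-1} + D$. (Equivalently, one can reach the same two formulas directly by Woodbury's identity on $\Sigma\obs^{-1}$, using that the extra noise $H\obs D H\obs^\T$ lives in $\col(H\obs)$; the absorption route is just the quicker bookkeeping.)

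It then remains to compute $H\obs^\T H\obs = S^{\frac12}(U\obs^\T U\obs)S^{\frac12}$ and invert it. Provided $U\obs$ has full column rank $m$---equivalently, $U\obs^\T U\obs$ is invertible---one obtains $T\obs = S^{-\frac12}(U\obs^\T U\obs)^{-1}U\obs^\T$ and $\Sigma_{T\obs} = \sigma^2 S^{-\frac12}(U\obs^\T U\obs)^{-1}S^{-\frac12} + D$, and the identity $A^\dagger = (A^\T A)^{-1}A^\T$ valid for full-column-rank $A$ identifies $(U\obs^\T U\obs)^{-1}U\obs^\T$ with $U\obs^\dagger$, yielding $T\obs = S^{-\frac12}U\obs^\dagger$ as claimed. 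The algebra is routine; the two points worth care are (i) that the $D$-absorption step commutes with restricting to observed outputs, which holds because $\eta$ enters only through $H\obs\eta$ and $H\obs$ inherits the factored form $U\obs S^{\frac12}$, and (ii) the full-column-rank hypothesis on $U\obs$, which replaces the orthonormality of $U$ that was available in \cref{prop:form} and which genuinely fails---with loss of information about some latent process---when too many outputs are missing. I expect (ii), rather than any piece of the calculation, to be the one subtlety that deserves to be stated explicitly; it is precisely the quantity controlled by the diagonal-approximation bounds for $\Sigma_{T\obs}$ that follow.
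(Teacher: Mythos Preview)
Your proposal is correct and follows essentially the same route as the paper: recognise $y\obs$ as an ILMM with basis $H\obs = U\obs S^{1/2}$ and noise $\sigma^2 I + H\obs D H\obs^\T$, absorb $D$ into $K$ to reduce to homoscedastic noise, compute $T\obs$ and $\Sigma_{T\obs}$ from $H\obs^\T H\obs = S^{1/2}(U\obs^\T U\obs)S^{1/2}$, and then pull $D$ back out. Your explicit remark that the argument needs $U\obs$ to have full column rank is a useful addition that the paper leaves implicit in writing $(U\obs^\T U\obs)^{-1}$.
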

\begin{proof}
    Note that
    \[
        y\obs \sim \GP(
            H\obs K(t, t') H\obs^\T + \delta[t-t'](\sigma^2 I\obs + H\obs D H\obs^\T)
        ),
    \]
    so $y\obs$ is an ILMM with basis $H\obs$ and observation noise $\sigma^2 I + H\obs D H\obs^\T$.
    The proof proceeds like that of \cref{prop:form}, also using trick of assuming that $D = 0$ by ``absorbing it into $K(t, t')$''.
    To begin with, we have
    \[
        H^\T \Sigma^{-1} H = \sigma^{-2} S^{\frac12}U\obs^\T U\obs S^{\frac12},
    \]
    so
    \[
        \Sigma_T
        = T \Sigma T^\T
        = (H^\T \Sigma^{-1} H)^{-1}
        = \sigma^{2}S^{-\frac12}(U\obs^\T U\obs)^{-1} S^{-\frac12}.
    \]
    Moreover, then
    \[
        T
        = (H^\T \Sigma^{-1} H)^{-1} H^\T \Sigma^{-1} \\
        = (\sigma^{2}S^{-\frac12}(U\obs^\T U\obs)^{-1} S^{-\frac12})(\sigma^{-2} S^{\frac12} U\obs^\T)
        = S^{-\frac12} (U\obs^\T U\obs)^{-1} U\obs^\T
        = S^{-\frac12} U\obs^\dagger.
    \]
    Finally, ``pull $D$ back out of $K(t,t')$'', which, again, is equivalent to adding it to $\Sigma_T$.
\end{proof}

\begin{remark} \label{rem:using-form-missing}
    When using $T\obs$ and $\Sigma_{T\obs}$, in the likelihood computation in \cref{prop:likelihood}, from \cref{prop:regularisation-term}, it can be seen that two things change:
    for every time point with missing data,
    \begin{remlist}[topsep=1pt,itemsep=2pt]
        \item $H\obs T\obs = U\obs U\obs^\dagger$, so $U U^\T$ becomes $U\obs U\obs^\dagger$; and
        \item $\tfrac12\log |\Sigma_{T\obs}|$ gives an extra term $-\tfrac12\log |U\obs^\T U\obs|$.
    \end{remlist}
\end{remark}

\subsection{Diagonal Approximation of Projected Noise}
\label{app:missing-data-projection-noise-approximation}

For a matrix $A$, let $d[A]$ denote the diagonal matrix resulting from setting the off-diagonal entries of $A$ to zero.

\begin{proposition}
    \label{prop:diagonal-approximation}
    For $\Sigma_{T\obs}$ from \cref{prop:form-missing}, we have
    \[
        \frac
            {\norm{\Sigma_{T\obs} - d[\Sigma_{T\obs}]}\ss{op}}
            {\norm{d[\Sigma_{T\obs}]}\ss{op}}
        \le
            \frac{S\ss{max}}{S\ss{min}}
            \max_{y \in \col(H): \norm{y} = 1} \norm{y\miss}^2
    \]
    where $\norm{\vardot}\ss{op}$ denotes the operator norm,
    and $S\ss{min}$ and $S\ss{max}$ are the smallest and largest diagonal values of $S$.
\end{proposition}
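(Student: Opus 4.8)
The plan is to strip $\Sigma_{T\obs}$ down to one spectral quantity and then estimate that. Write $A := (U\obs^\T U\obs)^{-1}$ and $\e := \max_{y\in\col(H):\,\norm y=1}\norm{y\miss}^2$. \emph{Step 1 (remove $D$ and the diagonal scaling).} From \cref{prop:form-missing}, $\Sigma_{T\obs} = \sigma^2 S^{-\frac12}AS^{-\frac12} + D$ with $D$ diagonal and $D\succeq0$. The off-diagonal part $\Sigma_{T\obs}-d[\Sigma_{T\obs}] = \sigma^2 S^{-\frac12}(A-d[A])S^{-\frac12}$ does not see $D$, while $\norm{d[\Sigma_{T\obs}]}\ss{op}$ — the largest diagonal entry of the PSD diagonal matrix $\sigma^2 S^{-1}d[A]+D$ — only grows with $D$, so the ratio is maximised at $D=0$, which I assume henceforth. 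Since $S^{\pm\frac12}$ is diagonal, $d[\Sigma_{T\obs}] = \sigma^2 S^{-1}d[A]$, and submultiplicativity gives $\norm{\Sigma_{T\obs}-d[\Sigma_{T\obs}]}\ss{op}\le\sigma^2 S\ss{min}^{-1}\norm{A-d[A]}\ss{op}$ together with $\norm{d[\Sigma_{T\obs}]}\ss{op} = \sigma^2\max_i A_{ii}/S_{ii}\ge\sigma^2 S\ss{max}^{-1}\norm{d[A]}\ss{op}$. This produces the prefactor $S\ss{max}/S\ss{min}$ and reduces everything to bounding $\norm{A-d[A]}\ss{op}/\norm{d[A]}\ss{op}$ by (a constant times) $\e$.

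\emph{Step 2 (read off $\e$).} As $S^{\frac12}$ is invertible, $\col(H)=\col(U)$, and $U^\T U=I_m$ makes the unit vectors of $\col(U)$ exactly the $Uv$ with $\norm v=1$; then $\norm{(Uv)\miss}^2=v^\T U\miss^\T U\miss v$, so $\e=\norm{U\miss^\T U\miss}\ss{op}$. From $U\obs^\T U\obs+U\miss^\T U\miss=I_m$ this equals $1-\lambda\ss{min}(U\obs^\T U\obs)$; in particular $U\obs^\T U\obs\preceq I_m$, so $A\succeq I_m$, $\norm A\ss{op}=1/(1-\e)$, and $\lambda\ss{min}(A)=1/\lambda\ss{max}(U\obs^\T U\obs)\ge1$.

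\emph{Step 3 (the spectral estimate).} Centring the spectrum at $c=\tfrac12(\norm A\ss{op}+\lambda\ss{min}(A))$, both $A-cI$ and $d[A]-cI$ have operator norm at most $\tfrac12(\norm A\ss{op}-\lambda\ss{min}(A))$ — for $d[A]-cI$ because $\lambda\ss{min}(A)I\preceq A\preceq\norm A\ss{op}I$ pins every $A_{ii}$ into $[\lambda\ss{min}(A),\norm A\ss{op}]$ — so $\norm{A-d[A]}\ss{op}\le\norm A\ss{op}-\lambda\ss{min}(A)$. Since $\lambda\ss{min}(A)/\norm A\ss{op}=\lambda\ss{min}(U\obs^\T U\obs)/\lambda\ss{max}(U\obs^\T U\obs)\ge\lambda\ss{min}(U\obs^\T U\obs)=1-\e$, this gives $\norm{A-d[A]}\ss{op}\le\e\norm A\ss{op}$. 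Finally $\norm A\ss{op}=\max_{\norm v=1}(v^\T d[A]v+v^\T(A-d[A])v)\le\norm{d[A]}\ss{op}+\norm{A-d[A]}\ss{op}$, whence $(1-\e)\norm{A-d[A]}\ss{op}\le\e\norm{d[A]}\ss{op}$, i.e.\ $\norm{A-d[A]}\ss{op}\le\tfrac{\e}{1-\e}\norm{d[A]}\ss{op}$. Combined with Step 1 this is the asserted bound up to the factor $(1-\e)^{-1}$, which is $\approx1$ precisely in the near-orthogonal regime where the diagonal approximation is used.

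\emph{Main obstacle.} The delicate point is the very last estimate: the triangle-inequality route only yields $\e/(1-\e)$, not $\e$, and with $A=I_m+\delta\,\mathbf 1\mathbf 1^\T$ one checks that for $m\ge3$ and $\e$ close to $1$ the ratio can exceed $\e$, so the rigorous bound does carry an extra $(1-\e)^{-1}$ and the clean constant $S\ss{max}/S\ss{min}$ is best read as the first-order (small-$\e$) statement. Closing the gap for all $\e$ would require a sharper inequality of the form $\norm{A-d[A]}\ss{op}\le\e\,\norm{d[A]}\ss{op}$ that extracts more from $A\succeq I_m$; everything else above is routine bookkeeping with diagonal scalings and Weyl-type spectral-spread bounds.
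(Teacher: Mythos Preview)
Your argument follows the paper's almost step for step: you remove $D$, peel off the diagonal scaling $S^{-\frac12}$ to isolate $A=(U\obs^\T U\obs)^{-1}$, bound $\norm{A-d[A]}\ss{op}$ by the spectral spread $\lambda\ss{max}(A)-\lambda\ss{min}(A)$ via the centring trick, and then translate $\lambda\ss{min}(U\obs^\T U\obs)=1-\e$ into the final estimate. The paper does exactly this, differing only in the last step, where it lower-bounds the denominator directly.

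Your ``main obstacle'' is not a gap in your reasoning but a genuine slip in the paper. The paper's proof asserts $\max_i A_{ii}\ge\lambda\ss{max}(A)$ in the chain
\[
    \norm{d[\Sigma_{T\obs}]}\ss{op}
    \ge \sigma^2 S\ss{max}^{-1}\max_i d[A]_{ii}
    \ge \sigma^2 S\ss{max}^{-1}\lambda\ss{max},
\]
but diagonal entries of a symmetric matrix lie \emph{between} its extreme eigenvalues, so the last inequality is false. Your counterexample $A=I_m+\delta\,\mathbf{1}\mathbf{1}^\T$ (realisable as $(U\obs^\T U\obs)^{-1}$ for suitable $U$ and missing rows) indeed shows that the ratio can exceed $\e$ when $m\ge3$, so the proposition as stated does not hold with the clean constant. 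If one repairs the paper's step to the valid $\max_i A_{ii}\ge\lambda\ss{min}(A)$, its argument also delivers $\frac{S\ss{max}}{S\ss{min}}\cdot\frac{\e}{1-\e}$, matching your bound. In short: your proof is correct, and the $(1-\e)^{-1}$ factor is the honest price; the paper's sharper constant is an artefact of the erroneous step and should be read, as you say, as the first-order statement in the small-$\e$ regime where the approximation is actually used.
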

\begin{proof}
    Let $e_i$ be the $i$\textsuperscript{th} unit vector.
    Denote $A = (U\obs^\T U\obs)^{-1}$,
    and let $\lambda\ss{min}$ and $\lambda\ss{max}$ be the minimum and maximum eigenvalue of $A$.
    To begin with,
    \[
        d[A]_{ii}
        = \lra{e_i, A e_i}
        \in [\lambda\ss{min}, \lambda\ss{max}].
    \]
    Let $x \in \R^m$ be such that $\norm{x}=1$.
    Then
    \[
        \lra{x, (A - d[A]) x}
        = \lra{x, A x} - \lra{x, d[A] x}
        \le \lambda\ss{max} - \lambda\ss{min}.
    \]
    Similarly,
    \[
        \lra{x, (A - d[A]) x} \ge -\lambda\ss{max} + \lambda\ss{min}.
    \]
    Therefore,
    \[
        |\lra{x, (A - d[A]) x}| \le \lambda\ss{max} - \lambda\ss{min},
    \]
    so
    \[
        \norm{A - d[A]}\ss{op} \le \lambda\ss{max} - \lambda\ss{min}
    \]
    Since $S$ is diagonal, we have
    \[
        \Sigma_{T\obs} - d[\Sigma_{T\obs}]
        = \sigma^2 S^{-\frac12}(A - d[A]) S^{-\frac12}.
    \]
    Using the derived bound on the operator norm and submultiplicativity of the operator norm, it follows that
    \[
        \norm{\Sigma_{T\obs} - d[\Sigma_{T\obs}]}\ss{op}
        \le \sigma^2 S^{-1}\ss{min} (\lambda\ss{max} - \lambda\ss{min}).
    \]
    Moreover,
    \[
        \norm{d[\Sigma_{T\obs}]}\ss{op}
        = \sigma^2
            \max_{i=1,\ldots,m} (S_{ii}^{-1}d[A]_{ii} + D_{ii})
        \ge
            \sigma^2
            \max_{i=1,\ldots,m} S_{ii}^{-1}d[A]_{ii}
        \ge
            \sigma^2
            S\ss{max}^{-1}
            \max_{i=1,\ldots,m} d[A]_{ii}
        \ge
            \sigma^2 S\ss{max}^{-1}\lambda\ss{max}.
    \]
    Therefore,
    \[
        \frac
            {\norm{\Sigma_{T\obs} - d[\Sigma_{T\obs}]}\ss{op}}
            {\norm{d[\Sigma_{T\obs}]}\ss{op}}
        \le
            \frac{S\ss{max}}{S\ss{min}}\parens*{1 - \frac{\lambda\ss{min}}{\lambda\ss{max}}}.
    \]
    By definition of $\lambda\ss{min}$ and $\lambda\ss{max}$ and orthogonality of $U$, we have that
    \[
        \frac{1}{\lambda\ss{min}}
        =
            \max_{x \in \R^m :\norm{x}=1} \norm{U\obs x}^2
        \le
            \max_{x \in \R^m :\norm{x}=1} \norm{U x}^2
        =
            1
        \quad\text{and}\quad
        \frac{1}{\lambda\ss{max}}
        =
            \min_{x \in \R^m :\norm{x}=1} \norm{U\obs x}^2.
    \]
    Substitute these results into the bound:
    \[
        \frac
            {\norm{\Sigma_{T\obs} - d[\Sigma_{T\obs}]}\ss{op}}
            {\norm{d[\Sigma_{T\obs}]}\ss{op}}
        \le
            \frac{S\ss{max}}{S\ss{min}}\parens*{1 - \min_{x \in \R^m :\norm{x}=1} \norm{U\obs x}^2}
        =
            \frac{S\ss{max}}{S\ss{min}}\max_{x \in \R^m :\norm{x}=1} \parens{1 - \norm{U\obs x}^2}.
    \]
    By orthogonality of $U$, for $x \in \R^m$ such that $\norm{x} = 1$, we have
    \[
        1 = \norm{x}^2 = \norm{U x}^2 = \norm{U\obs x}^2 + \norm{U\miss x}^2,
    \]
    so $1 - \norm{U\obs x}^2 = \norm{U\miss x}^2$.
    Therefore,
    \[
        \max_{x \in \R^m :\norm{x}=1} \parens{1 - \norm{U\obs x}^2}
        =
            \max_{x \in \R^m :\norm{x}=1} \norm{U\miss x}^2
        =
            \max_{x \in \R^m :\norm{x}=1} \norm{(U x)\miss}^2
        =
            \max_{y \in \col(U) :\norm{y}=1} \norm{y\miss}^2
    \]
    and we conclude by noting that $\col(U) = \col(H)$.
\end{proof}

\begin{corollary} \label{prop:bound-U-inequality}
    Suppose $\norm{U}_\infty^2 \le C / p$ for some $C \ge 1$, and that $s$ outputs are missing.
    Then
    \[
        \frac
            {\norm{\Sigma_{T\obs} - d[\Sigma_{T\obs}]}\ss{op}}
            {\norm{d[\Sigma_{T\obs}]}\ss{op}}
        \le
            C
            \frac{S\ss{max}}{S\ss{min}}
            \frac{m s}{p}.
    \]       
\end{corollary}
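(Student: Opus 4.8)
The plan is to derive this directly from \cref{prop:diagonal-approximation}, which already reduces the quantity of interest to a purely geometric question about how much mass a unit vector in $\col(H)$ can place on the missing coordinates. By that proposition,
\[
    \frac
        {\norm{\Sigma_{T\obs} - d[\Sigma_{T\obs}]}\ss{op}}
        {\norm{d[\Sigma_{T\obs}]}\ss{op}}
    \le
        \frac{S\ss{max}}{S\ss{min}}
        \max_{y \in \col(H):\norm{y}=1} \norm{y\miss}^2,
\]
so it suffices to show that $\max_{y \in \col(H):\norm{y}=1} \norm{y\miss}^2 \le Cms/p$ under the hypothesis $\norm{U}_\infty^2 \le C/p$.

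First I would use that $\col(H) = \col(U)$ together with the orthonormality of the columns of $U$: every unit vector $y \in \col(U)$ is of the form $y = Ux$ with $\norm{x} = \norm{Ux} = 1$, and hence $\norm{y\miss}^2 = \norm{U\miss\, x}^2 \le \norm{U\miss}\ss{op}^2$, where $U\miss$ denotes the $s \times m$ submatrix of $U$ formed by the rows indexed by the missing outputs. Next I would bound the operator norm of this submatrix by its Frobenius norm, $\norm{U\miss}\ss{op} \le \norm{U\miss}_F$, and estimate the Frobenius norm entrywise: $U\miss$ has exactly $sm$ entries, each of absolute value at most $\norm{U}_\infty \le \sqrt{C/p}$, so $\norm{U\miss}_F^2 \le sm\,\norm{U}_\infty^2 \le Cms/p$. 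Chaining these inequalities gives $\max_{y \in \col(H):\norm{y}=1}\norm{y\miss}^2 \le Cms/p$, and substituting into the bound from \cref{prop:diagonal-approximation} yields the stated estimate.

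There is essentially no obstacle here; the only point requiring a little care is the passage from the submatrix's operator norm to its Frobenius norm and then the entrywise bound, but both steps are elementary. A slightly sharper constant could be obtained by estimating $\norm{U\miss}\ss{op}$ more cleverly—for instance via interpolation between the $1\!\to\!1$ and $\infty\!\to\!\infty$ norms, or by exploiting that the full matrix $U$ is an isometry so that $\norm{U\obs}\ss{op} \le 1$—but the crude Frobenius-norm bound already suffices to reach the claimed $C\,\tfrac{S\ss{max}}{S\ss{min}}\,\tfrac{ms}{p}$.
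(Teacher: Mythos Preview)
Your proof is correct and essentially identical to the paper's: both write $y = Ux$ with $\norm{x}=1$ and bound $\norm{U\miss x}^2$ by $\norm{U\miss}_F^2 = \sum_{i\in\text{missing}}\norm{U_{i:}}^2 \le sm\norm{U}_\infty^2 \le Cms/p$. The paper phrases the first inequality as a row-by-row Cauchy--Schwarz bound rather than as the operator-to-Frobenius norm inequality, but the underlying arithmetic is the same.
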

\begin{proof}
    Let $y \in \col(H)$ be such that $\norm{y} = 1$.
    Then $y = Ux$ for some $x \in \R^m$ such that $\norm{x} = 1$.
    Therefore,
    \[
        \norm{y\miss}^2
        = \sum_{i \in \text{missing}} (Ux)_i^2
        \le \sum_{i \in \text{missing}} \norm{U_{i:}}^2 \norm{x}^2
        = \sum_{i \in \text{missing}} \norm{U_{i:}}^2
        \le \frac{C m s}{p},
    \]
    so the result follows from the previous proposition.
\end{proof}

\subsection{Variational Approach}
\label{app:missing-data-variational-approximation}
Let $Y\ss{o}$ be the observed data.
Complement $Y\ss{o}$ with missing data $Y\ss{m}$ such that $Y = Y\ss{o} \cup Y\ss{m}$ is complete.
Then a way to deal with missing data is to use variational inference.
In particular, assume a Gaussian approximate posterior distribution $q(Y\ss{m})$ over $Y\ss{m}$, and maximise the evidence lower bound (ELBO) $\L$ using gradient-based optimisation:
\[
    \log p(Y\ss{o}) \ge \E_{q(Y\ss{m})}[\log p(Y)] + H[q(Y\ss{m})] = \L[q(Y\ss{m})],
\]
where the expectation can be approximated using the reparametrisation trick \parencite{Kingma:2013:Auto-Encoding_VB}, $\log p(Y)$ can be computed efficiently because $Y$ is complete, and $H[q(Y\ss{m})]$ denotes the entropy of $q(Y\ss{m})$. This approach provides a tractable solution when the missing data are not too numerous.

\section{OILMM: Heterogeneous Observation Noise}
\label{app:oilmm-het-obs-noise}
Although the specification of the observation noise $\Sigma = \sigma^2 I_p + H D H^\T$ in the OILMM does not allow for heterogeneous observation noise, it is possible to set $\Sigma = \diag(\sigma_1^2, \ldots, \sigma_p^2)$ and use \cref{prop:decoupling} to include $\Sigma$ in the parametrisation of $H$: $H = \Sigma^{\frac12} U S^{\frac12}$.
This parametrisation can be interpreted in two ways:
\begin{enumerate}[topsep=0pt,itemsep=2pt]
    \item[\em (i)]
        The model has a whitening transform built in.
        In the projection $T$, the (noise in the) data will first by whitened by $\Sigma^{-\frac12}$.
        Hence, this parametrisation can be used as a more principled substitute for the usual data normalisation where the outputs are divided by their empirical standard deviation prior to feeding them to the model.
    \item[\em (ii)]
        The basis is orthogonal with respect to a weighted Euclidean inner product:
        $
            \lra{h_i, h_j}_{\Sigma} = \sum_{k=1}^p h_{ik} h_{jk}/ \sigma_k^2 = 0
        $
        for $i \neq j$.
        Intuitively, this means that the basis is orthogonal in the usual sense after stretching the $i$\textsuperscript{th} dimension by $\sigma_i^{-1}$.
\end{enumerate}
Although this construction provides additional flexiblity, it does require that $D=0$ to avoid a circular dependency between $\Sigma$ and $H$.

\section{Computational Scaling Experiment (Sec.\ \ref{exp:scaling}) Additional Details}
\label{app:scaling}
Measurements were performed using a MacBook Pro with a 2.7 GHz Intel Core i7 processor and 16 GB RAM. Code was implemented in Julia 1.0 \parencite{bezanson2017julia} and memory and time were measured using the \code{@allocated} and the \code{@elapsed} macros, respectively, with the measurements averaged over 10 samples run serially. This means memory reported is the total memory allocated, not peak memory consumption.

\section{Point Process Experiment (Sec.\ \ref{exp:rainforest}) Additional Details and Analysis}
\label{app:point-process}

We consider a subset of the extensive rainforest data set credited to \citet{rainforest,Condit:2005,Hubbell:1999}. The data features a 1000$\,$m $\times$ 500$\,$m rainforest dynamics plot in Barro Colorado Island, Panama. In the survey area, the locations of all \emph{Trichilia tuberculata} (a tree species of the Mahogany family) have been measured (see \cref{fig:log-intensity}).

We tackle this spatial point pattern with a log-Gaussian Cox process model, which is an inhomogeneous Poisson process model for count data. The unknown intensity function $\Sigma(x)$ is modelled with a Gaussian process such that $f(x) = \log \Sigma(x)$. Locally-constant intensity in subregions are modelled by discretising the region into $np$ bins \parencite{Moller+Syversveen+Waagepetersen:1998}. This leads to a Poisson observation model for each bin. This model reaches posterior consistency in the limit of bin width going to zero \parencite{Tokdar+Ghosh:2007}. The accuracy thus improves with tighter binning. We use a separable Mat\'ern-$5/2$ GP prior over $f(x_1,x_2)$, and discretise the area into a $n \times p = 200{\times}100$ (each bin is \SI{5}{m} $\times$ \SI{5}{m}) grid with $np=20000$ grid bins in total, and treat the first dimension as time. The conditional probability of the complete binned data set given the latent GP is therefore 
\begin{equation}
    p(Y \mid f) \approx \prod_{i=1}^n \prod_{j=1}^p \mathrm{Poisson}(Y_{ij} \mid a e^{f(r_{ij})}), \nonumber
\end{equation}
where $r_{ij}$ is the coordinate of the $ij$\textsuperscript{th} bin, $Y_{ij}$ is the number of data points in the $ij$\textsuperscript{th} bin, $Y$ is the $n \times p$ matrix of counts, and $a$ is the area of each bin.
\begin{figure}[t]
    \centering\scriptsize
    \setlength{\figurewidth}{.4\columnwidth}
    \setlength{\figureheight}{.5\figurewidth}  
    \tikzset{every picture/.append style = {xscale = .5, yscale = .5}}  
    \pgfplotsset{
        y tick label style={rotate=90},
        scale only axis,
        axis on top,
        clip=true
    }  
    \begin{subfigure}[t]{.48\columnwidth}
%
%
\begin{tikzpicture}

\begin{axis}[%
axis on top,
xmin=0,
xmax=1000,
xlabel={$x_1$ (meters)},
y dir=reverse,
ymin=0,
ymax=500,
ylabel={$x_2$ (meters)},
axis background/.style={fill=white},
legend style={legend cell align=left,align=left,draw=white!15!black},
width=\figurewidth,
height=\figureheight
]
\addplot [forget plot] graphics [xmin=-0.394321766561514,xmax=1000.39432176656,ymin=-0.394944707740916,ymax=500.394944707741] {./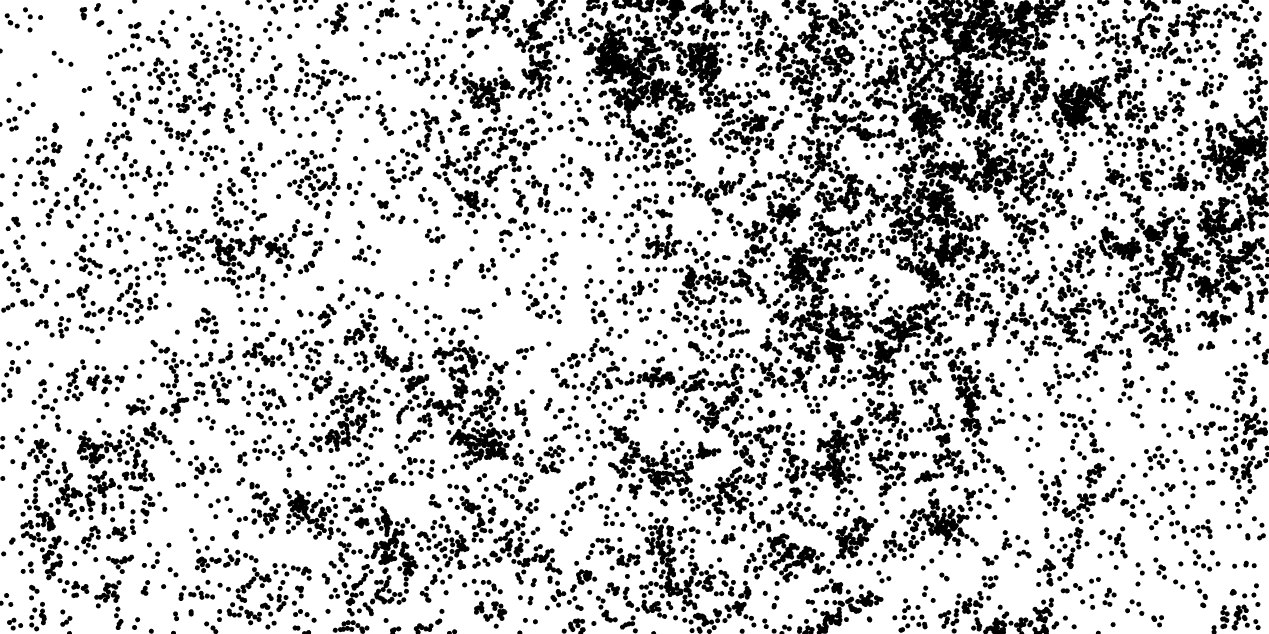};
\end{axis}
\end{tikzpicture}%
    \end{subfigure}
    \hspace*{\fill}
    \begin{subfigure}[t]{.48\columnwidth}
%
%
\begin{tikzpicture}
\begin{axis}[%
point meta min=-3,
point meta max=3,
axis on top,
xmin=0,
xmax=1000,
xlabel={$x_1$ (meters)},
y dir=reverse,
ymin=0,
ymax=500,
ylabel={$x_2$ (meters)},
axis background/.style={fill=white},
legend style={legend cell align=left,align=left,draw=white!15!black},
width=\figurewidth,
height=\figureheight
]
\addplot [forget plot] graphics [xmin=-5.05050505050505,xmax=1005.05050505051,ymin=-1.25628140703518,ymax=501.256281407035] {./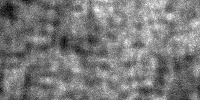};
\end{axis}
\end{tikzpicture}%
    \end{subfigure}  
    \caption{Observations of the rainforest tree locations (left), and posterior mean log-intensity for the log-Gaussian Cox process model (right) with a grid of $np=20000$ observation bins.}
    \label{fig:log-intensity}
\end{figure}    

We perform $10^5$ iterations of block Gibbs sampling, each of which comprises $10$ iterations Elliptical Slice Sampling \parencite{Murray:2010:Elliptical_Slice_Sampling,murray2010slice} for the Gaussian process given its hyperparameters, and a single iteration of Metropolis Hastings \parencite{hastings1970monte} with proposal distribution $\Normal(\theta, 0.05^2)$ for the log of the hyperparameters given the latent GP-distributed function. Each step of Elliptical Slice Sampling requires an additional sample from the GP prior at the current hyperparameter values, while each step of Metropolis Hastings requires a log marginal likelihood evaluation. As such approximately $10^6$ samples from the prior were drawn, and $10^5$ log marginal likelihood calculations undertaken.
The kernel is a product of two \Matern-$5/2$ kernels with a shared length scale. A single process variance is utilised, and a nugget term is added. The $\log$ of each of the three hyperparameters was given a $\Normal(0, 1)$ prior.
\cref{fig:log-joint} shows the log joint of the entire state after each iteration, while \cref{fig:traces} shows the progress of each hyperparameter per iteration.

\begin{figure}[t]
    \centering
    \begin{subfigure}[t]{.48\columnwidth}
        \includegraphics[width=\linewidth]{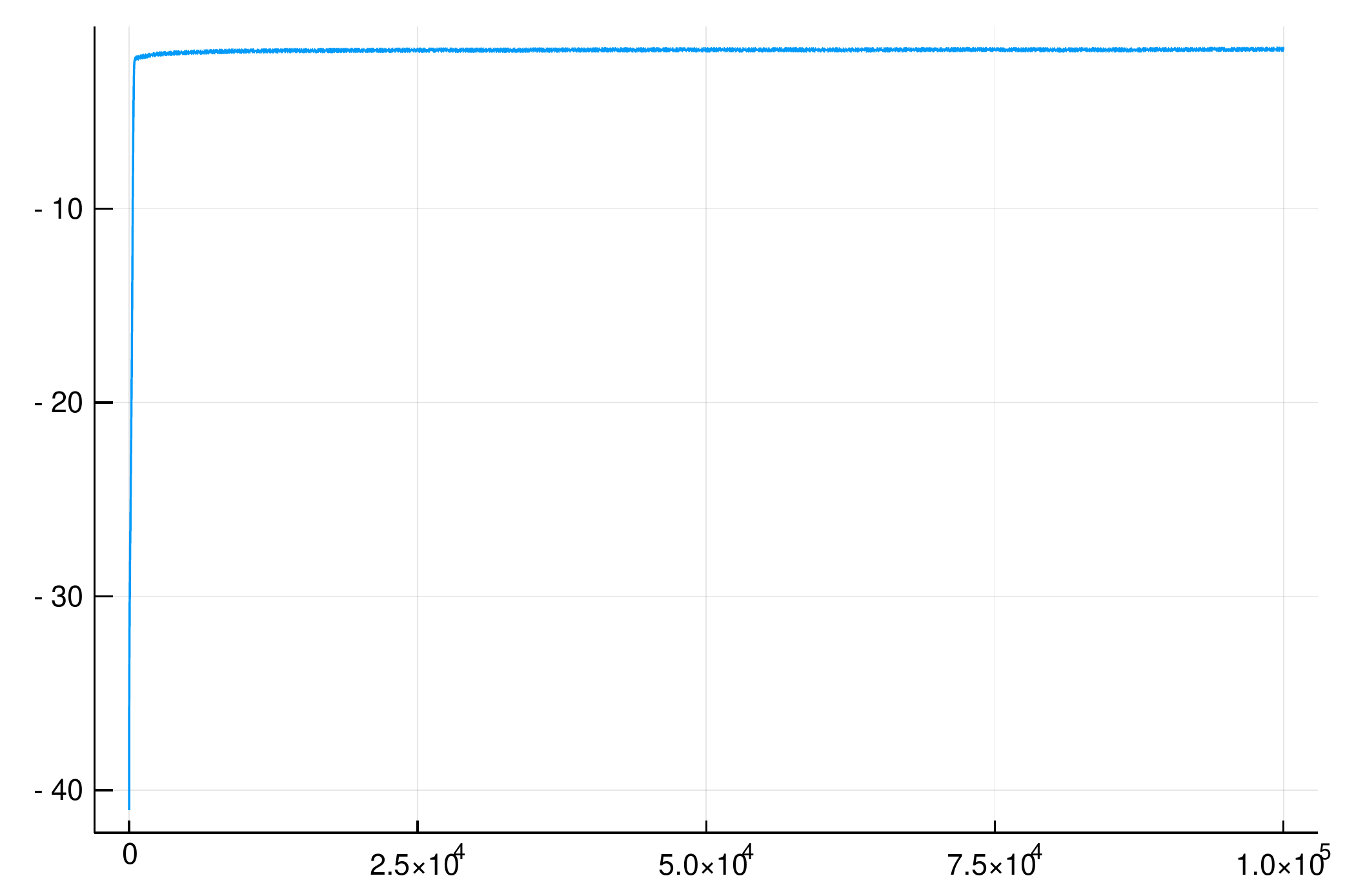}
        \caption{}
        \label{fig:log-joint}
    \end{subfigure}
    \hspace*{\fill}
    \begin{subfigure}[t]{.48\columnwidth}
        \includegraphics[width=\linewidth]{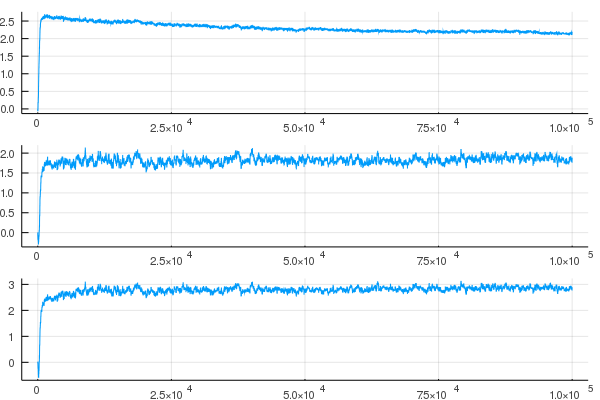}
        \caption{}
        \label{fig:traces} 
    \end{subfigure}  
    \caption{
        (a): Log--joint probability per iteration.
        (b):
            Hyperparameters per iteration.
            Shows the length scale, process variance, and nugget variance respectively.
    }
    \label{fig:my_label}
\end{figure}

The times in \cref{fig:timing-ratios} were obtained via \code{BenchmarkTools.jl} \parencite{BenchmarkTools.jl-2016}.
The implementation of the standard Kronecker product decomposition trick makes use of \code{Kronecker.jl}, and Julia's \parencite{bezanson2017julia} standard linear algebra libraries, which make use of OpenBLAS and LAPACK to efficiently perform matrix-matrix products and compute eigendecompositions. The implementation of the state-space GP additionally makes use of \code{StaticArrays.jl} for efficient stack-allocated matrices, and \code{Stheno.jl} for GP-related functionality.
Timing experiments were conducted on a single CPU core.

When computing the log marginal likelihood, the state-space implementation of the GP makes use of the infinite-horizon trick introduced to the GP literature by \textcite{Solin:2018:Infinite-Horizon_Gaussian_Processes}. However, this trick is only exploited here once the filtering covariance has converged, which is determined by the point at which the Frobenius norm of the difference between the filtering covariance at the $t$\textsuperscript{th} and $(t-1)$\textsuperscript{th} iterations drops below $10^{-12}$. This produces log marginal likelihood evaluations and samples from the prior that are exact for all practical purposes.

\subsection{ Performance versus Kronecker Trick }

\begin{table}[t]
    \small
    \caption{
        Description of the data points associated with the timing experiment from \cref{fig:timing-ratios}
    }
    \label{tab:kronecker-timing-statistics}
    \centering
    \vspace{1em}
    \begin{tabular}{@{}lllll@{}}
        \toprule
               & \multicolumn{2}{l}{LML}                    & \multicolumn{2}{l}{RNG}  \\
        $n$     & Kronecker      & OILMM            & Kronecker      & OILMM          \\ \midrule
        $2000$ & $2.45 \pm 0.0193$     & $0.403 \pm 0.00414$    & $2.45 \pm 0.0278$      & $0.478 \pm 0.00376$    \\
        $1000$ & $0.365 \pm 0.00256$   & $0.0712 \pm 0.000369$  & $0.364 \pm 0.00451$    & $0.0892 \pm 0.000435$  \\
        $200$  & $0.0111 \pm 0.000301$ & $0.00235 \pm 2.53 \!\times\! 10^{-5}$  & $0.0112 \pm 9.89 \!\times\! 10^{-5}$   & $0.00318 \pm 1.2 \!\times\! 10^{-5}$   \\
        $100$  & $0.00237 \pm 8.66 \!\times\! 10^{-6}$ & $0.000582 \pm 6.55 \!\times\! 10^{-7}$ & $0.00237 \pm 3.1 \!\times\! 10^{-5}$   & $0.000792 \pm 8.69 \!\times\! 10^{-7}$ \\
        $40$   & $0.00044 \pm 4.35 \!\times\! 10^{-7}$ & $0.000109 \pm 2.22 \!\times\! 10^{-7}$ & $0.000436 \pm 3.19 \!\times\! 10^{-7}$ & $0.000141 \pm 2.0 \!\times\! 10^{-7}$  \\
        $20$   & $9.15 \!\times\! 10^{-5} \pm 1.48 \!\times\! 10^{-7}$ & $2.38 \!\times\! 10^{-5} \pm 2.1 \!\times\! 10^{-7}$   & $9.06 \!\times\! 10^{-5} \pm 1.89 \!\times\! 10^{-7}$  & $3.13 \!\times\! 10^{-5} \pm 1.72 \!\times\! 10^{-7}$  \\
        $10$   & $1.84 \!\times\! 10^{-5} \pm 1.54 \!\times\! 10^{-7}$ & $9.87 \!\times\! 10^{-6} \pm 1.08 \!\times\! 10^{-7}$  & $1.84 \!\times\! 10^{-5} \pm 3.02 \!\times\! 10^{-7}$  & $1.15 \!\times\! 10^{-5} \pm 1.17 \!\times\! 10^{-7}$  \\ \bottomrule
    \end{tabular}
\end{table}

\cref{fig:timing-ratios} demonstrates that, for the particular approach taken to inference in the Poisson process and, importantly, the dimensions of the data, the Kronecker trick discussed by \textcite{saatcci2012scalable} takes slightly longer to compute log marginal likelihoods and generate samples than does the OILMM implemented in the manner described above. It would of course be unreasonable to assert that the OILMM dominates the Kronecker trick; rather, it seems appropriate to assert that they are competitive with each other in the regime considered.

This is perhaps surprising as the performance of the Kronecker trick is determined almost entirely by a couple of computationally intensive operations, the eigendecomposition and matrix-matrix multiplies. Carefully optimised implementations of these operations exist, and were used, to implement the Kronecker trick. Conversely, the OILMM implementation discussed above comprises many small operations. While our implementation benefits from \eg~the \code{StaticArrays.jl} library, which is suitable for operations on small matrices and vectors, it remains surprising that similar performance was found.
 
In general we anticipate the OILMM implemented in the described manner be significantly faster on data sets where $n$ is much larger than $p$, whilst the Kronecker trick will likely do better when $n$ is similar to $p$.

\section{Temperature Extrapolation Experiment (Sec.\ \ref{exp:temp_extrapolation}) Additional Results}
\label{app:temp}

\cref{fig:temp_graph} depicts the RMSE and PPLP achieved in the temperature extrapolation experiment (\cref{exp:temp_extrapolation}).

\begin{figure}[t] \small
    \centering
    \includegraphics[width=.45\linewidth]{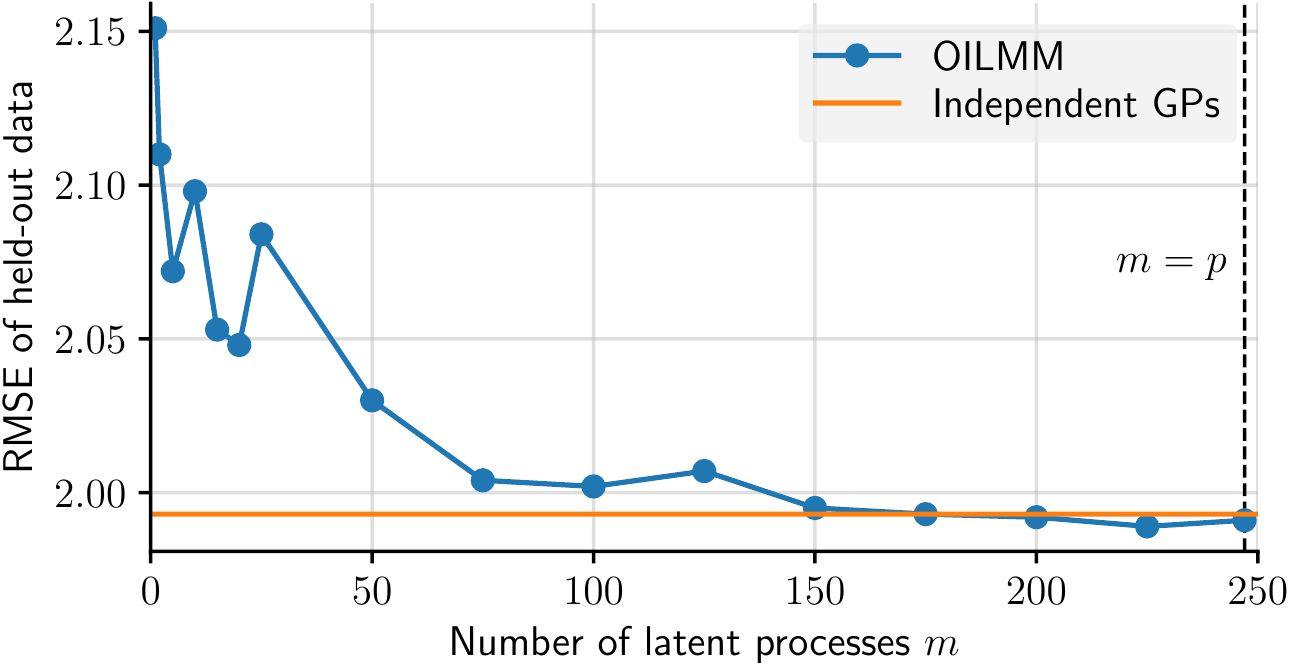}
    \includegraphics[width=.45\linewidth]{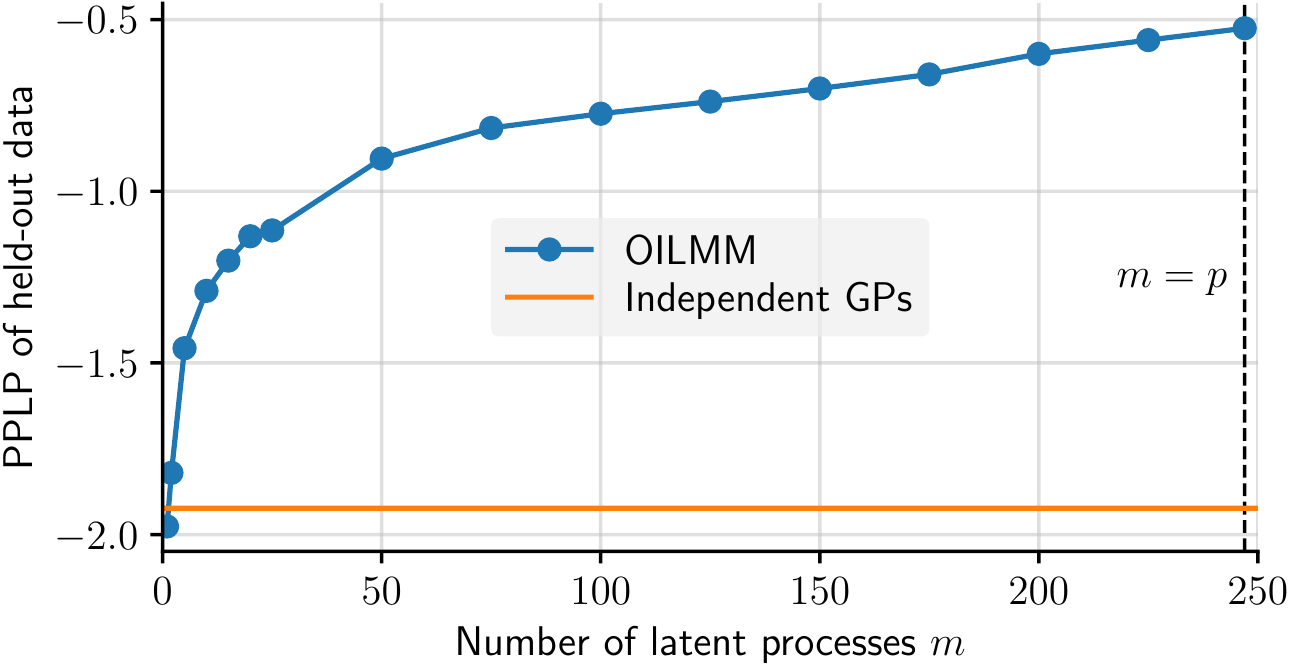}
    \vspace{-0.5em}
    \caption{RMSE and PPLP achieved in the temperature extrapolation experiment.}
    \label{fig:temp_graph}
\end{figure}

\section{Large-Scale Climate Model Calibration Experiment (Sec.\ \ref{exp:climate}) Additional Details and Analysis}
\label{app:climate}

We use the variational inducing point method by \textcite{Titsias:2009:Variational_Learning}, where the positions of the inducing points are initialised to one every two months.
All hyperparameters and the locations of the inducing points are optimised until convergence using \texttt{scipy}'s implementation of the L-BFGS-B algorithm \citep{Nocedal:2006:Numerical_Optimisation}, which takes about 4 hours on a MacBook Pro (2.7 GHz Intel Core i7 processor and 16 GB RAM).
The learned length scales were $23.3^\circ$ for latitude and $43.6^\circ$ for longitude.

\cref{fig:correlations} shows the empirical correlations and the correlations learned by the OILMM (derived from $K_s$).
In order to get insight into the learned correlations, we hierarchically cluster the models using farthest point linkage with $1-\abs{\text{corr.}}$ as the distance.
\cref{fig:dendrogram} shows the resulting dendrogram, in which models are grouped by their similarity.
For two models, the further to the right the branch connecting them is, the less similar the models are.

In \cref{fig:correlations,fig:dendrogram}, HadGEM2 is clearly singled out:
it is one of the simplest models, not including several processes that can be found in others, such as ocean \& sea-ice, terrestrial carbon cycle, stratosphere, and ocean biogeochemistry \parencite{bellouin2011hadgem2}.
Furthermore, if we inspect the names of the simulators in the groups in \cref{fig:dendrogram}, we observe that often simulators of the same family are grouped together.
We observe some interesting cases:
\begin{enumerate}[topsep=1pt,itemsep=2pt]
    \item[\em (i)]
        Although IPSL-CM5A-LR and IPSL-CM5A-MR are close, IPSL-CM5B-LR is grouped far apart.
        It turns out that IPSL-CM5A-LR and IPSL-CM5A-MR are different-resolution versions of the same model, while IPSL-CM5B-LR employs a different atmospheric model.\footnote{
            See \url{https://portal.enes.org/models/earthsystem-models/ipsl/ipslesm}.
        }
    \item[\em (ii)]
        ACCESS1.0 and ACCESS1.3 have a similar name, but differ greatly in their implementation: ACCESS1.0 is the basic model, while ACCESS1.3 is much more aspirational, including experimental atmospheric physics models and a particular land surface model \parencite{bi2013access}.
    \item[\em (iii)]
        The distance between BCC\_CSM1.1(m) and BCC\_CSM1.1 can be explained by the more realistic surface air temperature predictions obtained by the former \parencite{wu2014overview}, which is exactly the quantity we study.
\end{enumerate}

Finally, \cref{fig:latents} shows predictions for four latent processes ($i_s=1,2$ with $i_r=1,2$).
The first spatial eigenvector ($i_r=1$) is constant in space; combined with the strongest eigenvector of $K_s$ ($i_s=1$), we obtain a strong signal constituting seasonal temperature changes.

\end{document}